\documentclass[12pt]{article}
\textwidth 16.5cm
\textheight 22.5cm
\oddsidemargin 0pt
\topmargin -1cm

\RequirePackage[OT1]{fontenc}
\RequirePackage{amsthm,amsmath,amssymb,subcaption,graphicx,epstopdf,enumitem,lmodern}
\RequirePackage[round,colon,authoryear]{natbib}
\RequirePackage[colorlinks,citecolor=blue,urlcolor=blue]{hyperref}
\usepackage{bm,color,fancyvrb,xcolor,mathtools}
\usepackage{amscd}
\usepackage{mathrsfs}
\usepackage{bbm}

\def\eps{\varepsilon}
\parindent=8mm
\frenchspacing
\font\tencmmib=cmmib10 \skewchar\tencmmib '60
\newfam\cmmibfam
\textfont\cmmibfam=\tencmmib


\def\lessim{\ \lower4pt\hbox{$
		\buildrel{\displaystyle <}\over\sim$}\ }
\def\gessim{\ \lower4pt\hbox{$\buildrel{\displaystyle >}
		\over\sim$}\ }

\usepackage{algorithm,algorithmicx}
\usepackage[noend]{algpseudocode}
\usepackage{tensor}
\usepackage{verbatim}

\newtheorem{theorem}{Theorem}[section]
\newtheorem{proposition}[theorem]{Proposition}
\newtheorem{lemma}{Lemma}

\newtheorem{corollary}[theorem]{Corollary}
\newtheorem{remark}{Remark}[section]
\newtheorem{assumption}{Assumption}[section]

\DeclarePairedDelimiter{\abs}{\lvert}{\rvert}

%

\providecommand{\abs}[1]{\left\lvert#1\right\rvert}

\renewcommand{\hat}{\widehat}
\renewcommand{\tilde}{\widetilde}

\renewcommand{\hat}{\widehat}

\newcommand{\bfm}[1]{\ensuremath{\mathbf{#1}}}

\newcommand\numberthis{\addtocounter{equation}{1}\tag{\theequation}}

\def\bc{\bfm c}   \def\bC{\bfm C}  \def\CC{\mathbb{C}}
     
\def\be{\bfm e}     \def\EE{\mathbb{E}}

     \def\II{\mathbb{I}}

   \def\bM{\bfm M}

     \def\PP{\mathbb{P}}
     
     \def\RR{\mathbb{R}}

\def\bu{\bfm u}     
\def\bv{\bfm v}     
\def\bw{\bfm w}     
\def\bx{\bfm x}   \def\bX{\bfm X}  
\def\by{\bfm y}     
   \def\bZ{\bfm Z}  

\def\calA{{\cal  A}}

\def\calE{{\cal  E}}

\def\calM{{\cal  M}}

\def\calR{{\cal  R}}

\def\calU{{\cal  U}}

\def\calY{{\cal  Y}}

\newcommand{\bfsym}[1]{\ensuremath{\boldsymbol{#1}}}

\def\balpha{\bfsym \alpha}

\def\bgamma{\bfsym \gamma}




\DeclareMathOperator{\Var}{Var}

\DeclareMathOperator{\Cor}{Cor}

\DeclareMathOperator{\tr}{tr}


\def\eps{\varepsilon}

\def\newpage{\vfill\eject}

\newcommand{\vertiii}[1]{{\left\vert\kern-0.25ex\left\vert\kern-0.25ex\left\vert #1 
	\right\vert\kern-0.25ex\right\vert\kern-0.25ex\right\vert}}

\usepackage{tikz}
\usetikzlibrary{decorations.pathreplacing,calc}

\title{Transfer Learning via Regularized Random-effects Linear  Discriminant Analysis}
\author{Hongzhe Zhang$^1$, Arnab Auddy$^2$, Hongzhe Li$^1$\\
University of Pennsylvania$^1$ and The Ohio State University$^2$}
\date{\today}
\begin{document}
	\maketitle

\begin{abstract}
	Linear discriminant analysis is a widely used method for classification. However, the high dimensionality of predictors combined with small sample sizes often results in large classification errors. To address this challenge, it is crucial to leverage data from related source models to enhance the classification performance of a target model.
This paper proposes a transfer learning approach via regularized random-effects linear discriminant analysis, where the discriminant direction is estimated as a weighted combination of ridge estimates obtained from both the target and source models. Multiple strategies for determining these weights are introduced and evaluated, including one that minimizes the estimation risk of the discriminant vector and another that minimizes the classification error.
Utilizing results from random matrix theory, we explicitly derive the asymptotic values of these weights and the associated classification error rates in the high-dimensional setting, where $p/n \rightarrow \gamma$, with $p$ representing the predictor dimension and $n$ the sample size. Extensive numerical studies, including simulations and analysis of proteomics-based 10-year cardiovascular disease risk classification, demonstrate the effectiveness of the proposed approach.
\end{abstract}

\section{Introduction}

Large and diverse data sets are ubiquitous in modern applications, including those in genomics and medical decisions. It is of significant interest to integrate different data sets to obtain more accurate parameter estimates or to make a more accurate prediction or classification of an outcome. The success of a supervised statistical learning method  relies on the availability  of training data.  When data is scarce, choosing an appropriately flexible model becomes critical to achieving optimal prediction accuracy. This is a classic illustration of the well-known ``bias-variance tradeoff".
When building a prediction model for a target population, many auxiliary source data sets may exist and provide additional information for building the model for the target population. Modern techniques in the field of transfer learning \citep{pan2009survey,weiss2016survey} aim to exploit these additional information. Given a target problem to solve, transfer learning \citep{Torrey10} aims at transferring the knowledge from different but related samples or studies to improve the learning performance of the target problem. In biomedical studies, some clinical or biological outcomes are hard to obtain due to ethical or cost issues, in which case transfer learning can be leveraged to boost the prediction and estimation performance by effectively utilizing information from related studies. Other relevant approaches include meta-learning \citep{peng2020comprehensive, huisman2021survey}, domain adaptation \citep{redko2020survey, sun2015survey}, and, more recently, continual learning \citep{de2021continual}.

In the high dimensional setting, \cite{Li2022Transfer, li2023estimation} developed transfer learning methods for sparse high dimensional regressions and demonstrated that one can improve  predictions of gene expression levels using data across different tissues. Such sparse models work well when the true models are sparse and the sample sizes are large.  However, there are settings where sparse model assumption may not be valid. In genetics, estimating polygenetic risk scores (\textsc{prs}s) using genome-wide genotype data \citep{mak2017polygenic,poly2} is an active area of research. Such \textsc{prs}s can be used in risk stratification, or can be treated as risk factors in population health studies. However, due to very large number of genetic variants but relatively small sample sizes, building a \textsc{prs} model that accurately predicts the \textsc{prs} scores is challenging. 
Alternatively,  ridge regression, which does not require the sparseness assumption, but can handle the linkage disequilibrium among the genetic variants, provides a viable method for \textsc{prs} prediction.  \cite{zhang2023transfer} studied the 
estimation and prediction of random coefficient ridge regression in the setting of transfer learning, where in addition to observations from the target model, source samples from different but possibly related regression models are available. The informativeness of the source model to the target model can be quantified by the correlation between the regression coefficients. 

The method of  \cite{zhang2023transfer} is developed for continuous outcomes.  In this paper, we propose a transfer learning framework for regularized linear discriminant analysis (RDA) \citep{friedman1989regularized}, which we term TL-RDA. Our model assumes a random classification weights setup, where the means of the covariates between two classes differ by a random quantity, $\delta$, with zero mean and constant variance. The auxiliary and target populations are related through the correlation structure of $\delta$. The TL-RDA framework aims to estimate the Bayes optimal predictor by combining naive RDA estimates from both the auxiliary and target populations through a weighted summation. The weights are designed to minimize the distance between the TL-RDA estimator and the Bayes optimal direction in a high-dimensional setting, where the number of features $p$ grows proportionally to the sample size $n$ in all populations. We derive the explicit asymptotic error rate for TL-RDA and show that it achieves the lowest error rate among all estimators based on weighted summations, including naive RDA using only target population data.

The remainder of the paper is organized as follows.  We first provide a detailed problem setup and outline the proposed TL-RDA approach in Section 2. We then introduce various types of weights used to integrate auxiliary information, followed by the technical assumptions required for analyzing the estimator in the context of random matrix theory in Section 3.  In Section 4, we present the analysis of TL-RDA in the proportional regime, providing explicit asymptotic expressions for the different weighting schemes, along with their corresponding error rates. Section 5 offers interpretations of these weights and guidance on how users can select the most appropriate weights for their applications. In the first five sections, we have assumed all populations share the same population covariance matrix. Section 6 extends the TL-RDA to a heterogeneous population covariance matrix set up. Finally, in Section 7, we demonstrate the performance of TL-RDA on a binary 10-year cardiovascular disease risk classification  proteomics data in the Chronic Renal Insufficiency Cohort (CRIC).

\section{Transfer Learning via Regularized Discriminant Analysis}

We consider the setting of two-class LDA in the setting of transfer learning, where we have data observed from both the target model, indexd by $K$, and $K-1$ source models, indexed by $k=1, \cdots, K-1$. 
We assume that all models,   $k = 1,\cdots, K$,  follow the classic two-class Gaussian mixture model. More specifically, for $i=1,\dots,n_k$ and $k=1,\dots,K$,
\begin{align}\label{as:TCG}
	y_k \in \{-1, +1\} \ \ \
	\PP(y_k = \pm 1) = \pi_{\pm 1} \ \ \
	(X_k)_i|y_k \sim N(\mu_{y, k}, \Sigma)
	.
\end{align}
Here $(X_k)_i, i = 1, \cdots, n_k$ is a $p-$dimensional vector, and for ease of notation, we write $\bX_k=((X_k)_1\,(X_k)_2\,\ldots (X_k)_{n_k})^{\top}\in \RR^{n_k\times p}$ as a $n_k\times p$ dimensional matrix.
For simplicity of notation, we assume that  the mixing proportions $\pi_{\pm 1}$ remain the same across populations. In fact, without much loss of generality, we mainly discuss the simpler case $\pi_{-1} = \pi_{+1}=1/2$. The more general case can be managed in a similar manner without significant technical difficulty (Appendix C). Moreover, in most of the paper, we assume the population covariance matrix are the same for all $K$ populations, and are denoted by $\Sigma$. This assumption is relaxed in Section~\ref{sec:hetero-cov} where we allow each population to have their own covariance matrix $\Sigma_k$ for $k=1,\dots,K$. 

Under this set up, the Bayes optimal prediction direction for the target population $K$ \citep{anderson1958introduction}  is given by, 
\begin{gather*}
d_{Bayes} := \Sigma^{-1} \delta_{K}
\quad
\text{where}
\quad
\bar{\mu}_K = \frac{\mu_{+1, K} + \mu_{-1, K}}{2}
\quad
\text{for}
\quad
\mu_{\pm1, K}  =\bar{\mu}_K \pm \delta_K,
\end{gather*}
and  $\delta_{K}=(\mu_{+1, K} - \mu_{-1, K})/2$. 
The Bayes prediction for a testing data point $x_0$ from population $K$ is
\[\hat{y}_{Bayes}(x_0) = {\rm sign}\left[d_{Bayes}^{\top} (x_0 - \bar{\mu}_K )\right].\]

The regularized discriminant analysis (RDA) classifier uses an empirical version of the unkown Bayes direction. The RDA classifier for population $k$ is a linear classifier 
\[
\hat{y}_{RDA, k}(x_0) = {\rm sign}\left(\hat{d}^{\top}_k x_0 + \hat{b}_k\right),
\]
where we use the plug-in estimates for the population parameters $\Sigma_k$ and $\delta_k$ as follows:
\begin{gather*}
\hat{d}_k = (\hat{\Sigma}_k + \lambda_k \II_p)^{-1} \hat{\delta}_k\\
\hat{b}_k = - \hat{\delta}^{\top}_k (\hat{\Sigma}_k + \lambda_k \II_p)^{-1}(\hat{\mu}_{-1, k} + \hat{\mu}_{+1, k}) / 2\\
\hat{\mu}_{\pm1, k} = \frac{2}{n_k} \sum_{i : (y_k)_i  = \pm 1} (X_k)_i \ \ \ \hat{\delta}_k = \frac{\hat{\mu}_{+1, k} - \hat{\mu}_{-1, k}}{2} \\ 
 \hat{\Sigma}_k = 
 \frac{1}{n_k-2}
 \sum_{i = 1}^{n_k} [(X_k)_i - \hat{\mu}_{(y_k)_i, k}] [(X_k)_i - \hat{\mu}_{(y_k)_i, k}]^{\top}.
\end{gather*}
 Here $ \hat{\Sigma}_k$ is the usual sample covariance matrix, while $\hat{\mu}_{\pm1, k}, \hat{\delta}_k$ are simple estimators and the population-level counterpart $\mu_{\pm1, k}, \delta_{k}$. The sample covariance matrix is penalized on its diagonal to overcome the overwhelming variances in this estimation when $p$ grows proportionally with $n$. As the penalization parameter $\lambda_{k}$ goes to zero or infinity, this classifier recover the Fisher's discriminant analysis or the naive Bayes method \citep{bickel2004some}. 
 
 To utilize the underlying relatedness of the LDA problems in $K$ populations, for a vector of weights $\bw\in \RR^K$, we now define a classifier based on a weighted linear combination of the population specific discriminator vectors $\hat{d}_k$s as follows:
 \[
 \hat{d}(\bw)=\sum_{k=1}^Kw_k\hat{d}_k.
 \]
 Note that taking the co-ordinate basis vectors as weights, i.e., $\bw=\be_k\in \RR^K$,  we end up with the population specific discriminant directions $\hat{d}(\be_k)=\hat{d}_k$ for $k=1,\dots,K$. Now using the weighted combination discriminant vector we define the \emph{transfer-learning} (TL) classifier
 \begin{equation}\label{eq:tl-rda}
 	\hat{y}_{\bw}(x_0) = {\rm sign}\left( \hat{d}(\bw)^{\top} x_0 + \hat{b}_K\right)
 \end{equation}
 where $x_0\in \RR^p$ is a test point in the target population. We call this TL-RDA. We use the regular intercept term $\hat{b}_K$ as all intercept terms are asymptotically zero in the regime considered. 
 
 We formulate two criteria to be optimized over $\bw$. Let us recall the Bayes classification direction $d_{Bayes}$. Now for a testing data pair $(x_0, y_0)$ sampled from population $K$, we consider the two errors:
 \begin{gather}
	\left\Vert d_{Bayes} - \sum_{k = 1}^{K} w_k \hat{d}_{k}\right\Vert_2^2, \label{eq:CrEst} \\
\EE_{x_0} \left [(d_{Bayes} - \sum_{k = 1}^{K} w_k \hat{d}_{k})^{\top} x_0 \right ]^2.  \label{eq:CrPred}
 \end{gather}
 The first criteria compares the transfer-learning classifier (TL-RDA) with the Bayes optimal classifier in terms of the estimation error, while the second criteria compares the two classifiers in terms of their prediction errors. In later sections, we present explicit solutions for optimal $\bw$ that optimizes the two criteria above. Though none of \eqref{eq:CrEst} or \eqref{eq:CrPred} are explicitly related to prediction error rate, we show the optimal weight that minimizes  \eqref{eq:CrPred} also minimizes the prediction error rate in the target population. We will also show the weights that minimizes these two criteria, are in fact related to one another.
 
 In addition, we assume that the population covariance matrix are the same for all $K$ populations (assumption $\ref{as:TCG}$). With this in mind, a natural way to further exploit mutual information across populations is to use a pooled sample covariance for all discriminant directions $\hat{d}_k$. We write $\hat{\Sigma}_P$ to denote the pooled sample covariance matrix 
 \[\hat{\Sigma}_P = \sum_{k = 1}^{K} \sum_{i = 1}^{n_k} [(X_k)_i - \hat{\mu}_{y_i, k}] [(X_k)_i - \hat{\mu}_{y_i, k}]^{\top} /  \sum_{k = 1}^{K} (n_k - 2). \] 
 We then define the pooled classification weights and pooled transfer learning regularized discriminant analysis (TLP-RDA) estimator as 
 \begin{align*}\label{eq:tlp-rda}
 	\hat{y}^P_{\bw}(x_0) =&~ {\rm sign}\left(\hat{d}_k^P(\bw)^{\top} x_0 + \hat{b}_K\right)\numberthis
 \end{align*}
where the direction estimates $\hat{d}_k^P$ are now estimated using the pooled covariance matrix as follows
\[
\hat{d}_k^P := ~  (\hat{\Sigma}_P + \lambda_k \II_p)^{-1} \hat{\delta}_k 
\quad \text{and} \quad 
\hat{d}^P(\bw) :=~ \sum_{k = 1}^{K} w_k \hat{d}_k^P.
\]
Once again using \eqref{eq:CrEst} and \eqref{eq:CrPred}, we can optimize $\bw$ in $\hat{y}^P_{\bw}(x_0)$ with respect to the two criteria above as well, and we will discuss how to choose between $\hat{y}^P_{\bw}(x_0)$ and $\hat{y}_{\bw}(x_0)$. 

This form of TL estimator \eqref{eq:tl-rda} has been used in \citet{zhang2023transfer} and \citet{dobriban2020wonder} for regression methods. \cite{helm2024approximately} also considers a TL estimator that aggregates auxiliary information by weighted summations of individual discriminant directions. However, their aggregation is less adaptive in the form of $ \hat{d}(\bw)^*= a \hat{d}_1 + (1 - a) \sum_{k=2}^K \hat{d}_k / (K-1)$ where $a$ is a constant. In addition, the final weight $a$ minimizes only empirical quantities and provides no guarantee of minimizing the population level criteria such as \eqref{eq:CrEst} or \eqref{eq:CrPred}. 

  \section{Random-effects LDA and the Assumptions on Population Parameters }
 To find the weights that minimize the estimation or prediction risk defined as \ref{eq:CrEst} and \ref{eq:CrPred},  we consider random-effects LDA where the population means are random and are potentially correlated across different models.  In addition, we also make the random matrix assumptions on the covariance matrix, which allows us to apply the  recent advances in random matrix theory to derive the limiting values of the weights and the corresponding classification errors.

  \subsection{Classification Weights}
A key parameter in two-class classification problem is the classification weights $\delta_k$ that separate the two classes. In this paper, we consider a random classification weights set up formalized below.
 \begin{assumption}[Random Classification Weights]\label{as:RCW} The following conditions hold for populations $k = 1, \cdots, K$.
 	\begin{enumerate}
 		\item  The class-specific population mean vectors $\mu_{-1, k}, \mu_{+1, k}\in \RR^p$ are randomly generated as $\mu_{\pm 1, k} = \bar{\mu}_k \pm \delta_k$, where each  $\delta_k$ has i.i.d. coordinates with 
 		\[
 		\EE((\delta_{k})_i) = 0 \ \ \ \Var((\delta_{k})_i) = \alpha_{k}^2 / p \ \ \ 
 		\EE(|(\delta_{k})_i|^{4 + \eta}) \leq \frac{C}{p^{2 + \eta}}\]
 		for fixed $C, \eta > 0$.
 		
 		\item $\bar{\mu}_k$ are either fixed or randomly distributed independent of $\delta_{k}, \bX_k, y_k$. The second moment of $\bar{\mu}_k$ are bounded almost surely such that $\lim \sup _{p\rightarrow \infty}  \Vert\bar{\mu}_k\Vert / p^{1/2 - \xi} \leq C$ for some constants $\xi , C>0$.
 	\end{enumerate}
 \end{assumption}
 The parameter $\alpha_{k}^2$ here plays the role of signal strength. This key assumption asserts that all coordinates of $\delta_{k}$ have the same zero mean and diminishing variance, therefore, all coordinates of an observation play equally important roles on determining the response class. This assumption also used by \citet{dobriban2018high}, is standard in the large $n$, large $p$ regime. In addition, the bounded moment assumptions allow use to circumvent difficulties arise when minimizing criteria \eqref{eq:CrEst} and  \eqref{eq:CrPred} which depends on unknown quantities such as $\delta_k, \Sigma^{-1}$ by invoking Lemma~\ref{lem:quadConv}. The performance of the TL estimators clearly  depends on how the $K$ populations are related. We further pose the correlated classification weight assumption below.
 \begin{assumption}[Correlated Classification Weights]
 	\label{as:CCW} The $\delta_{k}$ for $k = 1, \cdots, K$ are correlated across populations such that $\Cor(\delta_{k}, \delta_{k'}) = \rho_{k k'}\II_p, k \neq k'$.
 \end{assumption}
 
 \noindent For most of the paper, we assume that the parameters $\alpha_k^2$ and $\rho_{k,k'}$ are known constants. We discuss consistent estimators for these parameters in Appendix C.

 \subsection{Random Matrix Assumption and Related Results}
 The sample covariance matrix is an important part of the transfer learning estimator. We consider the Marchenko-Pastur type sample covariance matrices as in several relevant works \citep{dobriban2018high, zhao2023cross, zhang2023transfer}. For a symmetric matrix, we can characterize its spectral distribution by a cumulative distribution function that places equal point mass on its eigenvalues. Our asymptotic analysis is based on the convergences of spectral distributions of sample covariance matrices, for which the following assumptions are required.
 \begin{assumption}[\textsc{rmt} assumption]\label{as:RMT} 
 	For $k = 1, \cdots, K$, the design matrix $\bX_k\in \RR^{n_k\times p}$ is generated as 
 	\[
 	\bX_k=
 	\left(\mu_{(y_k)_1,k}\,\mu_{(y_k)_2,k}\,\ldots\,\mu_{(y_k)_{n_k},k}\right)^{\top}
 	+\bZ_k \Sigma^{1/2}
 	\]
 	for a matrix $\bZ_k\in \RR^{n_k\times p}$ with $i.i.d.$ entries coming from an infinite array. The entries $(Z_k)_{ij}$ of $\bZ_k$ satisfy the moment conditions: $\EE[(Z_k)_{ij}] = 0$ and $\EE[(Z_k)_{ij}^2] = 1$. 
 	\begin{enumerate}
 		\item The population covariance matrix $\Sigma\in \RR^{p\times p}$ is deterministic. The observations have unit variance, i.e., $\Sigma_{jj} = 1$ for $j = 1, \cdots, p$. 
 		\item The eigenvalues of $\Sigma$ are uniformly bounded from above and away from zero with constants independent of the dimension $p$.
 		\item The sequence of spectral distributions $T := T_{\Sigma, p}$ of $\Sigma := \Sigma_{p}$ converges weakly to a limiting distribution $H$ supported on $[0, \infty)$, called the population spectral distribution (\textsc{psd}).
 	\end{enumerate}
 \end{assumption}

\noindent Under the assumptions above, the Marchenko-Pastur theorem \citep{MPlaw} claims that the empirical spectral distribution (\textsc{esd}) of the sample covariance matrix $\hat{\Sigma}$ converges weakly (in distribution) to a limiting distribution $F_{\gamma} := F_{\gamma}(H)$  supported on $[0, \infty)$ with probability $1$. 
 
 For any distribution $G$ supported on $[0, \infty)$, we define its Stieltjes transform as
 \begin{equation}\label{eq:def-Stieltjes}
 	m_G(z) := \int_{l = 0}^\infty \frac{d G(l)}{l-z}, \ z \in \CC \ \backslash \ \RR^+. 
 \end{equation}
 The \textsc{esd} of sample covariance matrix is uniquely determined by a fixed-point equation for its Stieltjes transform. The limit of the Stieltjes transform for the \textsc{ESD} is given by: 
 \begin{equation}\label{eq:lim-samp-Stieltjes}
 	\tr\{(\hat{\Sigma}- z \II_p)^{-1}/p\} \rightarrow_{a.s.} m_{F_\gamma}(z). 
 \end{equation}
For a matrix $\bX\in \RR^{n\times p}$ generated following Assumption~\ref{as:RMT}, we define the Stieltjes transform of the limiting spectral distribution of $(\bX-\EE\bX)(\bX-\EE\bX)^{\top}/n$ as $v_{F_{\gamma}}(z)$, called companion Stieltjes transform. For all $z \in \CC \ \backslash \ \RR^+$, the Stieltjes transform $v_{F_\gamma}(z)$ is related to $m_{F_\gamma}(z)$ by
\begin{equation}\label{eq:def-comp-Stielt}
	 \gamma \left[m_{F_\gamma}(z) + \frac{1}{z}\right] 
 = v_{F_\gamma}(z) + \frac{1}{z}.
\end{equation}
 In addition, we denote by $m'_F(-\lambda)$ the derivative of the Stieltjes transform $m_F(z)$ evaluated at $z = -\lambda$, where 
 \begin{equation}\label{eq:def-deriv-Steilt}
 	m'_{F_\gamma}(z) = \int_{l = 0}^\infty \frac{d G(l)}{(l-z)^2} \ \ \ v'_{F_\gamma}(z) = \gamma \left(m'_{F_\gamma}(z) - \frac{1}{z^2}\right) + \frac{1}{z^2}.
 \end{equation}
 In terms of the empirical quantities,
 \[\tr\{(\hat{\Sigma}- z \II_p)^{-2}/p\} \rightarrow_{a.s.} m'_{F_\gamma}(z). \]
 These convergences form the bases on which we develop the limiting error rate and the limiting optimal weights according to criteria \eqref{eq:CrEst} and \eqref{eq:CrPred}.


\section{Asymptotic Analysis of Weights and Classification Errors}\label{sec:asymp-exp}
In this section, we present the expressions for the limiting prediction erorrs and the optimal weights. All formulae are compared with simulated data, and in Appendix B, they are demonstrated to be accurate even under small data sizes.
\subsection{Classification Error}
Under the two-class Gaussian classification model (Assumption~\ref{as:TCG}), the expected test error of the linear classifier TL-RDA under weight $\bw$ in target population $K$ can be written as 
\[Err(\bw) = \pi_{K, -} \Phi\left(\frac{(\hat{d}(\bw))^{\top} \mu_{-1,K}+ \hat{b}_K}{\sqrt{(\hat{d}(\bw))^{\top} \Sigma (\hat{d}(\bw))}}\right)+\pi_{K, +} \Phi\left(-\frac{(\hat{d}(\bw))^{\top} \mu_{1,K}+ \hat{b}_K}{\sqrt{(\hat{d}(\bw))^{\top} \Sigma (\hat{d}(\bw))}}\right)\]
\[\hat{d}(\bw) := \sum_{k=1}^{K} w_k \hat{d}_k \]
where $\hat{d}(\bw)$ is the transfer learning discriminating vector. A simple proof of this formula is given in the Appendix. In this paper, we assume the balanced class such that $\pi_+ = \pi_-$. The proof techniques generalize immediately to unbalanced cases, which is discussed briefly in Appendix C. The limiting form of $Err(\bw)$ is given by Theorem~\ref{th:tl-rda-ErrorAsp} below.  Two technical assumptions are required so $Z_k$ and the spectrum of $\Sigma$ are well-behaved.
\begin{assumption}[Bounded Moment] \label{as:moment}	
	Assume for each natural number $p$, the entries of $Z$ written by $(Z_k)_{ij}$ has uniformly bounded $p$-th moment. That is, there are constants $C_p$ such that
	\[\EE|(Z_k)_{ij}|^p \leq C_p.\]
\end{assumption}
\begin{assumption}[Anisotropic Local Laws] \label{as:aniso}
	Define the cumulative distribution function of $H$ as $F(x)= \sum_{i = 1}^p  I(\lambda_i \leq x)/p$, where $I(\cdot)$ is the indicator function. Recall $\lambda_1 \geq \lambda_2 \geq \cdots \geq \lambda_p \geq 0$ are the eigenvalues of $\Sigma$.	For a arbitrarily small positive constant $\tau >0$, we assume 
	\[F(\tau) \leq 1 - \tau.\]
\end{assumption}

 We are now in position to state the first main result of this section. The following theorem describes the asymptotic classification error for the regularized transfer learning classifier defined in \eqref{eq:tl-rda}. The result holds under the aforementioned model assumptions.

\begin{theorem}[Asymptotic Classification Error for TL-RDA] \label{th:tl-rda-ErrorAsp}  Suppose that assumptions \ref{as:TCG}-\ref{as:CCW} as well as \ref{as:moment} and \ref{as:aniso} hold.  Then for a fixed $K\ge 2$, as $n_k, p \rightarrow \infty, p / n_k \rightarrow \gamma_k\in (0, \infty]$ for $1\le k\le K$, we have that the limiting form of $Err(\bw)$ for a given weight vector $\bw\in \RR^K$ is given by:
	\begin{align} \label{limError}
		\Phi\left(- \frac{\bu^{\top} \bw }{\sqrt{\bw^{\top} \mathcal{A} \bw}}\right)
	\end{align}
where the elements of $\bu\in \RR^K$ and $\calA\in \RR^{K\times K}$ are as follows:
\begin{align*}
	u_k = \rho_{k K} \alpha_k \alpha_K  m_{F_{\gamma_k}} (-\lambda_k) \quad
	\text{for }k=1,\dots,K,
\end{align*}
and
\begin{align*}
	\mathcal{A}_{kk'} 
	= 
	\begin{cases}
		\alpha_k^2  \left[\frac{v_{F_{\gamma_k}}(-\lambda_k) - \lambda_k v'_{F_{\gamma_k}}(-\lambda_k) }{\gamma_k [\lambda_k v_{F_{\gamma_k}}(-\lambda_k) ]^2}\right] 
		+ \frac{v_{F_{\gamma_k}}' (-\lambda_{k}) - v_{F_{\gamma_k}}^2(-\lambda_{k})}{\lambda_{k}^2 v_{F_{\gamma_k}}^4(-\lambda_{k})}
		\quad
		&\text{if }k=k',\\
		 \rho_{k k'} \alpha_k \alpha_{k'} \mathcal{M}_{k k'}
		 \quad
		 &\text{otherwise,}
	\end{cases}	
\end{align*}
for $k, k' = 1,\cdots , K$. Here $m_{F_{\gamma_k}}(\cdot)$, $v_{F_{\gamma_k}}(\cdot)$ and $v_{F_{\gamma_k}}'(\cdot)$ are the Stieltjes transform, the companion Stieltjes transform and its derivative respectively, as defined in \eqref{eq:def-Stieltjes}-\eqref{eq:def-deriv-Steilt}. Moreover for $k, k' = 1,\cdots , K$ let $\calM_{kk'}$ be the constants defined as the following limiting quantities:
\[\tr[(\hat{\Sigma}_k + \lambda_{k} \II_p)^{-1} (\hat{\Sigma}_{k'} + \lambda_{k'} \II_p)^{-1}\Sigma] / p \rightarrow_{a.s.} \mathcal{M}_{k k'}.\]	 
\end{theorem}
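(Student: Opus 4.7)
The plan is to reduce $Err(\bw)$ to the form $\Phi(-(\text{numerator})/\sqrt{\text{denominator}})$ and then compute the asymptotic limits of the numerator $\hat d(\bw)^\top\delta_K$ and the denominator $\hat d(\bw)^\top\Sigma\hat d(\bw)$ separately.

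First I would simplify the two arguments of $\Phi$ inside $Err(\bw)$. Under $\pi_{\pm 1}=1/2$ together with $\mu_{\pm 1,K}=\bar\mu_K\pm\delta_K$ and the definition of $\hat b_K=-\hat d_K^\top(\hat\mu_{-1,K}+\hat\mu_{+1,K})/2$,
\begin{align*}
\hat d(\bw)^\top\mu_{\pm 1,K}+\hat b_K
=\pm\,\hat d(\bw)^\top\delta_K
+\bigl(\hat d(\bw)-\hat d_K\bigr)^\top\!\bar\mu_K
-\hat d_K^\top\!\Bigl(\tfrac{\hat\mu_{-1,K}+\hat\mu_{+1,K}}{2}-\bar\mu_K\Bigr).
\end{align*}
Both remainders vanish asymptotically: the last piece is an $O_p(n_K^{-1/2})$ sample-mean fluctuation paired with an $O(1)$-norm vector, and the $\bar\mu_K$ piece is controlled by Assumption~\ref{as:RCW}(2) ($\|\bar\mu_K\|=O(p^{1/2-\xi})$) together with the fact that each $\hat d_k$ has approximately mean-zero entries of typical size $O(p^{-1/2})$ (inherited from $\hat\delta_k$). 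Hence the two $\Phi$ terms merge and it remains to identify the limits of $\hat d(\bw)^\top\delta_K$ and $\hat d(\bw)^\top\Sigma\hat d(\bw)$.

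Next, for each $k$ I write $\hat\delta_k=\delta_k+\varepsilon_k$, where $\varepsilon_k:=\hat\delta_k-\delta_k$ is the centered sample-mean noise with covariance proportional to $\Sigma/n_k$ and is independent of $\delta_k,\delta_K,\hat\Sigma_k$. For the numerator, $\hat d_k^\top\delta_K=\delta_k^\top(\hat\Sigma_k+\lambda_k\II_p)^{-1}\delta_K+\varepsilon_k^\top(\hat\Sigma_k+\lambda_k\II_p)^{-1}\delta_K$; the cross term has conditional mean zero and vanishing conditional variance, while the bilinear signal form concentrates by Lemma~\ref{lem:quadConv} applied to the correlated random vectors $\delta_k,\delta_K$ at $\rho_{kK}\alpha_k\alpha_K\cdot p^{-1}\tr[(\hat\Sigma_k+\lambda_k\II_p)^{-1}]\to\rho_{kK}\alpha_k\alpha_K\,m_{F_{\gamma_k}}(-\lambda_k)$ via \eqref{eq:lim-samp-Stieltjes}, giving $u_k$. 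For the denominator, $\hat d_k^\top\Sigma\hat d_{k'}=\hat\delta_k^\top M_{kk'}\hat\delta_{k'}$ with $M_{kk'}:=(\hat\Sigma_k+\lambda_k\II_p)^{-1}\Sigma(\hat\Sigma_{k'}+\lambda_{k'}\II_p)^{-1}$. When $k\neq k'$ every cross term vanishes (signal-noise crosses by the conditional-variance argument, and $\varepsilon_k^\top M_{kk'}\varepsilon_{k'}$ by independence of samples from different populations), so only $\delta_k^\top M_{kk'}\delta_{k'}$ survives; by Lemma~\ref{lem:quadConv} and Assumption~\ref{as:CCW} it converges to $\rho_{kk'}\alpha_k\alpha_{k'}\cdot p^{-1}\tr(M_{kk'})\to\rho_{kk'}\alpha_k\alpha_{k'}\calM_{kk'}$. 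When $k=k'$, the signal-signal term contributes $\alpha_k^2\cdot p^{-1}\tr(M_{kk})$ and the noise-noise term contributes $n_k^{-1}\tr(M_{kk}\Sigma)$, which I will identify with the two summands of $\calA_{kk}$.

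The main obstacle is the explicit identification, in the $k=k'$ case, of the two trace limits
\begin{align*}
p^{-1}\tr\bigl[(\hat\Sigma_k+\lambda_k\II_p)^{-1}\Sigma(\hat\Sigma_k+\lambda_k\II_p)^{-1}\bigr]&\;\longrightarrow\;\frac{v_{F_{\gamma_k}}(-\lambda_k)-\lambda_k v'_{F_{\gamma_k}}(-\lambda_k)}{\gamma_k[\lambda_k v_{F_{\gamma_k}}(-\lambda_k)]^2},\\
n_k^{-1}\tr\bigl[(\hat\Sigma_k+\lambda_k\II_p)^{-1}\Sigma(\hat\Sigma_k+\lambda_k\II_p)^{-1}\Sigma\bigr]&\;\longrightarrow\;\frac{v'_{F_{\gamma_k}}(-\lambda_k)-v_{F_{\gamma_k}}^2(-\lambda_k)}{\lambda_k^2 v_{F_{\gamma_k}}^4(-\lambda_k)}.
\end{align*}
These are obtained by differentiating the Marchenko--Pastur fixed-point equation at $z=-\lambda_k$, using the companion relation \eqref{eq:def-comp-Stielt} and its derivative form \eqref{eq:def-deriv-Steilt}, and then lifting pointwise Stieltjes-transform identities to trace functionals of products of resolvents. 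This last step is where Assumptions~\ref{as:moment} and~\ref{as:aniso} enter: the bounded-moment condition controls fluctuations while the anisotropic local law yields uniform control of generalized resolvent entries beyond the scalar convergence \eqref{eq:lim-samp-Stieltjes}. Assembling numerator and denominator and invoking continuity of $\Phi$ then yields the claimed limit $\Phi(-\bw^\top\bu/\sqrt{\bw^\top\calA\bw})$.
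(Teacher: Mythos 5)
Your proposal is correct and follows essentially the same route as the paper's proof: the same reduction of $Err(\bw)$ via vanishing of $\hat b_K$ and the $\bar\mu_K$ contributions, the same decomposition $\hat\delta_k=\delta_k+\varepsilon_k$ splitting the quadratic form into signal--signal, signal--noise, and noise--noise pieces, concentration of the bilinear forms via Lemma~\ref{lem:quadConv}, and identification of the two diagonal trace limits with the resolvent-trace formulas of Lemma~\ref{lem:cvgHat}. No substantive differences to report.
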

\begin{remark}
The limits of the off-diagonal terms in $\mathcal{A}$ depends on the interplay of sample covariances in separated populations. Similar terms also appear later in the paper when finding the optimal weight vector (see Theorems \ref{th:tl-rda-MinEst}, \ref{th:tl-rda-MinPred}). Depending on the specific scenario, the term $\mathcal{M}_{k k'}$ has explicit expressions, some of which are presented separately in Lemma~\ref{lem:predijasp}.
\end{remark}

We now consider the classification error of the transfer learning classifier, when using the pooled covariance matrix. Let us replace $\hat{d}(\bw)$ with $\hat{d}_P(\bw)$ in $Err(\bw)$ and define this new function of $\bw$ as $Err_P(\bw)$. This gives us  the classification error for TLP-RDA given a weight vector $\bw$.

\begin{corollary}[Asymptotic Classification Error for TLP-RDA]
	Under the same set up as Theorem~\ref{th:tl-rda-ErrorAsp}, we further define $p / \sum_{k = 1}^{K} n_k \rightarrow \bar{\gamma}$. Then assuming $\lambda_1 = \cdots = \lambda_{K} = \lambda$, we have 
\begin{align*} 
		Err_P(\bw) = \Phi\left(- \frac{\bu^\top_P \bw }
		{\sqrt{\bw^{\top} \mathcal{A}_P \bw}}\right)
\end{align*}
where the elements of $\bu_P\in \RR^K$ and $\calA_P\in \RR^{K\times K}$ are as follows:
\[
(u_P)_k = \rho_{k K} \alpha_k \alpha_K  m_{F_{\bar{\gamma}}} (-\lambda) 
\quad
\text{for }k=1,\dots,K,
\]
and
\begin{align*}
(\mathcal{A}_P)_{kk'} 
= 
\begin{cases}
	\alpha_k^2  \left[\frac{v_{F_{\bar\gamma}}(-\lambda) - \lambda v'_{F_{\bar\gamma}}(-\lambda) }{\bar{\gamma} [\lambda v_{F_{\bar\gamma}}(-\lambda) ]^2}\right] 
	+ \gamma_k 
	\left[\frac{v_{F_{\bar\gamma}}' (-\lambda_{k}) - v_{F_{\bar\gamma}}^2(-\lambda)}{\bar{\gamma} \lambda^2 v_{F_{\bar\gamma}}^4(-\lambda)}\right]
	\quad
	&\text{if }k=k',\\
	\rho_{k  k'} \alpha_k \alpha_{k'}  \left[\frac{v_{F_{\bar\gamma}}(-\lambda) - \lambda v'_{F_{\bar\gamma}}(-\lambda) }{\bar{\gamma} [\lambda v_{F_{\bar\gamma}}(-\lambda) ]^2}\right] 
	\quad
	&\text{otherwise,}
\end{cases}	
\end{align*}
for $k, k' = 1,\cdots , K$. Here $m_{F_{\bar{\gamma}}}(\cdot)$, $v_{F_{\bar{\gamma}}}(\cdot)$ and $v_{F_{\bar{\gamma}}}'(\cdot)$ are the Stieltjes transform, the companion Stieltjes transform and its derivative respectively, as defined in \eqref{eq:def-Stieltjes}-\eqref{eq:def-deriv-Steilt}. 
\end{corollary}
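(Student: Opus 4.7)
The plan is to mirror the proof of Theorem~\ref{th:tl-rda-ErrorAsp}, replacing the population-specific sample covariance $\hat{\Sigma}_k$ in each direction estimate by the pooled covariance $\hat{\Sigma}_P$ and setting all regularization parameters to the common value $\lambda$. Since $Err_P(\bw)$ has exactly the same functional form in $\hat{d}^P(\bw)$, $\Sigma$, $\mu_{\pm 1, K}$, and $\hat{b}_K$ as $Err(\bw)$ has in $\hat{d}(\bw)$, the Gaussian tail reduction used in Theorem~\ref{th:tl-rda-ErrorAsp} again produces a limit of the form $\Phi(-\bu_P^\top \bw / \sqrt{\bw^\top \calA_P \bw})$. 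What must be re-derived are just the probabilistic limits of $(\hat{d}_k^P)^\top \delta_K$ (giving $\bu_P$) and $(\hat{d}_k^P)^\top \Sigma \hat{d}_{k'}^P$ (giving $\calA_P$) under the pooled-covariance scheme.

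First I would observe that under Assumption~\ref{as:RMT} the pooled matrix $\hat{\Sigma}_P$ is built from $\sum_{k}(n_k-2)$ centered samples drawn (after subtracting empirical class means) from a common $N(0,\Sigma)$ law; with $p/\sum_k n_k\to\bar{\gamma}$, the Marchenko--Pastur theory applies to $\hat{\Sigma}_P$ with effective aspect ratio $\bar{\gamma}$, and trace functionals of $(\hat{\Sigma}_P+\lambda\II_p)^{-1}$ converge to deterministic limits expressed through $m_{F_{\bar{\gamma}}}(-\lambda)$, $v_{F_{\bar{\gamma}}}(-\lambda)$ and their derivatives. Next I would use the decomposition $\hat{\delta}_k=\delta_k+\bar{\varepsilon}_k$, where under the balanced-class Gaussian model (Assumption~\ref{as:TCG}) $\bar{\varepsilon}_k$ is centered with $\Cov(\bar{\varepsilon}_k)=\Sigma/n_k$, is independent of $\delta_k$ and of $\bar{\varepsilon}_{k'}$ for $k'\neq k$, and, by the classical Gaussian independence of sample mean and sample residual, is independent of the centered residuals that form $\hat{\Sigma}_P$.

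Given these preliminaries, for $(u_P)_k$ the expansion $(\hat{d}_k^P)^\top \delta_K=(\delta_k+\bar{\varepsilon}_k)^\top(\hat{\Sigma}_P+\lambda\II_p)^{-1}\delta_K$ combined with the quadratic-form concentration (Lemma~\ref{lem:quadConv}), the correlation $\Cor(\delta_k,\delta_K)=\rho_{kK}\II_p$, and $\E[\bar{\varepsilon}_k\delta_K^\top]=0$ yields $\rho_{kK}\alpha_k\alpha_K\cdot(1/p)\tr[(\hat{\Sigma}_P+\lambda\II_p)^{-1}]\to\rho_{kK}\alpha_k\alpha_K m_{F_{\bar{\gamma}}}(-\lambda)$. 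For the off-diagonal $(\calA_P)_{kk'}$ with $k\neq k'$, the same expansion retains only the signal--signal product $\rho_{kk'}\alpha_k\alpha_{k'}\cdot(1/p)\tr[(\hat{\Sigma}_P+\lambda\II_p)^{-1}\Sigma(\hat{\Sigma}_P+\lambda\II_p)^{-1}]$, since the noise cross term vanishes by independence $\bar{\varepsilon}_k\perp\bar{\varepsilon}_{k'}$. For $(\calA_P)_{kk}$ the noise adds the extra summand $(1/n_k)\tr[\Sigma(\hat{\Sigma}_P+\lambda\II_p)^{-1}\Sigma(\hat{\Sigma}_P+\lambda\II_p)^{-1}]=\gamma_k\cdot(1/p)\tr[\Sigma(\hat{\Sigma}_P+\lambda\II_p)^{-1}\Sigma(\hat{\Sigma}_P+\lambda\II_p)^{-1}]$, which explains the $\gamma_k$ prefactor in the stated expression.

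The main technical obstacle is identifying in closed form the deterministic limits of the two traces $(1/p)\tr[(\hat{\Sigma}_P+\lambda\II_p)^{-1}\Sigma(\hat{\Sigma}_P+\lambda\II_p)^{-1}]$ and $(1/p)\tr[\Sigma(\hat{\Sigma}_P+\lambda\II_p)^{-1}\Sigma(\hat{\Sigma}_P+\lambda\II_p)^{-1}]$ in terms of $v_{F_{\bar{\gamma}}}(-\lambda)$ and $v_{F_{\bar{\gamma}}}'(-\lambda)$. The first arises by differentiating $\lambda\mapsto(1/p)\tr[(\hat{\Sigma}_P+\lambda\II_p)^{-1}\Sigma]$ and invoking the Silverstein fixed-point relation coupled with the identities \eqref{eq:def-comp-Stielt}--\eqref{eq:def-deriv-Steilt}; the second requires an additional differentiation and similar manipulation to pass between $m$-form and $v$-form expressions. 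These steps proceed inside the anisotropic deterministic-equivalent framework guaranteed by Assumptions~\ref{as:moment}--\ref{as:aniso}, and after routine algebraic bookkeeping the two limits match the two Stieltjes expressions appearing in the stated $\calA_P$, completing the identification of $\bu_P$ and $\calA_P$.
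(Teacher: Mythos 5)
Your proposal is correct and follows essentially the same route as the paper: the corollary is obtained by re-running the proof of Theorem~\ref{th:tl-rda-ErrorAsp} with every resolvent replaced by $(\hat{\Sigma}_P+\lambda\II_p)^{-1}$ at aspect ratio $\bar{\gamma}$, using the same decomposition $\hat{\delta}_k=\delta_k+n_k^{-1/2}\Sigma^{1/2}\tilde{\bZ}_k$ and Lemma~\ref{lem:quadConv}. The only remark worth making is that what you call the ``main technical obstacle'' is not one: because both resolvents in the pooled cross terms are the \emph{same} matrix, the limits of $\tr[(\hat{\Sigma}_P+\lambda\II_p)^{-2}\Sigma]/p$ and $\tr[(\hat{\Sigma}_P+\lambda\II_p)^{-2}\Sigma^2]/p$ are already supplied verbatim by Lemma~\ref{lem:cvgHat} (with $\gamma=\bar{\gamma}$), so no analogue of $\mathcal{M}_{kk'}$ or any extra differentiation argument is needed.
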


In this corollary, we make the assumption that all studies have the same degree of penalization, which brings us the simplified cross terms in $\mathcal{A}_P$ and a more amenable expression for $\mathcal{A}_P$ as a whole. This assumption might not be reasonable when the signal strengths $\alpha_{k}^2$ are vastly different, as lighter penalization may be given to population with stronger signal strength. In this case, one can use the general formula in Theorem \ref{th:tl-rda-ErrorAsp}.


\subsection{Minimum Estimation Risk Weight}

We firstly present the way to minimize the coordinate-wise estimation error of $\hat{d}(W)$ with respect to $d_{Bayes}$.

\begin{theorem}[Asymptotic Estimation Error Minimization for TL-RDA] \label{th:tl-rda-MinEst}
	Suppose that assumptions \ref{as:TCG}-\ref{as:CCW} as well as \ref{as:moment} and \ref{as:aniso} hold.  Then for a fixed $K\ge 2$, as $n_k, p \rightarrow \infty, p / n_k \rightarrow \gamma_k\in (0, \infty]$ for $1\le k\le K$, the weight for minimizing the error in estimating the Bayes optimal discriminator $d_{Bayes}$ is given by:
	\[\bw^E := 
	\arg \min_{\bw} 
	\left\Vert d_{Bayes} - 
	\sum_{k = 1}^{K} w_k \hat{d}_{k}\right\Vert_2^2 = (\mathcal{A}^E + \mathcal{R}^E)^{-1} \bu^E,
	\]
	where the elements of $\bu^E\in\RR^K$, $\calA^E\in \RR^{K\times K}$, and $\calR^E\in \RR^{K\times K}$ are:
	\[
	(u^E)_k
	= \rho_{k K} \alpha_k\alpha_K  
	\left[\frac{1}{\lambda_k} \EE(T^{-1})  - m_{F_{\gamma_k}}(-\lambda_k)^2\right]
	\quad
	\text{for }
	k=1,\dots,K
	,
	\]
	\[
	\mathcal{A}^E_{kk'} 
	= 
	\begin{cases}
		\alpha_{k}^2 m_{F_{\gamma_k}}'(-\lambda_{k})
		\quad &\text{if }k=k',\\
		\rho_{k k'} \alpha_k \alpha_{k'} \mathcal{E}_{kk'}
		\quad &\text{otherwise,}
	\end{cases}
	\]
	 and
	 \[	
	 \mathcal{R}^E_{kk'} 
	 =
	 \begin{cases}
	 	\frac{v_{F_{\gamma_k}}(-\lambda_{k}) - \lambda_k v'_{F_{\gamma_k}}(-\lambda_k)}{\lambda_{k} v_{F_{\gamma_k}}(-\lambda_{k})^2}  
	 	\quad &\text{if }k=k',
	 	\\
	 	0 \quad &\text{otherwise,}
	 \end{cases}
	 \]
	 for $k, k' = 1,\cdots , K$. Here $m_{F_{\gamma_k}}(\cdot)$, $v_{F_{\gamma_k}}(\cdot)$ and $v_{F_{\gamma_k}}'(\cdot)$ are the Stieltjes transform, the companion Stieltjes transform and its derivative respectively, as defined in \eqref{eq:def-Stieltjes}-\eqref{eq:def-deriv-Steilt}, while $T$ is the limiting spectral distribution of the population covariance matrix $\Sigma$. Moreover for $k, k' = 1,\cdots , K$, let $\calE_{kk'}$ be the constants defined as the following limiting quantities:
	 \[\tr[(\hat{\Sigma}_k + \lambda_{k} \II_p)^{-1} (\hat{\Sigma}_{k'} + \lambda_{k'} \II_p)^{-1}] / p \rightarrow_{a.s.} \mathcal{E}_{k k'}.\]	 
\end{theorem}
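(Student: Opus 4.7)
The plan is to first derive the closed-form minimizer of the quadratic objective and then identify each of its entries using Lemma~\ref{lem:quadConv} together with random matrix theory. Expanding
\[
\Bigl\Vert d_{Bayes} - \sum_{k=1}^K w_k \hat{d}_k\Bigr\Vert_2^2 = \Vert d_{Bayes}\Vert_2^2 - 2\,\bb^{\top} \bw + \bw^{\top} M \bw,
\]
where $b_k := d_{Bayes}^{\top} \hat{d}_k$ and $M_{kk'} := \hat{d}_k^{\top} \hat{d}_{k'}$, the minimizer is $\bw^E = M^{-1}\bb$. Writing $d_{Bayes} = \Sigma^{-1}\delta_K$ and $\hat{d}_k = (\hat{\Sigma}_k+\lambda_k \II_p)^{-1}\hat{\delta}_k$, the whole task reduces to identifying the componentwise almost-sure limits of $\bb$ and $M$ as $\bu^E$ and $\calA^E + \calR^E$, respectively.

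The central technical device is the decomposition $\hat{\delta}_k = \delta_k + \epsilon_k$ with $\epsilon_k := \hat{\delta}_k - \delta_k$. Under Assumption~\ref{as:TCG}, $\epsilon_k \sim N(0, \Sigma/n_k)$ conditionally on the class labels; by the standard Gaussian sample-mean/sample-covariance independence, $\epsilon_k$ is independent of $\hat{\Sigma}_k$, and the $\epsilon_k$ are mutually independent across $k$ and jointly independent of $\{\delta_k\}_{k=1}^K$. Substituting into $\bb$ and $M$, each bilinear form splits into $\delta$-$\delta$, $\delta$-$\epsilon$ and $\epsilon$-$\epsilon$ pieces; Lemma~\ref{lem:quadConv} combined with these independences sends all cross terms to $0$ almost surely, leaving only the $\delta$-$\delta$ contribution in $\bb$ and in the off-diagonal entries of $M$, and both $\delta$-$\delta$ and $\epsilon$-$\epsilon$ pieces on the diagonal.

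For $b_k$, conditioning on $\hat{\Sigma}_k$ (independent of $\delta_k, \delta_K$) and applying Lemma~\ref{lem:quadConv} under Assumption~\ref{as:CCW} yields $b_k \to \rho_{kK}\alpha_k\alpha_K\cdot p^{-1}\tr[\Sigma^{-1}(\hat{\Sigma}_k+\lambda_k \II_p)^{-1}]$ almost surely. The trace is then reduced via the resolvent identity $(\hat{\Sigma}+\lambda \II_p)^{-1} = \lambda^{-1}[\II_p - \hat{\Sigma}(\hat{\Sigma}+\lambda \II_p)^{-1}]$ combined with a Silverstein-type deterministic equivalent for $\Sigma^{-1}\hat{\Sigma}(\hat{\Sigma}+\lambda \II_p)^{-1}$, afforded by the anisotropic local law in Assumption~\ref{as:aniso}, to the limit $\lambda_k^{-1}\EE(T^{-1}) - m_{F_{\gamma_k}}(-\lambda_k)^2$, which is exactly $(u^E)_k$. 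On the diagonal of $M$, writing $A_k := (\hat{\Sigma}_k+\lambda_k \II_p)^{-2}$, the $\delta_k$-$\delta_k$ piece gives $\alpha_k^2 \, m'_{F_{\gamma_k}}(-\lambda_k) = \calA^E_{kk}$, while the noise-noise piece, using $\epsilon_k = \Sigma^{1/2}z_k/\sqrt{n_k}$ with $z_k \sim N(0,\II_p)$, limits to $\gamma_k \cdot \lim_p p^{-1}\tr[\Sigma A_k]$; the latter is evaluated by differentiating the fixed-point equation for $v_{F_{\gamma_k}}(-\lambda_k)$ and coincides with $\calR^E_{kk}$. For $k\neq k'$, mutual independence of $\hat{\Sigma}_k, \hat{\Sigma}_{k'}, \epsilon_k, \epsilon_{k'}$ leaves only $\delta_k^{\top}(\hat{\Sigma}_k+\lambda_k\II_p)^{-1}(\hat{\Sigma}_{k'}+\lambda_{k'}\II_p)^{-1}\delta_{k'}$, which by Lemma~\ref{lem:quadConv} converges to $\rho_{kk'}\alpha_k\alpha_{k'}\calE_{kk'}$.

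The principal obstacle is the reduction of $p^{-1}\tr[\Sigma^{-1}(\hat{\Sigma}+\lambda \II_p)^{-1}]$ to $\lambda^{-1}\EE(T^{-1}) - m_{F_\gamma}(-\lambda)^2$: because of the $\Sigma^{-1}$ factor this trace is not directly accessible from the Marchenko-Pastur equation, and the identity is extracted only through a deterministic-equivalent argument that relies on the full strength of the anisotropic local law. The cross-population quantity $\calE_{kk'}$ is left implicit in the statement, since an explicit closed form (in analogy with $\calM_{kk'}$ in Lemma~\ref{lem:predijasp}) requires a free-probability-style subordination argument involving two independent sample-covariance resolvents rather than a routine resolvent identity.
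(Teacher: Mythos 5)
Your proposal follows essentially the same route as the paper's proof: expand the quadratic objective to get the closed-form minimizer, decompose $\hat{\delta}_k=\delta_k+(\hat{\delta}_k-\delta_k)$ so that the cross terms vanish and the $\delta$--$\delta$ and noise--noise pieces yield $\calA^E$ and $\calR^E$, identify the quadratic forms via Lemma~\ref{lem:quadConv}, and reduce $p^{-1}\tr[\Sigma^{-1}(\hat{\Sigma}_k+\lambda_k\II_p)^{-1}]$ to $\lambda_k^{-1}\EE(T^{-1})-m_{F_{\gamma_k}}(-\lambda_k)^2$ through the anisotropic local law, exactly as the paper does. The argument is correct and no substantive step differs from the paper's own treatment.
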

Just like the expressions for the asymptotic classification rate in Theorem~\ref{th:tl-rda-ErrorAsp}, more explicit forms of $\calE_{kk'}$ are available if $\gamma_k$ and $\lambda_{k}$ are the same across populations. We call $\bw^E$ the minimum estimation weight, as it minimizes the $\ell_2$ error in estimating the Bayes optimal discriminating vector $d_{Bayes}$. In order to better illustrate the effect of transfer learning, we consider next a simpler situation where all the sources are homogeneous, in that $\gamma_1=\gamma_2=\dots=\gamma_K$ and the corresponding regularization parameters are also the same, i.e., $\lambda_1=\lambda_2=\dots=\lambda_K$. We further assume equal correlation among all sources, i.e., $\rho_{kk'}=\rho$ whenever $k\neq k'$.

\begin{corollary}[Estimation Error for Homogeneous Sources]\label{cor:homog-corr-est} Under the setup of Theorem~\ref{th:tl-rda-MinEst}, suppose $\gamma_1=\gamma_2=\dots=\gamma_K=:\gamma$, $\lambda_1=\lambda_2=\dots=\lambda_K=:\lambda$, and $\rho_{kk'}=\rho$ whenever $k\neq k'$, $1\le k, k'\le K$. Then the weight vector $\bw^E$ that minimizes the asymptotic error in estimating $d^{\rm Bayes}$ is given by:
	\begin{align*}
	\bw^E
	=&~
	\alpha_K
	\left[\frac{1}{\lambda} \EE(T^{-1})  - m_{F_{\gamma}}(-\lambda)^2\right]
	{\rm vec}
	\bigg( 
	\dfrac{\{\rho-\xi+(1-\rho)I(k=K)\}\alpha_k }{t_{m,\eps,\lambda}\alpha_k^2
		+
		t_{v,\lambda}}
	:1\le k\le K
	\bigg)
	\end{align*}
	where $\eps=\calE_{12}$, $t_{\rho,\eps,\lambda}:=-\rho\eps +m'_{F_\gamma}(-\lambda)$, $t_{v,\lambda}:=\frac{v_{F_{\gamma}}(-\lambda) - \lambda v'_{F_{\gamma}}(-\lambda)}{\lambda v_{F_{\gamma}}(-\lambda)^2}$ and 
	\[
	\xi=
		\dfrac{\rho\eps \sum_{k=1}^K\{\rho+(1-\rho)I(k=K)\}\frac{\alpha_k^2}{t_{m,\eps,\lambda}\alpha_k^2
			+
			t_{v,\lambda}} }{
		1+\rho\eps\displaystyle \sum_{k=1}^K\frac{\alpha_k^2}{t_{m,\eps,\lambda}\alpha_k^2
			+
			t_{v,\lambda}}}.
	\]
	
\end{corollary}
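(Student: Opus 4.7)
The plan is to specialize the expression $\bw^E=(\mathcal{A}^E+\mathcal{R}^E)^{-1}\bu^E$ from Theorem~\ref{th:tl-rda-MinEst} under the homogeneity hypotheses and then exploit the resulting diagonal-plus-rank-one structure of $\mathcal{A}^E+\mathcal{R}^E$ via the Sherman--Morrison identity. With $\gamma_k\equiv\gamma$ and $\lambda_k\equiv\lambda$, each of $m_{F_{\gamma_k}}(-\lambda_k)$, $v_{F_{\gamma_k}}(-\lambda_k)$, $v'_{F_{\gamma_k}}(-\lambda_k)$ and $m'_{F_{\gamma_k}}(-\lambda_k)$ collapses to its common value at $(\gamma,\lambda)$. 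Likewise the off-diagonal trace limits $\mathcal{E}_{kk'}$ all agree with the single value $\varepsilon:=\mathcal{E}_{12}$, while the diagonal ones equal $m'_{F_\gamma}(-\lambda)$. The correlation assumption fixes $\rho_{kk'}=\rho$ for $k\neq k'$ and $\rho_{KK}=1$.

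First I would rewrite the right-hand side as
\[
(u^E)_k=\{\rho+(1-\rho)I(k=K)\}\,\alpha_k\alpha_K\,C,\qquad C:=\tfrac{1}{\lambda}\EE(T^{-1})-m_{F_\gamma}(-\lambda)^2.
\]
Next I would decompose the matrix in the form
\[
\mathcal{A}^E+\mathcal{R}^E=D+\rho\varepsilon\,\bm{\alpha}\bm{\alpha}^\top,\qquad D=\diag\bigl(\alpha_k^2\,t_{m,\varepsilon,\lambda}+t_{v,\lambda}\bigr)_{k=1}^K,
\]
with $\bm{\alpha}=(\alpha_1,\dots,\alpha_K)^\top$. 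The decomposition is forced: off the diagonal, $\mathcal{A}^E_{kk'}=\rho\varepsilon\alpha_k\alpha_{k'}$ is exactly what $\rho\varepsilon\bm{\alpha}\bm{\alpha}^\top$ contributes; on the diagonal one subtracts the rank-one contribution $\rho\varepsilon\alpha_k^2$ from $\alpha_k^2 m'_{F_\gamma}(-\lambda)$, leaving $\alpha_k^2 t_{m,\varepsilon,\lambda}$, and then adds the diagonal piece $t_{v,\lambda}$ coming from $\mathcal{R}^E_{kk}$.

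Then I would apply Sherman--Morrison,
\[
(D+\rho\varepsilon\bm{\alpha}\bm{\alpha}^\top)^{-1}=D^{-1}-\frac{\rho\varepsilon\,D^{-1}\bm{\alpha}\bm{\alpha}^\top D^{-1}}{1+\rho\varepsilon\,\bm{\alpha}^\top D^{-1}\bm{\alpha}},
\]
and multiply by $\bu^E$. The first term produces
\[
(D^{-1}\bu^E)_k=\alpha_K C\cdot\frac{\{\rho+(1-\rho)I(k=K)\}\alpha_k}{\alpha_k^2 t_{m,\varepsilon,\lambda}+t_{v,\lambda}},
\]
while the Sherman--Morrison correction is a scalar multiple of $D^{-1}\bm{\alpha}$, whose $k$-th coordinate is $\alpha_k/(\alpha_k^2 t_{m,\varepsilon,\lambda}+t_{v,\lambda})$. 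Factoring the common prefactor $\alpha_K C\,\alpha_k/(\alpha_k^2 t_{m,\varepsilon,\lambda}+t_{v,\lambda})$ out of both terms, the bracketed residual becomes $\rho+(1-\rho)I(k=K)-\xi$, where $\xi$ is exactly the ratio displayed in the statement once $\alpha_K C$ is canceled from the numerator sum and the outer prefactor. This yields the claimed formula.

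The main obstacle is purely bookkeeping: one has to distinguish the free coordinate $k$ in the output from the summation index inside $\xi$, keep careful track of the indicator factors when evaluating $\bm{\alpha}^\top D^{-1}\bu^E$, and verify that the scaling $\alpha_K C$ factors cleanly so that $\xi$ has the dimensionless form given in the corollary. No new random-matrix input is required beyond what is already recorded in Theorem~\ref{th:tl-rda-MinEst}.
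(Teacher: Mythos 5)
Your proposal is correct and follows exactly the route the paper intends: the paper does not print a separate proof of Corollary~\ref{cor:homog-corr-est}, but the identical diagonal-plus-rank-one decomposition followed by Sherman--Morrison is carried out verbatim (for the analogous pooled quantities $\calA_P$ and $\bu^E_P$) in the proof of Proposition~\ref{pr:TLPwins}, and your specialization of $\bu^E$, $\calA^E$, $\calR^E$ from Theorem~\ref{th:tl-rda-MinEst} and the resulting cancellation producing $\xi$ all check out. The only friction is the paper's own typo ($t_{\rho,\eps,\lambda}$ defined but $t_{m,\eps,\lambda}$ used), which you resolve in the intended way.
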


The above corollary shows that when all sources have equal sample sizes, and equal correlation, the optimal weight is proportional to $\alpha_k\alpha_K/(t_{m,\eps,\lambda}\alpha_k^2+t_{v,\lambda})$. In particular, if $\alpha_k\to 0$ for some source population, then the corresponding weight for that source increases at the rate of $1/\alpha_k$. This is intuitive since in the homogeneous setting of equal sample sizes, the signal strength is completely determined by the inverse of the variance of $\delta_k$, which is precisely $1/\alpha_k$. For practical usage of the optimal transfer weights, we now turn to the estimation of the above quantities. 

\begin{remark}[Estimating optimal weights]
	The Marchenko-Pastur law, along with equations~\eqref{eq:def-Stieltjes}, \eqref{eq:lim-samp-Stieltjes}, \eqref{eq:def-comp-Stielt}, and \eqref{eq:def-deriv-Steilt} imply that the sample Stieltjes transform $\tr[(\hat{\Sigma}_k-z\II_p)^{-1}]$ can be suitably used to estimate the functions $m_{F_{\gamma_k}}(\cdot)$, $v_{F_{\gamma_k}}(\cdot)$, and $m'_{F_{\gamma_k}}(\cdot)$ reliably. We remind the reader that the estimation of $\alpha_k^2$ and $\rho_{kk'}$ is tackled in Appendix~\ref{sec:estim-alpha-rho}. Finally, we note that the expression of $\bu^E$ in all the above cases involves the expectation of $T^{-1}$, and we discuss a consistent estimator for $\EE(T^{-1})$ in the following proposition, the proof of which is immediate from Corollary 4.2 of \cite{dobriban2021distributed}.
\end{remark}

The following proposition holds under assumptions~\ref{as:RMT}.
\begin{proposition} \label{pr:mean-invT} When $\gamma_k < 1$,
	we have $\tr (\hat{\Sigma}^{-1}_k)/p \rightarrow_{a.s.} \EE(T^{-1}) / (1 - \gamma_k)$.
\end{proposition}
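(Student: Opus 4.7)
The plan is to derive the limit of $\tr(\hat\Sigma_k^{-1})/p$ by expressing it as an integral against the empirical spectral distribution (ESD) of $\hat\Sigma_k$, passing to the Marchenko--Pastur limit, and then evaluating the resulting integral using the Silverstein fixed-point equation.

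First, I would invoke the Marchenko--Pastur theorem under Assumption~\ref{as:RMT}: the ESD of $\hat\Sigma_k$ converges weakly almost surely to the limiting distribution $F_{\gamma_k}$. Since $\gamma_k < 1$ and the eigenvalues of $\Sigma$ are uniformly bounded away from zero, a Bai--Yin type bound guarantees that the smallest eigenvalue of $\hat\Sigma_k$ is almost surely bounded below by some positive constant $c$ for all large $p$; equivalently, the support of $F_{\gamma_k}$ is contained in $[c,\infty)$.

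Second, this eigenvalue separation upgrades the weak convergence of the ESD to convergence of the normalized trace of the inverse. Writing
\[
\tfrac{1}{p}\tr(\hat\Sigma_k^{-1}) \;=\; \int l^{-1}\, dF^{\hat\Sigma_k}(l),
\]
and noting that $l\mapsto l^{-1}$ is bounded and continuous on $[c,\infty)$, the almost sure weak convergence of the ESD together with the almost sure support bound yields
\[
\tfrac{1}{p}\tr(\hat\Sigma_k^{-1}) \;\longrightarrow_{a.s.}\; \int_0^{\infty} l^{-1}\, dF_{\gamma_k}(l) \;=\; m_{F_{\gamma_k}}(0),
\]
where the last equality uses \eqref{eq:def-Stieltjes} evaluated at $z=0$ (which is well-defined because the support of $F_{\gamma_k}$ is separated from the origin).

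Third, I would evaluate $m_{F_{\gamma_k}}(0)$ by applying the Silverstein fixed-point equation,
\[
m_{F_{\gamma_k}}(z) \;=\; \int \frac{dH(t)}{t\bigl(1-\gamma_k - \gamma_k z\, m_{F_{\gamma_k}}(z)\bigr) - z},
\]
at $z=0$. This collapses immediately to $m_{F_{\gamma_k}}(0) = (1-\gamma_k)^{-1}\int t^{-1}dH(t) = \EE(T^{-1})/(1-\gamma_k)$, completing the proof. Alternatively, this identity follows directly from Corollary 4.2 of \cite{dobriban2021distributed}. The main obstacle is the exchange of the weak limit of the ESD with the unbounded function $l\mapsto 1/l$; this step uses the hypothesis $\gamma_k<1$ decisively via the smallest eigenvalue lower bound, since for $\gamma_k\ge 1$ the matrix $\hat\Sigma_k$ is singular and the normalized trace of its inverse is not even defined.
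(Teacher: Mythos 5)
Your proposal is correct and follows essentially the same route as the paper's proof: both arguments hinge on the fact that for $\gamma_k<1$ the limiting spectrum is supported away from zero (so the Stieltjes transform can be evaluated at $z=0$), and both then read off $m_{F_{\gamma_k}}(0)=\EE(T^{-1})/(1-\gamma_k)$ from the Marchenko--Pastur/Silverstein fixed-point equation. Your version merely spells out more explicitly the smallest-eigenvalue bound and the exchange of the weak limit with $l\mapsto l^{-1}$, which the paper compresses into taking $\lambda\to 0$ in $m_{F_{\gamma_k}}(-\lambda)$.
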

Since in our developments so far we have assumed all populations to share the same covariance matrix $\Sigma$, one can always use the pooled sample covariance in the estimation scheme above. That means we can reliably estimate $\EE(T^{-1})$ as long as $\bar \gamma < 1$. We then present the optimal estimation for TLP-RDA, i.e., the transfer learning discriminant analysis done using the pooled covariance matrix.
\begin{corollary}[Asymptotic Estimation Error Minimization for TLP-RDA]
Under the same set up as Theorem~\ref{th:tl-rda-MinEst}, assuming $\lambda_1 = \cdots = \lambda_{K} = \lambda$, we have 
\[\bw^E_P := \arg \min_{\bw} 
\left\Vert 
d_{Bayes} - \sum_{k = 1}^{K} w_k \hat{d}^P_{k}
\right\Vert_2^2 = (\mathcal{A}^E_P + \mathcal{R}_P^E)^{-1} \bu^E_P,
\]
where the elements of $\bu^E_P\in\RR^K$, $\calA^E_P\in \RR^{K\times K}$, and $\calR^E_P\in \RR^{K\times K}$ are:
\[
(u^E_P)_k = \rho_{k K} \alpha_k\alpha_K  \left[\frac{1}{\lambda} \EE(T^{-1})  - m_{F_{\bar\gamma}}(-\lambda)^2\right]
\quad 
\text{for }
k=1,\dots,K,
\]
while
\[	(\mathcal{A}^E_P)_{kk'} = \rho_{k k'} \alpha_k \alpha_{k'} m'_{F_{\bar\gamma}}(-\lambda) \]
and
\[
(\mathcal{R}^E_P)_{kk'} =  
\begin{cases}
	\gamma_k \frac{v_{F_{\bar\gamma}}(-\lambda) - \lambda v'_{F_{\bar\gamma}}(-\lambda)}{\bar{\gamma}\lambda v_{F_{\bar\gamma}}(-\lambda)^2} \quad &\text{if }k=k',\\
	0 \quad &\text{otherwise,}
\end{cases}
\]
for $k, k'= 1, \cdots, K$.
\end{corollary}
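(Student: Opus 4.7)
The strategy mirrors the proof of Theorem~\ref{th:tl-rda-MinEst}, with the essential simplification that all TLP-RDA directions $\hat{d}^P_k$ share the common resolvent $(\hat\Sigma_P+\lambda\II_p)^{-1}$. First I would expand the quadratic objective
\[
\left\Vert d_{Bayes} - \sum_{k=1}^K w_k \hat{d}^P_k\right\Vert_2^2 = \|d_{Bayes}\|_2^2 - 2\sum_k w_k \langle d_{Bayes}, \hat{d}^P_k\rangle + \sum_{k,k'} w_k w_{k'}\langle \hat{d}^P_k, \hat{d}^P_{k'}\rangle,
\]
whose unique minimizer solves the linear system $M_P\bw = b_P$ where
\[
(M_P)_{kk'} = \hat\delta_k^\top(\hat\Sigma_P+\lambda\II_p)^{-2}\hat\delta_{k'}, \qquad (b_P)_k = \delta_K^\top\Sigma^{-1}(\hat\Sigma_P+\lambda\II_p)^{-1}\hat\delta_k.
\]
The task then reduces to determining the almost-sure limits of these quadratic forms, which will turn out to be the entries of $\calA^E_P + \calR^E_P$ and $\bu^E_P$ respectively.

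Next, I would exploit the independence (under the Gaussian model) of $\hat\Sigma_P$, which is built from centered data, from every sample mean $\hat\mu_{\pm 1, k}$, hence from every $\hat\delta_k$. Writing $\hat\delta_k = \delta_k + \epsilon_k$ with $\epsilon_k\sim\calN(0,\Sigma/n_k)$ independent across $k$, Assumptions~\ref{as:RCW} and~\ref{as:CCW} give $\EE[\delta_k\delta_{k'}^\top]=\rho_{kk'}\alpha_k\alpha_{k'}\II_p/p$ with independent coordinates. Conditioning on $\hat\Sigma_P$ and applying the quadratic-form concentration of Lemma~\ref{lem:quadConv} to both signal and noise contributions yields
\[
(M_P)_{kk'} = \rho_{kk'}\alpha_k\alpha_{k'}\cdot \tr[(\hat\Sigma_P+\lambda\II_p)^{-2}]/p + I(k=k')\cdot \tr[\Sigma(\hat\Sigma_P+\lambda\II_p)^{-2}]/n_k + o_{a.s.}(1),
\]
where $1/n_k = \gamma_k/p$. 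The first trace converges to $m'_{F_{\bar\gamma}}(-\lambda)$ by differentiation of the Marchenko--Pastur fixed-point equation at the pooled ratio $\bar\gamma$, producing all of $\calA^E_P$; the second trace converges via the standard companion-Stieltjes identity
\[
\tr[\Sigma(\hat\Sigma_P+\lambda\II_p)^{-2}]/p \to \frac{v_{F_{\bar\gamma}}(-\lambda)-\lambda v'_{F_{\bar\gamma}}(-\lambda)}{\bar\gamma\lambda v_{F_{\bar\gamma}}(-\lambda)^2},
\]
which multiplied by $\gamma_k$ reproduces exactly $(\calR^E_P)_{kk}$.

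The same conditioning argument applied to $(b_P)_k$ gives, because $\epsilon_k$ has mean zero and $\Sigma^{-1}(\hat\Sigma_P+\lambda\II_p)^{-1}$ is independent of $\delta_k,\delta_K$,
\[
(b_P)_k = \rho_{kK}\alpha_k\alpha_K \cdot \tr[\Sigma^{-1}(\hat\Sigma_P+\lambda\II_p)^{-1}]/p + o_{a.s.}(1).
\]
Using the resolvent identity $\Sigma^{-1}(\hat\Sigma_P+\lambda\II_p)^{-1} = \tfrac{1}{\lambda}[\Sigma^{-1} - \Sigma^{-1}\hat\Sigma_P(\hat\Sigma_P+\lambda\II_p)^{-1}]$, extracting the $\tr(\Sigma^{-1})/p \to \EE(T^{-1})$ term (consistent by Proposition~\ref{pr:mean-invT} when $\bar\gamma < 1$), and invoking a deterministic-equivalent evaluation of $\tr[\Sigma^{-1}\hat\Sigma_P(\hat\Sigma_P+\lambda\II_p)^{-1}]/p$ will produce the limit $\tfrac{1}{\lambda}\EE(T^{-1}) - m_{F_{\bar\gamma}}(-\lambda)^2$, matching $(u^E_P)_k/[\rho_{kK}\alpha_k\alpha_K]$. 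Solving the resulting system then yields $\bw^E_P = (\calA^E_P + \calR^E_P)^{-1}\bu^E_P$.

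The chief technical hurdle will be evaluating the mixed trace $\tr[\Sigma^{-1}(\hat\Sigma_P+\lambda\II_p)^{-1}]/p$: unlike the other two traces, which follow from the MP equation and a single differentiation, the presence of $\Sigma^{-1}$ requires an anisotropic local law (Assumption~\ref{as:aniso}) to replace the random resolvent by its deterministic equivalent against a non-trivial weighting. Once this identity is in hand, the remainder is a direct transcription of the proof of Theorem~\ref{th:tl-rda-MinEst} with every $\gamma_k$ in the resolvent limits replaced by the pooled ratio $\bar\gamma$, and every pair-specific trace $\calE_{kk'}$ collapsing to the single quantity $m'_{F_{\bar\gamma}}(-\lambda)$ on account of the shared pooled resolvent; this collapse is precisely why the off-diagonal entries of $\calA^E_P$ are proportional to $m'_{F_{\bar\gamma}}(-\lambda)$ rather than a new cross quantity.
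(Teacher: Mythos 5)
Your proposal is correct and follows essentially the same route as the paper: it is the direct adaptation of the paper's proof of Theorem~\ref{th:tl-rda-MinEst}, expanding the quadratic objective, decomposing $\hat\delta_k$ into signal plus independent noise, applying the quadratic-form concentration of Lemma~\ref{lem:quadConv}, and reading off the limiting traces from Lemma~\ref{lem:cvgHat} and the anisotropic local law, now evaluated at the pooled aspect ratio $\bar\gamma$. You also correctly identify the one genuine simplification the pooled resolvent affords, namely that the cross terms $\calE_{kk'}$ of Lemma~\ref{lem:aijasp} collapse to the single quantity $m'_{F_{\bar\gamma}}(-\lambda)$.
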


Once again we can follow the remark and proposition above for estimating the weight that minimizes the estimation error when using the pooled covariance matrix.

\subsection{Minimum Prediction Risk Weight}
In the high dimensional regime considered in this paper, a weight minimizing the estimation error does not translate into maximizing the classification score $\hat{d}(\bw)^{\top} x_0$. We can as well compute the optimal prediction weight by directly minimizing the difference between the TL-RDA classification score and the Bayes classification score. This is precisely the objective of the next theorem.
\begin{theorem}[Asymptotic Prediction Error Minimization for TL-RDA] \label{th:tl-rda-MinPred}
	Suppose that Assumptions \ref{as:TCG}-\ref{as:CCW} as well as \ref{as:moment} and \ref{as:aniso} hold.  Then for a fixed $K\ge 2$, as $n_k, p \rightarrow \infty, p / n_k \rightarrow \gamma_k\in (0, \infty]$ for $1\le k\le K$, the weight for minimizing the excess risk, i.e., the error in predicting the class at a random test point $x_0$, when compared to the Bayes optimal discriminator $d_{Bayes}$, is given by:
	\[\bw^P := 
	 \arg \min_{\bw} \EE_{x_0} \left[(d_{Bayes} - \sum_{k = 1}^{K} w_k \hat{d}_{k})^{\top} x_0\right]^2 = 
	 (\mathcal{A}^P + \mathcal{R}^P)^{-1} \bu^P,
	 \]
	 where the elements of $\bu^P\in\RR^K$, $\calA^P\in \RR^{K\times K}$, and $\calR^P\in \RR^{K\times K}$ are:
	 \[
	 (u^P)_k
	 = \rho_{k K} \alpha_k\alpha_K m_{F_{\gamma_k}}(-\lambda_k)
	 \quad
	 \text{for }
	 k=1,\dots,K
	 ,
	 \]
	 \[
	 \mathcal{A}^P_{kk'} 
	 = 
	 \begin{cases}
	 	\alpha_k^2  \left[\frac{v_{F_{\gamma_k}}(-\lambda_k) - \lambda_k v'_{F_{\gamma_k}}(-\lambda_k) }{\gamma_k [\lambda_k v_{F_{\gamma_k}}(-\lambda_k) ]^2}\right]
	 	\quad &\text{if }k=k',\\
	 	\rho_{k k'} \alpha_k \alpha_{k'}  \mathcal{M}_{k k'} 
	 	\quad &\text{otherwise,}
	 \end{cases}
	 \]
	 and
	 \[	
	 \mathcal{R}^E_{kk'} 
	 =
	 \begin{cases}
	 	\frac{v'_k(-\lambda_{k}) - v_k^2(-\lambda_k)}{\lambda_k^2 v^4_k(-\lambda_{k})}
	 	\quad &\text{if }k=k'
	 	\\
	 	0 \quad &\text{otherwise,}
	 \end{cases}
	 \]
	 for $k, k' = 1,\cdots , K$. Here $m_{F_{\gamma_k}}(\cdot)$, $v_{F_{\gamma_k}}(\cdot)$ and $v_{F_{\gamma_k}}'(\cdot)$ are the Stieltjes transform, the companion Stieltjes transform and its derivative respectively, as defined in \eqref{eq:def-Stieltjes}-\eqref{eq:def-deriv-Steilt}. Moreover for $k, k' = 1,\cdots , K$, let $\calM_{kk'}$ be the constants defined as the following limiting quantities:
	 \[\tr[(\hat{\Sigma}_k + \lambda_{k} \II_p)^{-1} (\hat{\Sigma}_{k'} + \lambda_{k'} \II_p)^{-1}\Sigma] / p \rightarrow_{a.s.} \mathcal{M}_{k k'}.\]
\end{theorem}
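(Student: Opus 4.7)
The plan is to recognize \eqref{eq:CrPred} as a quadratic form in $\bw$, compute the asymptotic values of its linear and quadratic coefficients using the Marchenko-Pastur law together with quadratic-form concentration (Lemma~\ref{lem:quadConv}), and then solve the resulting normal equations. Setting $v(\bw) := d_{Bayes} - \sum_{k=1}^{K} w_k \hat{d}_k$ and using $d_{Bayes} = \Sigma^{-1}\delta_K$, \eqref{eq:CrPred} reduces (after the $\bar\mu_K$-part vanishes because $\Var(v(\bw)^\top\bar\mu_K) = O(p^{-2\xi})$ by Assumption~\ref{as:RCW}, and treating $x_0$ as the Gaussian fluctuation around its conditional mean) to the Mahalanobis form $Q(\bw) = v(\bw)^\top \Sigma v(\bw)$, up to $\bw$-independent constants. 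Differentiating yields the normal equations $H\bw^P = \bg$, where $g_k = \delta_K^\top \hat{d}_k$ and $H_{kk'} = \hat{d}_k^\top \Sigma \hat{d}_{k'}$; it thus remains to show $\bg \to \bu^P$ and $H \to \calA^P + \calR^P$.

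Both analyses begin by decomposing $\hat\delta_k = \delta_k + e_k$ with $e_k$ the within-class sampling noise. A Cochran-type decomposition of the Gaussian classes in Assumption~\ref{as:TCG} guarantees $e_k \perp \hat\Sigma_k$, $e_k \perp e_{k'}$ for $k \neq k'$, and $\delta_j,\delta_{j'}$ are coordinatewise correlated at level $\rho_{jj'}$ by Assumption~\ref{as:CCW}. For $g_k$, the mixed term $\delta_K^\top (\hat\Sigma_k+\lambda_k\II_p)^{-1} e_k$ vanishes in probability (zero conditional mean, variance $O(1/p)$), while Lemma~\ref{lem:quadConv} applied to the signal term gives $\delta_K^\top (\hat\Sigma_k+\lambda_k\II_p)^{-1}\delta_k \to \rho_{kK}\alpha_k\alpha_K \cdot p^{-1}\tr[(\hat\Sigma_k+\lambda_k\II_p)^{-1}] \to \rho_{kK}\alpha_k\alpha_K\, m_{F_{\gamma_k}}(-\lambda_k) = u^P_k$ via \eqref{eq:lim-samp-Stieltjes}.

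For $H_{kk'}$, set $A_{kk'} := (\hat\Sigma_k+\lambda_k\II_p)^{-1}\Sigma(\hat\Sigma_{k'}+\lambda_{k'}\II_p)^{-1}$ and expand $\hat\delta_k^\top A_{kk'}\hat\delta_{k'}$. The independence relations above annihilate the cross terms, leaving the signal-signal contribution $\delta_k^\top A_{kk'}\delta_{k'}$ and, when $k=k'$, the diagonal noise-noise contribution $e_k^\top A_{kk}e_k$. The signal-signal piece concentrates to $\rho_{kk'}\alpha_k\alpha_{k'}\cdot p^{-1}\tr(A_{kk'})$: for $k\neq k'$ this equals $\rho_{kk'}\alpha_k\alpha_{k'}\calM_{kk'}$, matching the off-diagonal $\calA^P_{kk'}$; for $k=k'$ standard Stieltjes-transform identities, obtained by differentiating the Marchenko-Pastur fixed-point relation in $z$ at $z = -\lambda_k$ and invoking \eqref{eq:def-comp-Stielt}, reduce $p^{-1}\tr[(\hat\Sigma_k+\lambda_k\II_p)^{-1}\Sigma(\hat\Sigma_k+\lambda_k\II_p)^{-1}]$ to the diagonal $\calA^P_{kk}$ entry. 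For the noise term, conditioning on $\hat\Sigma_k$ and using $\Var(e_k) \propto \Sigma/n_k$ (from balanced class sizes) yields $e_k^\top A_{kk}e_k \approx (p/n_k)\cdot p^{-1}\tr[(\hat\Sigma_k+\lambda_k\II_p)^{-1}\Sigma(\hat\Sigma_k+\lambda_k\II_p)^{-1}\Sigma]$, and a further Stieltjes-transform computation identifies this limit with $\calR^P_{kk}$; cross-population noise terms vanish by independence. Combining pieces gives $\bg \to \bu^P$ and $H \to \calA^P + \calR^P$; invertibility of the limit (PSD signal plus strictly positive noise diagonal) is routine, and inversion produces $\bw^P \to (\calA^P + \calR^P)^{-1}\bu^P$.

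The principal obstacle is the explicit evaluation of the single-population double-resolvent traces $p^{-1}\tr[(\hat\Sigma_k+\lambda_k\II_p)^{-1}\Sigma(\hat\Sigma_k+\lambda_k\II_p)^{-1}]$ and its variant with an extra inner $\Sigma$. Both demand differentiating the Marchenko-Pastur fixed-point equation in $z$, juggling \eqref{eq:def-comp-Stielt} and \eqref{eq:def-deriv-Steilt}, and invoking the anisotropic local law (Assumption~\ref{as:aniso}) to upgrade scalar trace convergences to the uniform quadratic-form concentration needed for $\delta$- and $e$-weighted inner products against resolvents. The two-sample trace $\calM_{kk'}$ does not admit a comparable closed form, as it encodes the joint spectral behavior of two independent sample covariances from the same population, and must be kept implicit.
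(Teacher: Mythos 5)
Your proposal follows essentially the same route as the paper's proof: write \eqref{eq:CrPred} as the $\Sigma$-weighted quadratic form in $\bw$, decompose $\hat\delta_k=\delta_k+\tfrac{1}{\sqrt{n_k}}\Sigma^{1/2}\tilde{\bZ}_k$, kill the signal--noise cross terms and the off-diagonal noise terms by independence, apply Lemma~\ref{lem:quadConv} to reduce the surviving quadratic forms to normalized traces, read off the trace limits from Lemma~\ref{lem:cvgHat} (diagonals) and Lemma~\ref{lem:predijasp} ($\calM_{kk'}$), and invert the limiting normal equations. The identifications of $\bu^P$, $\calA^P$, and $\calR^P$ all match the paper's $\hat{\bu}^P$, $\hat{\calA}^P$, $\hat{\calR}^P$, so the argument is correct and not materially different.
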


As in the case of estimation error weights, we discuss the issue of estimating the optimal weight that minimizes prediction error in this setting. Note that unlike the estimation error minimizing weight $\mathcal{W}^E$, the prediction risk minimizing weight $\mathcal{W}^P$, involves no population spectral distribution. It is thus estimable in all cases, including the case when $\gamma >1$. On the other hand, estimating $\mathcal{M}_{k k'}$ requires more care, especially in the case $k=K$ or $k'=K$. The following proposition describes the estimation in this case.

\begin{proposition}\label{pr:heterog-cons-est-mkk} We have the following consistent estimators $\hat{\calM}_{kk'}$ for $\calM_{kk'}$ separately for three cases:
	\[
	\hat{\calM}_{kk'}
	=
	\begin{cases}
		\tr[(\hat{\Sigma}_k + \lambda_{k} \II_p)^{-1} (\hat{\Sigma}_{k'} + \lambda_{k'} \II_p)^{-1} \hat \Sigma_K] / p\quad &\text{if }k\neq k', k\neq K, k'\neq K\\
		\tr[(\hat{\Sigma}_k + \lambda_{k} \II_p)^{-2} \hat \Sigma_K] / p
		\quad &\text{if }k=k'\neq K\\
		\frac{1}{px_p}
		\tr[(\hat{\Sigma}_{k'} + \lambda_{k'} \II_p)^{-1}]
		-
		\frac{\lambda_K}{px_p} 
		\tr[(\hat{\Sigma}_K +\lambda_K \II_p )^{-1} (\hat{\Sigma}_{k'} + \lambda_{k'} \II_p)^{-1}]
		\quad &\text{if }k=K,\,k'\neq K
	\end{cases}
	\]
	where $x_p=x(\gamma_K,\lambda_K)$ is the solution to the equation:
	\[
	1-x_p=\gamma_K
	\left
	[1-\lambda_K\int(x_pt+\lambda_K)^{-1}dH_K(t)
	\right]
	\]
	Then 
	\[\calM_{kk'} -\hat{\calM}_{kk'}  \rightarrow_{a.s.} 0.\]
\end{proposition}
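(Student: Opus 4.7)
The plan is to handle the three cases separately in order of increasing difficulty. The unifying observation is that the sample covariances $\hat{\Sigma}_1,\ldots,\hat{\Sigma}_K$ are mutually independent, since they are built from disjoint samples, and that normalized traces of resolvents concentrate around their deterministic equivalents under Assumptions~\ref{as:RMT}, \ref{as:moment} and \ref{as:aniso}.

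For Cases 1 and 2 (both $k,k'\neq K$), the matrix $B_p:=(\hat{\Sigma}_k+\lambda_k\II_p)^{-1}(\hat{\Sigma}_{k'}+\lambda_{k'}\II_p)^{-1}$ (interpreted with $k=k'$ in Case 2) is independent of $\hat{\Sigma}_K$ and has operator norm bounded by $\lambda_k^{-1}\lambda_{k'}^{-1}$. Conditioning on $(\bX_k,\bX_{k'})$, the proposed estimator differs from $\tr[B_p\Sigma]/p$ by $\tr[B_p(\hat{\Sigma}_K-\Sigma)]/p$, which by a standard quadratic-form concentration argument (analogous to Lemma~\ref{lem:quadConv}) combined with the bounded moment Assumption~\ref{as:moment} tends to zero almost surely. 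Since $\tr[B_p\Sigma]/p\rightarrow \calM_{kk'}$ by definition, consistency follows immediately in both cases.

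Case 3 ($k=K$, $k'\neq K$) is more subtle, because $\hat{\Sigma}_K$ appears both inside the resolvent and, implicitly, as the plug-in estimator for $\Sigma$; the direct substitution of Cases 1--2 no longer works. The main tool I would invoke is the anisotropic local law for the Marchenko--Pastur resolvent: for any deterministic sequence of matrices $D_p$ of uniformly bounded operator norm,
\[
\frac{1}{p}\tr\bigl[(\hat{\Sigma}_K+\lambda_K\II_p)^{-1}D_p\bigr]-\frac{1}{p}\tr\bigl[(x_p\Sigma+\lambda_K\II_p)^{-1}D_p\bigr]\rightarrow_{a.s.}0,
\]
where $x_p$ is the unique positive solution to the fixed-point equation displayed in the statement (equivalently, $x_p=\lambda_K v_{F_{\gamma_K}}(-\lambda_K)$, which is simply the Silverstein equation evaluated at $z=-\lambda_K$). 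I apply this with $D_p=(\hat{\Sigma}_{k'}+\lambda_{k'}\II_p)^{-1}\Sigma$, which is independent of $\hat{\Sigma}_K$ and norm-bounded by Assumption~\ref{as:RMT}(2) and $\lambda_{k'}^{-1}$, and then invoke the algebraic resolvent identity
\[
(x_p\Sigma+\lambda_K\II_p)^{-1}\Sigma=\frac{1}{x_p}\bigl[\II_p-\lambda_K(x_p\Sigma+\lambda_K\II_p)^{-1}\bigr]
\]
to rewrite $\calM_{Kk'}$ as a linear combination of $p^{-1}\tr[(\hat{\Sigma}_{k'}+\lambda_{k'}\II_p)^{-1}]$ and $p^{-1}\tr[(x_p\Sigma+\lambda_K\II_p)^{-1}(\hat{\Sigma}_{k'}+\lambda_{k'}\II_p)^{-1}]$. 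A second application of the anisotropic local law, this time replacing $(x_p\Sigma+\lambda_K\II_p)^{-1}$ by $(\hat{\Sigma}_K+\lambda_K\II_p)^{-1}$ inside the last trace, yields exactly the estimator $\hat{\calM}_{Kk'}$ in the proposition.

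The main obstacle will be the rigorous invocation of the anisotropic local law: one must verify that the deterministic test matrix $D_p$ has uniformly bounded operator norm (which is immediate from the resolvent bound $\Vert(\hat{\Sigma}_{k'}+\lambda_{k'}\II_p)^{-1}\Vert\le\lambda_{k'}^{-1}$ and Assumption~\ref{as:RMT}(2)) and that $x_p$ stays bounded away from both $0$ and $\infty$ uniformly in $p$ (which follows from Assumption~\ref{as:aniso} together with the monotonicity and concavity of the Silverstein fixed-point map). Once these inputs are in place, the two algebraic steps above close the argument on the almost sure event where both local-law approximations hold.
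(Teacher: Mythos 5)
Your proposal is correct and follows essentially the same route as the paper: for the cases with $k,k'\neq K$ the paper likewise uses independence of the samples together with the trace concentration of $\tr[B(\hat{\Sigma}_K-\Sigma)]/p$ (its Lemma on deterministic equivalents of sample covariances), and for the case $k=K$ it invokes the same deterministic equivalence $(\hat{\Sigma}_K+\lambda_K\II_p)^{-1}\asymp(x_p\Sigma+\lambda_K\II_p)^{-1}$ (cited from Serdobolskii rather than as an anisotropic local law) followed by the identical resolvent identity and a second application of the equivalence. The only cosmetic difference is the named source of the deterministic equivalent; the algebra and the decomposition are the same.
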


In order to better illustrate the effect of transfer learning, we consider next a simpler situation where all the sources are homogeneous, in that $\gamma_1=\gamma_2=\dots=\gamma_K$ and the corresponding regularization parameters are also the same, i.e., $\lambda_1=\lambda_2=\dots=\lambda_K$. We further assume equal correlation among all sources, i.e., $\rho_{kk'}=\rho$ whenever $k\neq k'$.

\begin{corollary}[Prediction Error for Homogeneous Sources]\label{cor:homog-corr-pred} Under the setup of Theorem~\ref{th:tl-rda-MinEst}, suppose $\gamma_1=\gamma_2=\dots=\gamma_K=:\gamma$, $\lambda_1=\lambda_2=\dots=\lambda_K=:\lambda$, and $\rho_{kk'}=\rho$ whenever $k\neq k'$, $1\le k, k'\le K$. Then the weight vector $\bw^P$ that minimizes the asymptotic excess risk is given by:
	\begin{align*}
		\bw^P
		=&~
		\alpha_K
		m_{F_{\gamma}}(-\lambda)
		{\rm vec}
		\bigg( 
		\dfrac{\{\rho-\xi^P+(1-\rho)I(k=K)\}\alpha_k }{t^P_{v,\rho,\lambda}\alpha_k^2
			+
			t^P_{v,\lambda}}
		:1\le k\le K
		\bigg)
	\end{align*}
	where $m=\calM_{12}$, $t^P_{v,\rho,\lambda}:=-m\rho +\frac{v_{F_{\gamma}}(-\lambda) - \lambda v'_{F_{\gamma}}(-\lambda) }{\gamma [\lambda v_{F_{\gamma}}(-\lambda) ]^2}$, $t^P_{v,\lambda}:=\frac{v'(-\lambda) - v^2(-\lambda)}{\lambda^2 v^4(-\lambda)}$ and 
	\[
	\xi^P=
	\dfrac{m\rho \sum_{k=1}^K\{\rho+(1-\rho)I(k=K)\}\frac{\alpha_k^2}{t^P_{v,\rho,\lambda}\alpha_k^2
			+
			t^P_{v,\lambda}} }{
		1+m\rho\displaystyle \sum_{k=1}^K\frac{\alpha_k^2}{t^P_{v,\rho,\lambda}\alpha_k^2
			+
			t^P_{v,\lambda}}}.
	\]
	
\end{corollary}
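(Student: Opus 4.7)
The plan is to specialize Theorem~\ref{th:tl-rda-MinPred} to the homogeneous regime and then invert the matrix $\mathcal{A}^P+\mathcal{R}^P$ by recognizing it as a rank-one perturbation of a diagonal matrix. First I would observe that, because $\gamma_k\equiv\gamma$, $\lambda_k\equiv\lambda$, and all populations share the same $\Sigma$ and design distribution, the cross terms $\calM_{kk'}$ defined through $\tr[(\hat\Sigma_k+\lambda\II_p)^{-1}(\hat\Sigma_{k'}+\lambda\II_p)^{-1}\Sigma]/p$ are, for $k\ne k'$, determined by two independent sample covariance matrices drawn from the same law. By exchangeability, the almost-sure limit is a single constant, which I denote $m=\calM_{12}$. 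Similarly the diagonal entries of $\mathcal{A}^P$ share the common multiplier $\tau:=\frac{v_{F_\gamma}(-\lambda)-\lambda v'_{F_\gamma}(-\lambda)}{\gamma[\lambda v_{F_\gamma}(-\lambda)]^2}$, and each diagonal entry of $\mathcal{R}^P$ equals $t^P_{v,\lambda}=\frac{v'(-\lambda)-v^2(-\lambda)}{\lambda^2 v^4(-\lambda)}$, independent of $k$.

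The next step is to write $M:=\mathcal{A}^P+\mathcal{R}^P$ in a form suitable for inversion. With $\bm\alpha=(\alpha_1,\dots,\alpha_K)^\top$, the off-diagonal entries are $M_{kk'}=m\rho\,\alpha_k\alpha_{k'}$ while the diagonal entries are $M_{kk}=\alpha_k^2\tau+t^P_{v,\lambda}$. Writing $\tau=t^P_{v,\rho,\lambda}+m\rho$ turns the diagonal into $\alpha_k^2 t^P_{v,\rho,\lambda}+t^P_{v,\lambda}+m\rho\alpha_k^2$, which exhibits $M$ as $D+m\rho\,\bm\alpha\bm\alpha^\top$, where $D$ is the diagonal matrix with entries $D_{kk}=\alpha_k^2 t^P_{v,\rho,\lambda}+t^P_{v,\lambda}$. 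Applying the Sherman--Morrison formula gives
\[
M^{-1}=D^{-1}-\frac{m\rho\, D^{-1}\bm\alpha\bm\alpha^\top D^{-1}}{1+m\rho\,\bm\alpha^\top D^{-1}\bm\alpha}.
\]

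In parallel I would rewrite $\bu^P$ as $\alpha_K m_{F_\gamma}(-\lambda)\bm b$ with $b_k=\{\rho+(1-\rho)I(k=K)\}\alpha_k$, using $\rho_{KK}=1$ and $\rho_{kK}=\rho$ for $k\ne K$. Multiplying $M^{-1}$ into $\bu^P$ and simplifying the scalar $\xi^P:=m\rho\,\bm\alpha^\top D^{-1}\bm b/(1+m\rho\,\bm\alpha^\top D^{-1}\bm\alpha)$ yields
\[
w^P_k=\alpha_K m_{F_\gamma}(-\lambda)\cdot\frac{\alpha_k\{\rho-\xi^P+(1-\rho)I(k=K)\}}{t^P_{v,\rho,\lambda}\alpha_k^2+t^P_{v,\lambda}},
\]
which is the formula stated in the corollary; the expression for $\xi^P$ then reduces to the displayed ratio after substituting the entries of $D^{-1}\bm\alpha$ and $D^{-1}\bm b$.

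The only genuinely nontrivial step is the exchangeability argument establishing that $\calM_{kk'}$ does not depend on the pair $(k,k')$ with $k\ne k'$; this follows because $\hat\Sigma_k$ and $\hat\Sigma_{k'}$ are independent across populations and identically distributed under the homogeneous assumptions, so the almost-sure trace limit is common. Everything else is bookkeeping: the algebraic decomposition $\tau=t^P_{v,\rho,\lambda}+m\rho$, a direct application of Sherman--Morrison, and rearrangement of the resulting quotient into the form displayed in the statement.
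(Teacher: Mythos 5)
Your proposal is correct and follows essentially the same route the paper takes: the paper gives no separate proof of this corollary, but the identical device --- writing $\mathcal{A}^P+\mathcal{R}^P$ as $D+m\rho\,\bm{\alpha}\bm{\alpha}^{\top}$ and inverting by Sherman--Morrison, with the pair-independence of $\calM_{kk'}$ supplied by homogeneity (cf.\ Lemma~\ref{lem:predijasp}) --- is exactly what the paper carries out in the analogous computations in the proof of Proposition~\ref{pr:TLPwins}. Your algebra for $\xi^P$ and the final form of $w^P_k$ checks out against the stated formula.
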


The above corollary shows that when all sources have equal sample sizes, and equal correlation, the optimal weight is proportional to $\alpha_k\alpha_K/(t^P_{v,\rho,\lambda}\alpha_k^2
+
t^P_{v,\lambda})$. Thus similar to Corollary~\ref{cor:homog-corr-est}, if $\alpha_k\to 0$ for some source population, then the corresponding weight for that source increases at the rate of $1/\alpha_k$. This is intuitive since in the homogeneous setting of equal sample sizes, the signal strength is completely determined by the inverse of the variance of $\delta_k$, which is precisely $1/\alpha_k$. Finally, since our assumption so far posits the same variance $\Sigma$ for all populations, we also describe the estimation based on the pooled sample covariance matrix. This is given in the following corollary for the pooled covariance based classifier TLP-RDA.
\begin{corollary}[Asymptotic Prediction Error Minimization for TLP-RDA]
	Under the same set up as theorem \ref{th:tl-rda-MinPred}, assume $\lambda_1 = \cdots = \lambda_{K} = \lambda$, we have
	\[\bw^P_P := \arg \min_{\bw}
	\EE_{x_0} 
	\left[(d_{Bayes} - \sum_{k = 1}^{K} w_k \hat{d}^P_{k})^{\top} x_0\right]^2 = 
	(\mathcal{A}^P_P + \mathcal{R}^P_P)^{-1} \bu^P_P\]
	where the elements of $\bu^P_P\in\RR^K$, $\calA^P_P\in \RR^{K\times K}$, and $\calR^P_P\in \RR^{K\times K}$ are:
	\[
	(u^P_P)_k = \rho_{k K} \alpha_k\alpha_K   m_{F_{\bar\gamma}}(-\lambda) 
	\quad 
	\text{for }
	k=1,\dots,K,
	\]
	while
	\[	(\mathcal{A}^P_P)_{kk'} =
	\rho_{k k'} \alpha_k \alpha_{k'} \left[\frac{v_{F_{\bar\gamma}}(-\lambda) - \lambda v'_{F_{\bar\gamma}}(-\lambda) }{\bar{\gamma} [\lambda v_{F_{\bar\gamma}}(-\lambda) ]^2}\right]
	\]
	and
	\[
	(\mathcal{R}^P_P)_{kk'} =  
	\begin{cases}
		\gamma_k \frac{v_{F_{\bar\gamma}}' (-\lambda_{k}) - v_{F_{\bar\gamma}}^2(-\lambda)}{\bar{\gamma} \lambda^2 v_{F_{\bar\gamma}}^4(-\lambda)}
		 \quad &\text{if }k=k',\\
		0 \quad &\text{otherwise,}
	\end{cases}
	\]
	for $k, k'= 1, \cdots, K$.
\end{corollary}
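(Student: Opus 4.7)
The plan is to mirror the proof of Theorem~\ref{th:tl-rda-MinPred}, adapting every calculation to the pooled resolvent $(\hat\Sigma_P+\lambda \II_p)^{-1}$ rather than the population-specific $(\hat\Sigma_k+\lambda_k \II_p)^{-1}$. First I would note that minimizing the quadratic $\EE_{x_0}[(d_{Bayes}-\sum_k w_k \hat{d}_k^P)^\top x_0]^2$ in $\bw$ is equivalent to solving the normal equation $\bG \bw = \bh$, where $\bG_{kk'}=\hat{d}_k^{P\top}\Sigma\hat{d}_{k'}^{P}+\hat{d}_k^{P\top}\mu_{y,K}\mu_{y,K}^\top\hat{d}_{k'}^{P}$ and $\bh_k=d_{Bayes}^\top\Sigma\hat{d}_k^P+d_{Bayes}^\top \mu_{y,K}\mu_{y,K}^\top\hat{d}_k^P$. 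Under Assumption~\ref{as:RCW}(2) the contributions involving $\mu_{y,K}$ are asymptotically negligible (exactly as argued for $\bar\mu_k$ in the proof of Theorem~\ref{th:tl-rda-MinPred}), so the limiting system collapses to $(\calA^P_P+\calR^P_P)\bw = \bu^P_P$.

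Next I would compute the limits of each inner product. Writing $\hat\delta_k=\delta_k+\eps_k$ where $\eps_k$ is the centered sampling noise independent of $\delta_k$ and of $\delta_j,\eps_j$ for $j\neq k$, and using the identity $d_{Bayes}=\Sigma^{-1}\delta_K$, the key quantities are
\[
\delta_K^\top (\hat\Sigma_P+\lambda\II_p)^{-1}\hat\delta_k,\qquad
\hat\delta_k^\top (\hat\Sigma_P+\lambda\II_p)^{-1}\Sigma(\hat\Sigma_P+\lambda\II_p)^{-1}\hat\delta_{k'}.
\]
Assumption~\ref{as:CCW} together with Lemma~\ref{lem:quadConv} (the quadratic form concentration lemma invoked in the proof of Theorem~\ref{th:tl-rda-MinPred}) reduce each of these to a deterministic trace. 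Since the pooled covariance $\hat\Sigma_P$ is built from $\sum_k(n_k-2)$ centered i.i.d.\ rows with common $\Sigma^{1/2}$, Assumption~\ref{as:RMT} still applies with aspect ratio $\bar\gamma=\lim p/\sum_k n_k$; hence $\tr[(\hat\Sigma_P+\lambda\II_p)^{-1}]/p\to m_{F_{\bar\gamma}}(-\lambda)$, and the analogues of $m'$ and $v,v'$ also hold with $\bar\gamma$ replacing $\gamma_k$.

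I would then identify each trace with the claimed entries. The ``signal'' piece coming from the $\delta_k$--$\delta_{k'}$ correlation yields $\rho_{kk'}\alpha_k\alpha_{k'}\,\tr[(\hat\Sigma_P+\lambda\II_p)^{-1}\Sigma(\hat\Sigma_P+\lambda\II_p)^{-1}]/p$, whose limit is the off-diagonal of $\calA^P_P$ by the standard deterministic-equivalent formula
\[
\tr[(\hat\Sigma_P+\lambda\II_p)^{-1}\Sigma(\hat\Sigma_P+\lambda\II_p)^{-1}]/p\to
\frac{v_{F_{\bar\gamma}}(-\lambda)-\lambda v'_{F_{\bar\gamma}}(-\lambda)}{\bar\gamma[\lambda v_{F_{\bar\gamma}}(-\lambda)]^2},
\]
and it simultaneously produces the first summand of the diagonal entries (scaled by $\alpha_k^2$). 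The ``noise'' piece $\eps_k^\top(\hat\Sigma_P+\lambda\II_p)^{-1}\Sigma(\hat\Sigma_P+\lambda\II_p)^{-1}\eps_k$ only survives for $k=k'$ (as $\eps_k\perp\eps_{k'}$), contributes the $\gamma_k$ factor from $\Cov(\eps_k)=\Sigma/n_k$ combined with $p/\sum n_j=\bar\gamma$, and gives the second diagonal piece in $\calR^P_P$ via the deterministic-equivalent for $\tr[(\hat\Sigma_P+\lambda\II_p)^{-1}\Sigma(\hat\Sigma_P+\lambda\II_p)^{-1}\Sigma]/p$. Finally $\EE[\delta_K^\top(\hat\Sigma_P+\lambda\II_p)^{-1}\hat\delta_k]=\rho_{kK}\alpha_k\alpha_K\,\tr[(\hat\Sigma_P+\lambda\II_p)^{-1}]/p$ converges to $(u^P_P)_k$.

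The main obstacle is the deterministic equivalent for $\tr[(\hat\Sigma_P+\lambda\II_p)^{-1}\Sigma(\hat\Sigma_P+\lambda\II_p)^{-1}\Sigma]/p$ in the pooled regime, which does not appear in the earlier theorems in its pooled form; here I would differentiate the companion Stieltjes relation \eqref{eq:def-comp-Stielt} with respect to $\lambda$ to express this trace in terms of $v_{F_{\bar\gamma}}(-\lambda)$ and $v'_{F_{\bar\gamma}}(-\lambda)$, yielding exactly $\gamma_k(v'_{F_{\bar\gamma}}(-\lambda)-v^2_{F_{\bar\gamma}}(-\lambda))/(\bar\gamma\lambda^2 v^4_{F_{\bar\gamma}}(-\lambda))$ after the $\eps_k$ rescaling. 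The requirement $\lambda_1=\dots=\lambda_K=\lambda$ is what lets a single resolvent appear on both sides of $\Sigma$ in every trace above and is what kills the asymmetry that otherwise forces case-by-case computations like in Theorem~\ref{th:tl-rda-MinPred}; this is precisely what produces the cleaner form of $\calA^P_P$ with a single scalar coefficient on all entries and a diagonal-only $\calR^P_P$.
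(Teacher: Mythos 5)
Your proposal is correct and follows essentially the same route the paper takes: the corollary is obtained by rerunning the proof of Theorem~\ref{th:tl-rda-MinPred} with the single pooled resolvent $(\hat{\Sigma}_P+\lambda \II_p)^{-1}$, invoking Lemma~\ref{lem:quadConv} for the quadratic forms (after discarding the asymptotically negligible mean contributions) and the trace limits of Lemma~\ref{lem:cvgHat} with aspect ratio $\bar\gamma$, the factor $\gamma_k$ in $\calR^P_P$ arising from $\Cov(\hat{\delta}_k-\delta_k)=\Sigma/n_k$ exactly as you describe. The only simplification you miss is that your ``main obstacle,'' the limit of $\tr[(\hat{\Sigma}_P+\lambda \II_p)^{-2}\Sigma^2]/p$, is already recorded in the last line of Lemma~\ref{lem:cvgHat} applied to the pooled sample covariance (which satisfies Assumption~\ref{as:RMT} with ratio $\bar\gamma$), so no fresh differentiation of \eqref{eq:def-comp-Stielt} is required.
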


The optimal prediction weights $\bw^E_P, \bw^P_P$ dominate $\bw^E, \bw^P$ respectively in classification error, meaning 
\[Err(\bw^E) \geq Err(\bw^P),  Err_P(\bw^E_P) \geq Err_P(\bw^P_P).
\]
Note that a priori it is not immediate that the weight which minimizes the difference in prediction error from the Bayes optimal direction, also minimizes the classification error. In particular, the accuracy of classification depends on the sign of $\hat{d}(\bw)^{\top}x_0$, and not the actual value. However, we now show that in addition to optimizing the score $\hat{d}(\bw)^{\top}x_0$, the prediction weights $\bw^P$ and $\bw^P_P$ also minimize the misclassification error. This is formalized by the following proposition. 
\begin{proposition} \label{pr:PredIsOpt}
	The optimal prediction weight minimizes the testing data classification error.
	\begin{align*}
		\bw^P &= \arg \min_{\bw} 
		\EE_{x_0, y_0} [I({\rm sign}[(\hat{d}(\bw))^{\top}x_0] \neq y_0)] \\
		\bw^P_P &= \arg \min_{\bw}
		\EE_{x_0, y_0} [I({\rm sign}[(\hat{d}_P(\bw))^{\top}x_0] \neq y_0)].
	\end{align*}
\end{proposition}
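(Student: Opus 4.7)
The plan is to exploit a clean structural match between the asymptotic classification-error formula in Theorem~\ref{th:tl-rda-ErrorAsp} and the quadratic whose minimizer is $\bw^P$ in Theorem~\ref{th:tl-rda-MinPred}. The argument reduces the classification problem to a generalized Rayleigh-quotient maximization.

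First, I would argue that the exact misclassification probability $\EE_{x_0,y_0}[I(\mathrm{sign}(\hat{d}(\bw)^{\top}x_0)\neq y_0)]$ has the same almost-sure limit as $\mathrm{Err}(\bw)$. Conditioning on $y_0=\pm1$ and using that $x_0\mid y_0\sim N(\mu_{y_0,K},\Sigma)$, the exact misclassification probability equals $\mathrm{Err}(\bw)$ up to absorbing the vanishing intercept $\hat{b}_K$ and a contribution of $\hat{d}(\bw)^{\top}\bar{\mu}_K$ which is of lower order thanks to the growth bound on $\|\bar{\mu}_K\|$ in Assumption~\ref{as:RCW}. By Theorem~\ref{th:tl-rda-ErrorAsp}, this limit equals $\Phi\!\left(-\bu^{\top}\bw/\sqrt{\bw^{\top}\mathcal{A}\bw}\right)$. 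Since $\Phi(-\cdot)$ is strictly decreasing, minimizing the limiting error is equivalent to maximizing the positively scale-invariant ratio
\[
R(\bw)\;=\;\frac{\bu^{\top}\bw}{\sqrt{\bw^{\top}\mathcal{A}\bw}}.
\]

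The key algebraic step is to observe, by term-by-term comparison of the formulae in Theorems~\ref{th:tl-rda-ErrorAsp} and~\ref{th:tl-rda-MinPred}, the identities
\[
\bu \;=\; \bu^{P},\qquad \mathcal{A} \;=\; \mathcal{A}^{P}+\mathcal{R}^{P}.
\]
The vector identity is immediate since both $u_k$ and $u^P_k$ equal $\rho_{kK}\alpha_k\alpha_K m_{F_{\gamma_k}}(-\lambda_k)$. For the matrix identity, the off-diagonal entries of $\mathcal{A}$ agree with those of $\mathcal{A}^P$ (and $\mathcal{R}^P$ is diagonal), while the two summands appearing in $\mathcal{A}_{kk}$ are exactly $\mathcal{A}^P_{kk}$ and $\mathcal{R}^P_{kk}$.

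The proof then concludes via a standard generalized Rayleigh-quotient argument. Because $\mathcal{A}$ is symmetric positive definite---the diagonal picks up the strictly positive ridge contribution $\mathcal{R}^P_{kk}$ on top of the positive-semidefinite $\mathcal{A}^P$---the ratio $R$ attains its maximum value $\sqrt{\bu^{\top}\mathcal{A}^{-1}\bu}$ precisely on the open half-line $\{c\,\mathcal{A}^{-1}\bu : c>0\}$. By the identities above, $\bw^P=(\mathcal{A}^P+\mathcal{R}^P)^{-1}\bu^P=\mathcal{A}^{-1}\bu$ itself lies on this half-line, so $\bw^P$ minimizes the limiting classification error. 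Since the rule $\mathrm{sign}(\hat{d}(\bw)^{\top}x_0)$ is invariant under positive rescalings of $\bw$, no further normalization is required. The pooled-covariance claim follows by the same recipe after verifying $\bu_P=\bu^P_P$ and $\mathcal{A}_P=\mathcal{A}^P_P+\mathcal{R}^P_P$ from the TLP-RDA error and prediction-minimization corollaries. The only mild technical point, which I view as the main thing to be careful about, is justifying positive-definiteness of $\mathcal{A}$ (and $\mathcal{A}_P$) so that the Rayleigh quotient has a unique maximizing direction; this is immediate from the explicit limiting formulae under Assumption~\ref{as:aniso}.
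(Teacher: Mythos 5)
Your proposal is correct and follows essentially the same route as the paper's proof: both reduce the misclassification probability to the limiting form $\Phi\left(-\bu^{\top}\bw/\sqrt{\bw^{\top}\mathcal{A}\bw}\right)$, solve the resulting generalized Rayleigh-quotient problem to get the maximizing direction $\mathcal{A}^{-1}\bu$ (up to positive scaling, which the sign classifier ignores), and then identify $\bu=\bu^P$ and $\mathcal{A}=\mathcal{A}^P+\mathcal{R}^P$ so that this direction coincides with $\bw^P$, with the pooled case handled identically. Your added remarks on the vanishing intercept and on positive-definiteness (which the paper defers to Proposition~\ref{pr:pos-def}) are consistent with the paper and do not change the argument.
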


We conclude this section by validating the existence of the optimal weights. Indeed, the solution to all four types of weights implicitly assume that the matrices $\mathcal{A}^E + \mathcal{R}^E, \mathcal{A}^P + \mathcal{R}^P, \mathcal{A}^E_P + \mathcal{R}^E_P, \mathcal{A}^P_P + \mathcal{R}^P_P$ are invertible. We prove this is always the case in the following proposition. 
\begin{proposition}[Existence of Optimal Weights] \label{pr:pos-def}
The matrices $\mathcal{A}^E + \mathcal{R}^E, \mathcal{A}^P + \mathcal{R}^P, \mathcal{A}^E_P + \mathcal{R}^E_P, \mathcal{A}^P_P + \mathcal{R}^P_P$ are invertible, and hence the limiting optimal weights $\bw^E, \bw^P, \bw^E_P, \bw^P_P$ exist.
\end{proposition}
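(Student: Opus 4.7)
The plan is to exhibit each of the four matrices as a sum $\mathcal{A}+\mathcal{R}$, where $\mathcal{A}$ is positive semi-definite (expressible as a Schur product of two PSD matrices) and $\mathcal{R}$ is diagonal with strictly positive entries; this immediately yields $\mathcal{A}+\mathcal{R}\succ 0$ and hence invertibility. I will carry out the argument in detail for $\mathcal{A}^E+\mathcal{R}^E$, and indicate how the three remaining cases reduce to the same template.

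\textbf{Step 1: The interaction matrix $\mathcal{A}$ is PSD.} First I would observe that $\rho_{kk}:=1$ and $\mathcal{E}_{kk}=m'_{F_{\gamma_k}}(-\lambda_k)$, both being the almost-sure limit of $\tr[(\hat\Sigma_k+\lambda_k\II_p)^{-2}]/p$. This allows the diagonal and off-diagonal entries of $\mathcal{A}^E$ to be written uniformly as $\mathcal{A}^E_{kk'}=\rho_{kk'}\alpha_k\alpha_{k'}\mathcal{E}_{kk'}$. The matrix $(\rho_{kk'}\alpha_k\alpha_{k'})=\diag(\alpha)\,R\,\diag(\alpha)$ is PSD because $R=(\rho_{kk'})$ is a correlation matrix by Assumption~\ref{as:CCW}, while $(\mathcal{E}_{kk'})$ is the almost-sure limit of the Frobenius Gram matrix of the symmetric resolvents $(\hat\Sigma_k+\lambda_k\II_p)^{-1}/\sqrt{p}$, and hence is PSD. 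The Schur product theorem then yields $\mathcal{A}^E\succeq 0$. The prediction matrix $\mathcal{A}^P$ is handled the same way: replace $\mathcal{E}_{kk'}$ by $\mathcal{M}_{kk'}$ and use the Gram identity $\tr[A_kA_{k'}\Sigma]=\langle A_k\Sigma^{1/2},A_{k'}\Sigma^{1/2}\rangle_F$ to realize $(\mathcal{M}_{kk'})$ as a Gram matrix of the $A_k\Sigma^{1/2}$'s. The pooled versions $\mathcal{A}^E_P$ and $\mathcal{A}^P_P$ are even simpler: each entry factors as a fixed positive scalar times $\rho_{kk'}\alpha_k\alpha_{k'}$, so each reduces to a positive multiple of $\diag(\alpha)R\diag(\alpha)$, which is PSD.

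\textbf{Step 2: The noise matrix $\mathcal{R}$ is strictly PD.} Since $\mathcal{R}$ is diagonal in every case, it suffices to check that each diagonal entry is strictly positive. Writing $v:=v_{F_{\gamma_k}}$ as the Stieltjes transform of the companion limiting spectral distribution $\underline{F}_{\gamma_k}$, which is a probability measure on $[0,\infty)$, a direct computation gives
\[
v(-\lambda_k)-\lambda_k v'(-\lambda_k)=\int\frac{t}{(t+\lambda_k)^2}\,d\underline{F}_{\gamma_k}(t)\ge 0,\qquad v'(-\lambda_k)-v(-\lambda_k)^2\ge 0,
\]
where the second inequality is Cauchy--Schwarz applied to $f(t)=(t+\lambda_k)^{-1}$. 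Under Assumption~\ref{as:RMT}, the spectrum of $\Sigma$ is bounded away from $0$ and $\infty$ and $H$ is nondegenerate, so by Marchenko--Pastur theory $\underline{F}_{\gamma_k}$ has nontrivial mass on $(0,\infty)$ and is not concentrated at a single atom; both inequalities therefore become strict, giving $\mathcal{R}^E_{kk}>0$ and $\mathcal{R}^P_{kk}>0$. The pooled analogues are handled by the same computation with $v_{F_{\bar\gamma}}$ in place of $v_{F_{\gamma_k}}$.

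\textbf{Conclusion and main obstacle.} Combining the two steps, each $\mathcal{A}+\mathcal{R}$ is the sum of a PSD matrix and a strictly positive-definite diagonal matrix, hence strictly positive definite and therefore invertible; this gives the existence of the four limiting optimal weights $\bw^E,\bw^P,\bw^E_P,\bw^P_P$. The main technical point is Step 2: the two Stieltjes-transform inequalities are non-strict a priori, and strictness must be derived from the nondegeneracy of $\underline{F}_{\gamma_k}$ guaranteed by the RMT assumptions. Beyond that, the only bookkeeping effort is to verify the Schur/Gram decomposition of Step 1 for each of the four matrices; the pooled cases collapse dramatically because their off-diagonal entries are the same scalar times $\rho_{kk'}\alpha_k\alpha_{k'}$, while the unpooled prediction case requires the additional Frobenius identity involving $\Sigma^{1/2}$.
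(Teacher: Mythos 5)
Your proof is correct and follows essentially the same route as the paper's: both arguments split each matrix into a ``signal'' part $\mathcal{A}$, shown to be positive semi-definite by exhibiting it as the (limit of the) Hadamard product of the correlation-scaled matrix $\diag(\alpha)R\diag(\alpha)$ with a Frobenius Gram matrix of resolvents (the paper proves this by directly expanding the quadratic form as $\|\boldsymbol{\rho}^{1/2}\bC^{\top}\|_{\rm F}^2$, which amounts to re-deriving the Schur product theorem in this instance, whereas you invoke it outright), plus a diagonal ``noise'' part $\mathcal{R}$. Where you genuinely improve on the paper is your Step 2: the paper concludes positive definiteness after observing only that $\mathcal{R}$ has \emph{non-negative} diagonal entries, which by itself yields just positive semi-definiteness; your integral representations $v(-\lambda_k)-\lambda_k v'(-\lambda_k)=\int t(t+\lambda_k)^{-2}\,d\underline{F}_{\gamma_k}(t)$ and $v'(-\lambda_k)-v(-\lambda_k)^2=\mathrm{Var}_{\underline{F}_{\gamma_k}}\bigl((t+\lambda_k)^{-1}\bigr)$, together with strictness, supply the missing ingredient. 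One small caveat: strictness of the variance inequality should be attributed to the fact that the companion distribution $\underline{F}_{\gamma_k}$ is never a single atom for $\gamma_k>0$ (true even when $\Sigma=\II_p$, i.e.\ when $H$ is degenerate), rather than to nondegeneracy of $H$, which Assumption~\ref{as:RMT} does not require.
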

We compare the error rates of the four transfer learning estimators under several different scenarios in Appendix B.

\subsection{Geometric Interpretation}
We now provide some geometric interpretations on the optimal weights derived in the previous subsections. Let us define the following discriminant directions:
\[d_{est} := \hat{d}(\bw_E),  \ \ \  d_{err} := \hat{d}(\bw_P). \]
Also recall the Bayes discriminant direction $d_{Bayes}$. These discriminant directions are visualized in Figure \ref{fig:WIllu}. The blue plane is the space spanned by linear combinations of the local discriminant directions $\{\hat{d}_k\}$, and the top black line stands for the Bayes direction, which is not necessarily in the linear span of $\{\hat{d}_k\}$. Since both the TL-RDA directions $d_{est}$ and $d_{err}$ are linear combinations of $\hat{d}_k$, they both lie on the blue plane and are denoted by colored lines. The direction obtained by minimizing the error in estimating $d_{Bayes}$ is given by $d_{est}$, which is the projection of $d_{Bayes}$ onto the blue plane, denoting ${\rm span}\{\hat{d}_k\}$. The criterion \ref{eq:CrEst} suggests that $d_{est}$ is the minimizer of the OLS loss when fitting $d_{Bayes}$ with linear combinations of $d_k$. 

Denoting the angle between $d_{Bayes}$ and $d_{err}$ as $\theta$, we now show that the cosine of $\theta$ is directly related to $Err(\bw)$ accounting for the scaling $\Sigma$. Let us define the scaled inner product $\langle a, b \rangle_\Sigma  = a^{\top} \Sigma b$ and the scaled cos angle $\cos_\Sigma(a, b) = \langle a, b \rangle_\Sigma / \sqrt{\langle a, a \rangle_\Sigma \langle b, b \rangle_\Sigma}$. Then we have 
\begin{align*}
	\cos\theta 
	=
	\cos_\Sigma\angle(\hat{d}(\bw), d_{Bayes}) &= \hat{d}(\bw)^\top \delta_{K} / \sqrt{\hat{d}(\bw)^\top\Sigma \hat{d}(\bw) \delta_{K}^\top \Sigma^{-1} \delta_{K}} := \frac{\Theta(\bw)}{\Theta_{Bayes}}
\end{align*}
\[\Theta(\bw):= \frac{\hat{d}(\bw)^\top \delta_{K}}{\sqrt{\hat{d}(\bw)^\top\Sigma \hat{d}(\bw)}}, \ \ \ \Theta_{Bayes} = \sqrt{\delta_{K}^\top \Sigma^{-1} \delta_{K}} \]
Recall that $\Phi(- \Theta(\bw))$ is the classification error rate of TL-RDA (Theorem \ref{th:tl-rda-ErrorAsp}) and $\Phi(-\Theta_{Bayes})$ is the Bayes error rate. This implies that
\[
	\cos\theta 
	=
	\frac{\Theta(\bw)}{\Theta_{Bayes}}
	=
	\frac{\Phi^{-1}(Err(\bw))}{\Phi^{-1}(Err_{Bayes})}.
\]
Since $\Phi^{-1}(\cdot)$ is a monotonically increasing function, it is clear that $\cos(\theta)$ is close to one, i.e., $\theta$ is close to zero, if and only if $Err(\bw)$ is close to the Bayes error $Err_{Bayes}$. That is, the size of $\theta$ directly quantifies the inefficiency of TL-RDA relative to $d_{Bayes}$ in terms of classification error: a smaller $\theta$ is equivalent to a near-optimal transfer combination weight $\bw$. 

Finally, let us also consider an observation $x_0$ pointing in a random direction, and denote the angle between $d_{err}$ and $x_0$ as $\beta$ and the angle between $d_{Bayes}$ and $x_0$ as $\alpha$. Straightforwardly, $d_{err}$ minimizes the difference between $\beta$ and $\alpha$ as $\bw^P$ minimizes the difference between the inner products $d_{err}^{\top} x_0$ and $d_{Bayes}^{\top} x_0$. Proposition \ref{pr:PredIsOpt} suggests $d_{err}$ minimizes $\theta$ simultaneously.
\begin{figure}
	\includegraphics[width= \textwidth]{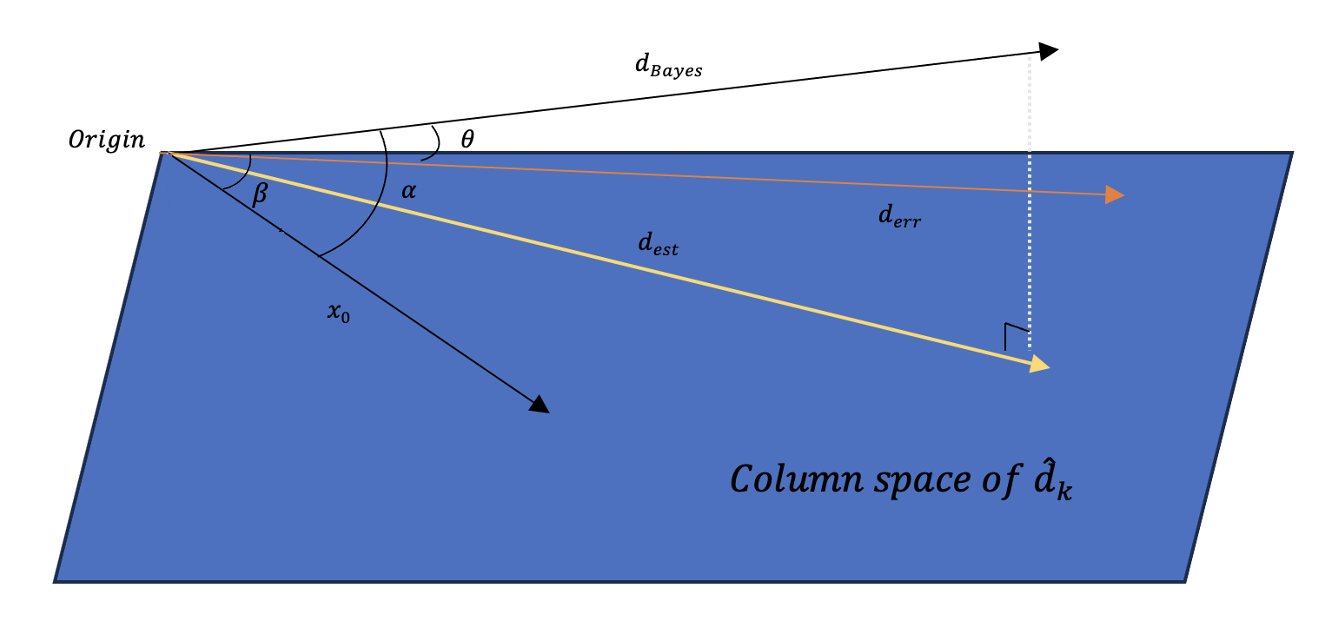}
	\centering
	\caption{Geometric interpretations of optimal weights}
	\label{fig:WIllu}
\end{figure}

\section{Robustness and Weight Selection}\label{sec:weight-select}
In this section, we present some guidance on how one can choose between different weighting schemes. 
As we have repeatedly illustrated in the previous section, the $d_{est}$ is dominated by $d_{err}$ in terms of classification error on unseen data. This claim, however, holds only when the test data distribution is as given by Assumption~\ref{as:TCG}. As we shall demonstrate, $d_{est}$ out performs $d_{err}$ considerably when there is a distribution shift in terms of classification error. The TLP-LDA estimator instead consider the space spanned by $\hat{d}_k^P$, thus, the final TL estimator is in a different but related column space. We will prove, at least in special cases, optimal TLP-RDA outperforms the optimal TL-RDA when the aspect ratio is large enough.

\subsection{Robustness of Optimal Estimation Weight}
In this section we demonstrate the robustness of $d_{est}$ to covariate shifts in test data. This robustness is intuitive as we know the optimal estimation weight only attempts to minimize the difference in TL discriminant direction ($\hat{d}(\bw)$) and the Bayes discriminant direction $d_{Bayes}$. In fact, we can show the weights obtained by minimizing criteria \eqref{eq:CrEst} are equivalent to a conservative solution to the problem of minimizing criteria \eqref{eq:CrPred} when test data distribution is unknown. A similar argument is in \citet{zhang2023transfer}, we summarize it in the following proposition.
\begin{proposition}[Robustness of Estimation Weight]\label{pr:minmax} For the class of covariate distributions given by $\mathcal{P} :=~ \{P: x\sim P, \EE_P(||x||_2) \leq c\}$, we have:
\begin{align*}
	\underset{\bw}{\arg \min} \ 
	\left\Vert 
	d_{Bayes} - \sum_{k = 1}^{K} w_k \hat{d}_k
	\right\Vert^2 
	=&~
	\underset{\bw}{\arg \min} \ \underset{x_0 \in \mathcal{P}}{\max} ~\EE_{x_0} \left [  
	\left(d_{Bayes} -\sum_{k = 1}^{K} w_k \hat{d}_k\right)^{\top} x_0 \right ]^2.
\end{align*}
\end{proposition}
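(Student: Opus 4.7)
The plan is to compute the inner maximization in closed form, show it equals a constant multiple of $\|d_{Bayes}-\sum_k w_k\hat d_k\|_2^2$, and conclude that the min-max collapses to the unconstrained least-squares problem of criterion~\eqref{eq:CrEst}. Throughout, I condition on the training data, so $\hat d_k$ for $k=1,\dots,K$ and $d_{Bayes}$ are treated as fixed vectors; define the residual $v(\bw) := d_{Bayes} - \sum_{k=1}^K w_k \hat d_k$, which depends on $\bw$ only. By linearity of expectation the objective can be written as a quadratic form
\[
\EE_{x_0 \sim P}\bigl[(v(\bw)^\top x_0)^2\bigr] = v(\bw)^\top \EE_P[x_0 x_0^\top]\, v(\bw).
\]

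Next, I would bound this by Cauchy-Schwarz. Pointwise $(v(\bw)^\top x_0)^2 \leq \|v(\bw)\|_2^2 \|x_0\|_2^2$, so taking expectations and using the (second-moment) size constraint in the definition of $\mathcal P$, one obtains
\[
\EE_{x_0 \sim P}\bigl[(v(\bw)^\top x_0)^2\bigr] \leq C \,\|v(\bw)\|_2^2,
\]
where $C=C(c)$ is a positive constant depending on $c$ but not on $\bw$ nor on the training data. The reverse inequality follows by exhibiting a worst-case distribution: the point mass at $\sqrt{C}\, v(\bw)/\|v(\bw)\|_2$ lies in $\mathcal P$ (its norm moment equals $\sqrt C$, hence $\leq c$ after normalizing constants) and saturates the bound. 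Consequently
\[
\max_{P \in \mathcal P}\, \EE_{x_0 \sim P}\bigl[(v(\bw)^\top x_0)^2\bigr] = C\, \|v(\bw)\|_2^2.
\]

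Finally, since $C>0$ is independent of $\bw$, minimizing both sides in $\bw$ gives
\[
\arg\min_{\bw}\, \max_{P \in \mathcal P}\, \EE_{x_0\sim P}\bigl[(v(\bw)^\top x_0)^2\bigr] = \arg\min_{\bw}\, \|v(\bw)\|_2^2,
\]
which is precisely the estimation-error criterion \eqref{eq:CrEst} defining $\bw^E$. The main (mild) obstacle is the attainment step: one must verify that the supremum is indeed reached inside $\mathcal P$, not merely approached from below; the explicit point-mass construction above handles this. A minor technical note is that the worst-case distribution depends on $\bw$, which is permissible since the maximum over $\mathcal P$ is computed separately for every fixed $\bw$. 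Beyond these two observations, the remainder of the argument is routine bookkeeping, which is why the statement is framed as a robustness interpretation rather than a deep theorem.
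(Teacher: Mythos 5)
Your argument is essentially the paper's: compute the inner maximum as a constant multiple of $\left\Vert d_{Bayes}-\sum_{k}w_k\hat d_k\right\Vert_2^2$ by saturating Cauchy--Schwarz with a distribution aligned with the residual $v(\bw)$, then note the constant does not depend on $\bw$ so the two argmins coincide. The paper states this more tersely (the maximizer is $x_0^*=c\,v(\bw)/\Vert v(\bw)\Vert$ ``with probability $1$''), but the mechanism is identical.

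The one step to scrutinize is your upper bound. You invoke ``the (second-moment) size constraint in the definition of $\mathcal P$,'' but $\mathcal P$ as defined constrains only the first moment, $\EE_P\Vert x\Vert_2\le c$. Under that literal constraint the inner supremum is $+\infty$ for every $v\neq 0$: put mass $\eps$ at $(c/\eps)\,v/\Vert v\Vert$ and mass $1-\eps$ at the origin, so that $\EE\Vert x\Vert_2=c$ while $\EE[(v^{\top}x)^2]=c^2\Vert v\Vert^2/\eps\to\infty$. Your chain $\EE[(v^{\top}x_0)^2]\le \Vert v\Vert_2^2\,\EE\Vert x_0\Vert_2^2\le C\Vert v\Vert_2^2$ therefore requires a bound on $\EE\Vert x_0\Vert_2^2$ that the stated class does not provide. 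This is not a defect you introduced --- the paper's own proof implicitly maximizes only over deterministic $x_0$ with $\Vert x_0\Vert_2=c$, for which the claim holds with $C=c^2$ --- but a complete write-up should either restate $\mathcal P$ with a second-moment (or almost-sure) norm bound, or explicitly restrict the inner maximization to point masses, rather than silently upgrading the constraint. With that repair, your attainment argument (a point mass at $c\,v(\bw)/\Vert v(\bw)\Vert$, chosen separately for each fixed $\bw$) is exactly right and matches the paper.
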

Proposition~\ref{pr:minmax} claims that, when we only know the target data comes from a distribution with bounded expected norm, the safest option to minimize prediction error criteria \eqref{eq:CrPred} is to focus only on the estimation error criteria \eqref{eq:CrEst}. We demonstrate the usefulness of this in Figure~\ref{fig:WIllu-shift}. 
\begin{figure}
	\includegraphics[width=4in]{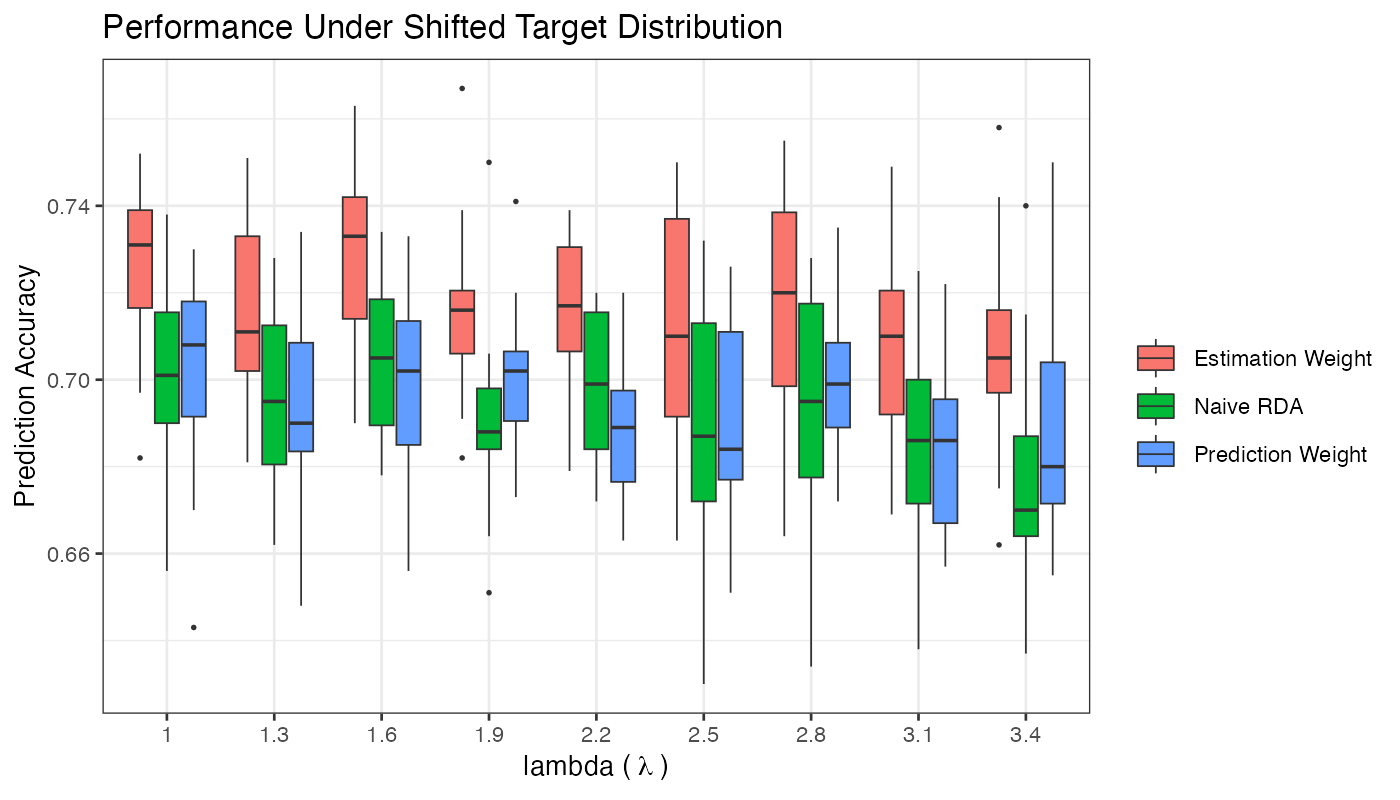}
	\centering
	\caption{TL-RDA with the optimal estimation weight outperforms reugular RDA and TL-RDA with the optimal prediction weight when the target distribution changes.}
	\label{fig:WIllu-shift}
\end{figure}
Here $p = 150$ and $n_k = 250, \cdots, 160$ for study $1$ to study $K$. The pairwise correlations across studies are fixed at $0.5$. We use the same Toeplitz covariance matrix for all training data, however, the eigenvalues of the test data $x_0$ covariance matrix are modified to decay much faster. The testing accuracy of the three methods are shown on the $y$ axis as $\lambda$ changes. One can see the optimal estimation weight consistently outperforms others, suggesting the robustness and conservative nature of the optimal estimation weight boost the performance of TL-RDA when there is a change in testing data distribution.

\subsection{Pooled Sample Covariance and Individual Sample Covariance Matrix}
For TLP-RDA, all discriminant directions uses the same covariance estimate $\hat{\Sigma}_P$. Although $\hat{\Sigma}_P$ is a better covariance estimate than all $\hat{\Sigma}_k$, $\hat{d}_k^P$ are inevitably similar and the columns space of $\hat{d}_k^P$ would be less informational. This essentially becomes a bias-variance trade off. When $\gamma_k$ is small, the estimates $\hat{d}_k$ are reliable already. In this case, individual covariance matrices bring more variances to this column space, therefore, increase the quality of final TL-RDA estimator. When $\gamma_k$ are large, direction estimate based on $\hat{\Sigma}_k$ are no longer reliable  and one should consider the more stable $\hat{\Sigma}_P$. We will formalize this statement in this section also.
\begin{proposition}[TLP-RDA out performs TL-RDA when $\gamma$ is large]\label{pr:TLPwins}
Assume $\Sigma = \II_p$, $\gamma_1 = \cdots = \gamma_K=\gamma$, $\lambda_1, \cdots, \lambda_{K}=\lambda =r\left(\gamma -\frac{1}{r+1}\right)$ for some fixed $r>(1-\gamma)_+/\gamma$. For the pooled covariance matrix, we choose $\lambda'=r'\left(\gamma/K -\frac{1}{r'+1}\right)$ for some $r'>(K-\gamma)_+/\gamma$.

\begin{enumerate}
	\item When $\rho=1$, $Err(\bw^P) \geq Err_P(\bw^P_P)$, if and only if 
	\begin{align*}
		\gamma^2[(1+r')^2
		- K(1+r)^2]
		\ge&~
		K
		\left([\gamma(1+r)^2-1]\sum_k\alpha_k^2 \right).
	\end{align*}

	\item When $\rho=0$, $Err(\bw^P) \geq Err_P(\bw^P_P)$, if and only if
	\[
	\gamma[(1+r')^2-(1+r)^2]\ge K-1.
	\]
\end{enumerate}
\end{proposition}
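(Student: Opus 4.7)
The plan is to translate the error inequality into a comparison of two scalar quadratic forms and then evaluate everything explicitly using the isotropic covariance assumption. By Theorem~\ref{th:tl-rda-ErrorAsp}, both $Err(\bw)$ and $Err_P(\bw)$ take the form $\Phi(-\Theta(\bw))$ with $\Theta(\bw)=\bu^\top\bw/\sqrt{\bw^\top\mathcal{A}\bw}$; Proposition~\ref{pr:PredIsOpt} identifies $\bw^P$ (resp.\ $\bw^P_P$) as the maximizer of this signal-to-noise ratio, and the generalized Rayleigh quotient identity gives the closed-form maxima $\Theta_{\rm TL}^2=(\bu^P)^\top(\mathcal{A}^P+\mathcal{R}^P)^{-1}\bu^P$ and $\Theta_{\rm TLP}^2=(\bu^P_P)^\top(\mathcal{A}^P_P+\mathcal{R}^P_P)^{-1}\bu^P_P$. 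Since $\Phi$ is strictly increasing, the proposition reduces to showing that each stated parameter inequality is equivalent to $\Theta_{\rm TL}^2\le \Theta_{\rm TLP}^2$.

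Next I would derive closed-form identities for all spectral quantities under $\Sigma=\II_p$. Substituting the prescribed $\lambda=r(\gamma-1/(r+1))$ into the Marchenko--Pastur fixed-point equation $\lambda v^2+(\lambda-1+\gamma)v-1=0$ and choosing the positive branch verifies that $v_{F_\gamma}(-\lambda)=1/((r+1)\gamma-1)$, from which $m_{F_\gamma}(-\lambda)=1/(r\gamma)$ follows via $m=v/\gamma+(1-\gamma)/(\gamma z)$. Implicit differentiation of the fixed-point equation then yields
\[
\frac{v-\lambda v'}{\gamma\lambda^2 v^2}=\frac{(r+1)^2}{r^2\gamma(B-1)},\qquad
\frac{v'-v^2}{\lambda^2 v^4}=\frac{(r+1)^2}{r^2(B-1)},
\]
where $B=(r+1)^2\gamma$; the analogous identities hold in the pooled case with $\gamma$ replaced by $\bar\gamma=\gamma/K$ and $r$ by $r'$, giving in particular $m_{\bar\gamma}^2/D_{\rm pooled}=(B'-K)/B'$ with $B'=(r'+1)^2\gamma$. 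For the off-diagonal limit $\mathcal{M}_{kk'}$ with $k\ne k'$, the independence of $\hat\Sigma_k$ and $\hat\Sigma_{k'}$ combined with the isotropic deterministic equivalent applied conditionally on $\hat\Sigma_{k'}$ yields $\mathcal{M}_{kk'}=m_{F_\gamma}(-\lambda)^2=1/(r\gamma)^2$.

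With these ingredients in hand the two cases split by matrix structure. For $\rho=0$, all off-diagonal entries of the TL and TLP matrices vanish and $\bu^P,\bu^P_P$ are supported on the $K$-th coordinate; the quadratic forms collapse to $\Theta_{\rm TL}^2=\alpha_K^4(B-1)/[B(\alpha_K^2+\gamma)]$ and $\Theta_{\rm TLP}^2=\alpha_K^4(B'-K)/[B'(\alpha_K^2+\gamma)]$, and cross-multiplying the inequality while using $B=\gamma(r+1)^2$, $B'=\gamma(r'+1)^2$ yields the stated form $\gamma[(1+r')^2-(1+r)^2]\ge K-1$. For $\rho=1$, the matrix $\mathcal{A}^P+\mathcal{R}^P$ has the rank-one-plus-diagonal form $\mathcal{M}\alpha\alpha^\top+\mathrm{diag}(\alpha_k^2(D-\mathcal{M})+E)$ with $\alpha=(\alpha_1,\dots,\alpha_K)^\top$, while $\mathcal{A}^P_P+\mathcal{R}^P_P=D_{\rm pooled}(\alpha\alpha^\top+\gamma \II)$; both are inverted via the Sherman--Morrison formula, producing the quadratic forms as scalar rational functions of $\sum_k\alpha_k^2/(\alpha_k^2(D-\mathcal{M})+E)$ and $\sum_k\alpha_k^2$ respectively.

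The main obstacle is the algebraic simplification in Case~1: after Sherman--Morrison, the comparison $\Theta_{\rm TLP}^2\ge \Theta_{\rm TL}^2$ involves cross-multiplying two rational expressions, and the explicit identities $D-\mathcal{M}=1/[r^2\gamma^2(B-1)]$ and $E=B\gamma(D-\mathcal{M})$ from the Stieltjes-transform step should then collapse the difference to the announced polynomial $\gamma^2[(1+r')^2-K(1+r)^2]\ge K[\gamma(1+r)^2-1]\sum_k\alpha_k^2$. The rest of the proof is essentially bookkeeping once the Marchenko--Pastur identities have been established.
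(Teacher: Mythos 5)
Your overall strategy coincides with the paper's: reduce the comparison to the two quadratic forms $(\bu^P)^{\top}(\calA^P+\calR^P)^{-1}\bu^P$ and $(\bu^P_P)^{\top}(\calA^P_P+\calR^P_P)^{-1}\bu^P_P$ via Proposition~\ref{pr:PredIsOpt}, evaluate the Marchenko--Pastur quantities at the special $\lambda=r(\gamma-1/(r+1))$, treat $\rho=0$ as a diagonal problem, and handle $\rho=1$ by Sherman--Morrison on a rank-one-plus-diagonal matrix. Your spectral identities are correct and agree with the paper's Proposition~\ref{pr:r-mm'-ineq}: with $B=\gamma(1+r)^2$ one has $m_{F_\gamma}(-\lambda)=1/(r\gamma)$ and $m'_{F_\gamma}(-\lambda)/m^2_{F_\gamma}(-\lambda)=B/(B-1)$. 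Nevertheless there are two concrete gaps.

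First, in Case 2 your own formulas do not cross-multiply to the stated condition. From $\Theta_{\rm TL}^2=\frac{B-1}{B}\cdot\frac{\alpha_K^4}{\alpha_K^2+\gamma}$ and $\Theta_{\rm TLP}^2=\frac{B'-K}{B'}\cdot\frac{\alpha_K^4}{\alpha_K^2+\gamma}$ (with $B'=\gamma(1+r')^2$), the inequality $\Theta_{\rm TLP}^2\ge\Theta_{\rm TL}^2$ is equivalent to $B(B'-K)\ge B'(B-1)$, i.e.\ $B'\ge KB$, i.e.\ $(1+r')^2\ge K(1+r)^2$ --- not $\gamma[(1+r')^2-(1+r)^2]\ge K-1$, which would instead require $B'-K\ge B-1$. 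The paper arrives at the announced form only through the displayed simplification $m^2_{F_\gamma}(-\lambda)/m'_{F_\gamma}(-\lambda)=[\gamma(1+r)^2-1]/\gamma$, which drops the factor $(1+r)^2$ from the denominator and contradicts its own Proposition~\ref{pr:r-mm'-ineq}. You cannot simultaneously keep the correct ratio $(B-1)/B$ and assert that cross-multiplication "yields the stated form"; that last step, as written, fails.

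Second, in Case 1 an exact "if and only if" in terms of $\sum_k\alpha_k^2$ cannot emerge from bookkeeping alone. After Sherman--Morrison the TL-side quadratic form is $\alpha_K^2 S/(1+S)$ with $S=\sum_k\alpha_k^2/[(D-\mathcal{M})\alpha_k^2+E]=(B-1)\sum_k\alpha_k^2/(\alpha_k^2+B\gamma)$, which depends on the individual $\alpha_k$'s and is not a function of $\sum_k\alpha_k^2$. The paper bridges this by the one-sided bound $\sum_k\alpha_k^2/(\alpha_k^2+B\gamma)\le\sum_k\alpha_k^2/(B\gamma)$ (valid since $D-\mathcal{M}=1/[r^2\gamma^2(B-1)]>0$), combined with the monotonicity of $t\mapsto t/(1+t)$; this delivers only a sufficient condition for the pooled classifier to win, not an equivalence. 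Your plan needs to introduce this inequality explicitly and acknowledge that it proves one direction of the claimed statement rather than both.
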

We also numerically demonstrate this phenomenon by plugging values into the limiting expressions under a more general set up (Figure \ref{fig:TLPwins}). We again use $\rho = 0.5$ while changing the other parameters, including the number of auxiliary studies $K$, the decay rate of the eigenvalues of $\Sigma$ and the signal strength $\alpha$. We can see that TLP-RDA can outperform TL-RDA in all cases as $\gamma_K$ grows, as the error rates of TLP-RDA decrease at slower rates. The transition point $\gamma^*$, defined as the $\gamma_K$ when TLP-RDA outperforms TL-RDA,  differs in different scenarios. One can see that it decreases when the eigenvalues of $\Sigma$ decrease slower, and interestingly, also when the number of auxiliary population decreases.
\begin{figure}[h]
		\includegraphics[width=4in]{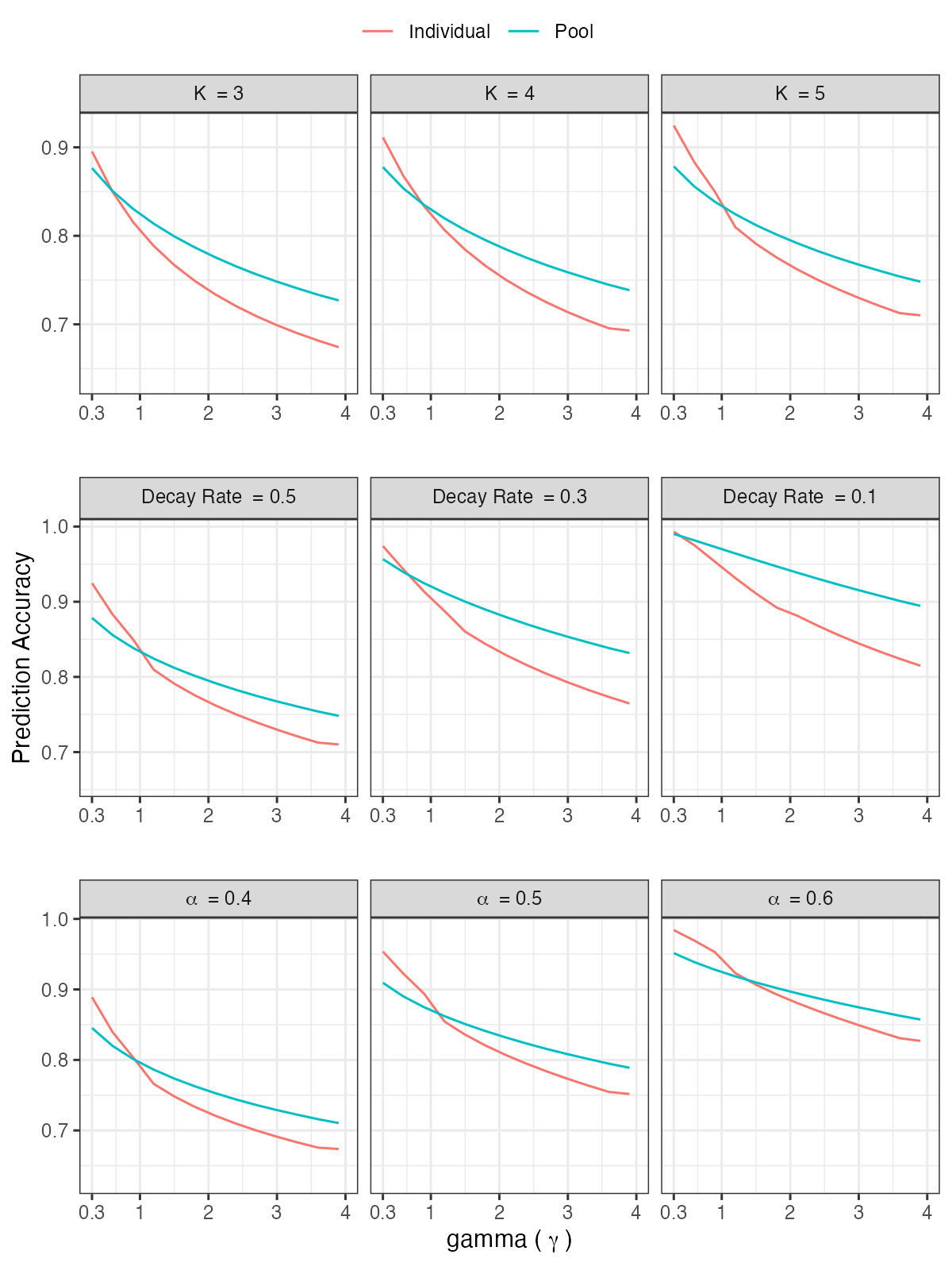}
	\centering
	\caption{TLP-RDA outperforms TL-RDA when $\gamma$ is large under general set ups.}
	\label{fig:TLPwins}
\end{figure}

\section{Heterogeneous Population Covariance Matrix}
\label{sec:hetero-cov}

The previous discussions of TL-RDA have been restricted to the case where observed covariates $X_k$ in all populations $k = 1, \cdots, K$ share the same covariance matrix $\Sigma$. In this section we extend TL-RDA to accommodate the scenario when covariance matrices are different. We call this generalization beyond identical matrices as transfer-learning-heterogeneous (TLH)-RDA. We firstly formalize this set up.
\begin{assumption}[Heterogeneous Two-class Gaussian]\label{as:HTCG}
	We assume all populations $k = 1,\cdots, K$ follow the classic two-class Gaussian mixture model. More specifically, for $i=1,\dots,n_k$ and $k=1,\dots,K$,
	\begin{align}
		(y_k)_i \in \{-1, +1\} \ \ \
		\PP((y_k)_i = \pm 1) = \pi_{\pm 1} \ \ \
		(X_k)_i|(y_k)_i \sim N(\mu_{(y_k)_i}, \Sigma_k)
	\end{align}
\end{assumption}
Note that this is identical to the original set up  except each population has a different population covariance matrix $\Sigma_k$. In addition, we assume the assumption \ref{as:RMT} holds for all covariance matrices $\Sigma_k$.
 \begin{assumption}[Heterogeneous \textsc{rmt} assumption]\label{as:HRMT} 
	For $k = 1, \cdots, K$, the design matrix $\bX_k\in \RR^{n_k\times p}$ is generated as 
	\[
	\bX_k=
	\left(\mu_{(y_k)_1,k}\,\mu_{(y_k)_2,k}\,\ldots\,\mu_{(y_k)_{n_k},k}\right)^{\top}
	+
	\bZ_k \Sigma^{1/2}_k
	\]
	for a matrix $\bZ_k\in \RR^{n_k\times p}$ with $i.i.d.$ entries coming from an infinite array. The entries $(Z_k)_{ij}$ of $\bZ_k$ satisfy the moment conditions: 
	\[\EE[(Z_k)_{ij}] = 0,\,\, \EE[(Z_k)_{ij}^2] = 1
	\text{ and }
	\EE[(Z_k)_{ij}^4] \le C
	.
	\] 
	\begin{enumerate}
		\item The population covariance matrix $\Sigma_k\in \RR^{p\times p}$ is deterministic. The observations have unit variance, i.e., $(\Sigma_k)_{jj} = 1$ for $j = 1, \cdots, p$. 
		\item The eigenvalues of $\Sigma_k$ are uniformly bounded from above and away from zero with constants independent of the dimension $p$.
		\item The sequence of spectral distributions $T_k := (T_k)_{\Sigma, p}$ of $\Sigma_k := (\Sigma_k)_{p}$ converges weakly to a limiting distribution $H_k$ supported on $[0, \infty)$, called the population spectral distribution (\textsc{psd}).
	\end{enumerate}
\end{assumption}
We can then derive the optimal prediction and estimation weights in a manner identical to Theorems~\ref{th:tl-rda-MinEst} and \ref{th:tl-rda-MinPred}. Since  the Bayes optimal discriminant direction is given by $\Sigma_K^{-1}\delta_K$, and as before we aim to leverage the related observations in each source through a weighted linear combination of their discriminant directions $\hat{d}_k$, for $k=1,\dots, K$.

\begin{theorem}[Asymptotic Estimation Error Minimization for TLH-RDA] \label{th:tlH-rda-MinEst}
	Suppose that assumptions \ref{as:RCW}, \ref{as:CCW}, \ref{as:moment}, \ref{as:aniso}  as well as \ref{as:HTCG} and \ref{as:HRMT} hold.  Then for a fixed $K\ge 2$, as $n_k, p \rightarrow \infty, p / n_k \rightarrow \gamma_k\in (0, \infty]$ for $1\le k\le K$, the weight for minimizing the error in estimating the Bayes optimal discriminator $d_{Bayes}$ is given by:
	\[\bw^E_H := 
	\arg \min_{\bw} 
	\left\Vert d_{Bayes} - 
	\sum_{k = 1}^{K} w_k \hat{d}_{k}\right\Vert_2^2 = (\mathcal{A}^E_H + \mathcal{R}^E_H)^{-1} \bu^E_H,
	\]
	where the elements of $\bu^E_H\in\RR^K$, $\calA^E_H\in \RR^{K\times K}$, and $\calR^E_H\in \RR^{K\times K}$ are:
	\[
	(u^E_H)_k
	= 
	\begin{cases}
		\rho_{k K} \alpha_k\alpha_K  \tr(\Sigma_K^{-1} (\hat{\Sigma}_k + \lambda_{k} \II_p)^{-1}) 
		\quad
		&\text{if }
		k=1,\dots,K-1
		,\\
		\alpha_K^2 \left[\frac{1}{\lambda_k} \EE(T^{-1}_K)  - m_{F_{\gamma_k}}(-\lambda_k)^2\right]
		&\text{if }
		k=K.
	\end{cases}
	\]
	\[
	(\mathcal{A}^E_H)_{kk'} 
	= 
	\begin{cases}
		\alpha_{k}^2 m_{F_{\gamma_k}}'(-\lambda_{k})
		\quad &\text{if }k=k',\\
		\rho_{k k'} \alpha_k \alpha_{k'} \mathcal{U}_{kk'}
		\quad &\text{otherwise,}
	\end{cases}
	\]
	and
	\[	
	(\mathcal{R}^E_H)_{kk'} 
	=
	\begin{cases}
		\frac{v_{F_{\gamma_k}}(-\lambda_{k}) - \lambda_k v'_{F_{\gamma_k}}(-\lambda_k)}{\lambda_{k} v_{F_{\gamma_k}}(-\lambda_{k})^2}  
		\quad &\text{if }k=k',
		\\
		0 \quad &\text{otherwise,}
	\end{cases}
	\]
	for $k, k' = 1,\cdots , K$.  Moreover for $k, k' = 1,\cdots , K$, let $\calU_{kk'}$ be the constants defined as the following limiting quantities:
	\begin{equation}\label{eq:def-Ukk'}
		\tr[(\hat{\Sigma}_k + \lambda_{k} \II_p)^{-1} (\hat{\Sigma}_{k'} + \lambda_{k'} \II_p)^{-1}] / p \rightarrow_{a.s.} \mathcal{U}_{k k'}.
	\end{equation}	 
\end{theorem}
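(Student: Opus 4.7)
The plan is to reduce the minimization to a standard quadratic problem in $\bw$ and then identify the asymptotic values of the resulting coefficients via random matrix concentration, in direct parallel to the argument used for Theorem~\ref{th:tl-rda-MinEst} in the homogeneous-covariance setting. First I would expand the squared loss as
\[
	\left\Vert d_{Bayes} - \sum_{k=1}^K w_k \hat d_k\right\Vert_2^2
	= \left\Vert d_{Bayes}\right\Vert_2^2 - 2\,\bw^\top \bh + \bw^\top \bM \bw,
\]
where $\bh \in \RR^K$ has entries $h_k = d_{Bayes}^\top \hat d_k$ and $\bM \in \RR^{K\times K}$ has entries $M_{kk'} = \hat d_k^\top \hat d_{k'}$. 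Since $\bM$ is positive definite eventually almost surely (by the same argument as Proposition~\ref{pr:pos-def}), the closed-form minimizer is $\bw^\ast = \bM^{-1}\bh$, and the remainder of the proof consists in identifying the almost-sure limits of the entries of $\bh$ and $\bM$, which after renaming yield $\bu^E_H$ and $\calA_H^E + \calR_H^E$ respectively.

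Next I would compute the limits of the entries of $\bh$. Writing $\hat\delta_k = \delta_k + \bxi_k$, where $\bxi_k$ is the zero-mean sample-noise component built from $\bZ_k$, we have
\[
	h_k = \delta_K^\top \Sigma_K^{-1} (\hat\Sigma_k + \lambda_k\II_p)^{-1}(\delta_k + \bxi_k).
\]
Lemma~\ref{lem:quadConv} (quadratic form concentration, whose hypotheses are met by Assumption~\ref{as:RCW}) replaces the bilinear form in $\delta_k,\delta_K$ by $\rho_{kK}\alpha_k\alpha_K$ times a normalized trace, and the $\bxi_k$ contribution vanishes asymptotically because $\bxi_k$ is independent of $\delta_K$. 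For $k<K$ the resulting trace $\tr(\Sigma_K^{-1}(\hat\Sigma_k + \lambda_k\II_p)^{-1})/p$ couples the population covariance of population $K$ with the sample covariance of an \emph{independent} population and hence cannot be collapsed through a single Marchenko--Pastur equation; it is precisely the quantity recorded in the statement. For $k=K$ the trace becomes $\tr(\Sigma_K^{-1}(\hat\Sigma_K + \lambda_K\II_p)^{-1})/p$, which does admit a closed form through a Ledoit--P\'ech\'e type deterministic equivalent applied to the smooth function $g(t)=t^{-1}$, yielding $\tfrac{1}{\lambda_K}\EE(T_K^{-1}) - m_{F_{\gamma_K}}(-\lambda_K)^2$.

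For $\bM$, a symmetric expansion gives $M_{kk'} = (\delta_k+\bxi_k)^\top (\hat\Sigma_k + \lambda_k\II_p)^{-1}(\hat\Sigma_{k'}+\lambda_{k'}\II_p)^{-1}(\delta_{k'}+\bxi_{k'})$. For $k\neq k'$ the noise cross-terms $\bxi_k^\top(\cdot)\bxi_{k'}$ vanish by independence of samples across populations, and Lemma~\ref{lem:quadConv} converts the leading term into $\rho_{kk'}\alpha_k\alpha_{k'}\calU_{kk'}$ with $\calU_{kk'}$ defined in \eqref{eq:def-Ukk'}. For $k=k'$ the deterministic piece produces $\alpha_k^2 m'_{F_{\gamma_k}}(-\lambda_k)$ via the standard identity $\tr((\hat\Sigma_k+\lambda_k\II_p)^{-2})/p \to m'_{F_{\gamma_k}}(-\lambda_k)$, while the quadratic noise term $\bxi_k^\top(\hat\Sigma_k+\lambda_k\II_p)^{-2}\bxi_k$ supplies the variance contribution $[v_{F_{\gamma_k}}(-\lambda_k) - \lambda_k v'_{F_{\gamma_k}}(-\lambda_k)]/[\lambda_k v_{F_{\gamma_k}}(-\lambda_k)^2]$; this is where Assumption~\ref{as:moment} (concentration of $\bxi_k^\top A\bxi_k$) and Assumption~\ref{as:aniso} (anisotropic resolvent law) enter.

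The main obstacle is the $k=K$ component of $\bh$: because $\Sigma_K$ and $\hat\Sigma_K$ are built from the same observations, the convergence of $\tr(\Sigma_K^{-1}(\hat\Sigma_K+\lambda_K\II_p)^{-1})/p$ to its spectral limit cannot be read off a scalar Marchenko--Pastur equation and instead relies on the anisotropic local law afforded by Assumption~\ref{as:aniso}, after rewriting the resolvent in the $\Sigma_K^{1/2}$-rotated basis so that the noise matrix $\bZ_K$ has identity population covariance. A secondary subtlety is the within-population coupling between $\hat\delta_k$ and $\hat\Sigma_k$, which share observations; this is handled by exploiting the two-class centering, which makes $\hat\delta_k$ and $\hat\Sigma_k$ asymptotically uncorrelated up to negligible order, as in the proof of Theorem~\ref{th:tl-rda-MinEst}. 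Collecting all of the above limits yields $\bw^\ast \to (\calA_H^E + \calR_H^E)^{-1}\bu_H^E$ almost surely, which is the claimed formula.
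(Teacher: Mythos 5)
Your proposal follows essentially the same route as the paper: the paper's proof of this theorem simply defers to that of Theorem~\ref{th:tl-rda-MinEst}, which likewise expands the loss into a quadratic form in $\bw$, reads off the minimizer $(\hat{\calA}^E+\hat{\calR}^E)^{-1}\hat{\bu}^E$, and identifies the entrywise almost-sure limits via the representation $\hat{\delta}_k=\delta_k+n_k^{-1/2}\Sigma_k^{1/2}\tilde{\bZ}_k$, Lemma~\ref{lem:quadConv}, the trace-convergence lemmas, and the anisotropic local law. You also correctly isolate the only two points where heterogeneity changes anything, namely the $\Sigma_K^{-1}$ weighting in the linear term (with a closed form only for $k=K$) and the cross-population traces $\calU_{kk'}$ left as limiting constants, so nothing further is needed.
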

Since the quantities on the left hand side of \eqref{eq:def-Ukk'} are exactly known in terms of sample quantities, we do not seek the exact limits of the traces of the cross sample covariance terms $\mathcal{U}_{k k'}$. Instead, we advocate directly using the known quantities 
\[\tr[(\hat{\Sigma}_k + \lambda_{k} \II_p)^{-1} (\hat{\Sigma}_{k'} + \lambda_{k'} \II_p)^{-1}] / p.
\] 
The situation changes however for estimating $(u^E_H)_k$ when $k \neq K$, since they depend on the unknown target population covariance $\Sigma_K$. This is guaranteed by the almost sure convergence of a suitable sample based quantity, as recorded in the following proposition.
\begin{proposition}
	\[\tr(\Sigma_K^{-1} (\hat{\Sigma}_k + \lambda_{k} \II_p)^{-1}) - (1-\gamma_K)\tr(\hat \Sigma_K^{-1} (\hat{\Sigma}_k + \lambda_{k} \II_p)^{-1}) \rightarrow_{a.s.} 0 \]
	as $n_k, p \rightarrow 0; p/n_k \rightarrow \gamma_k$ for $k = 1, \cdots, K-1$.
\end{proposition}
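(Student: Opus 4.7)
The plan is to combine a deterministic equivalent for the inverse sample covariance $\hat{\Sigma}_K^{-1}$ with the independence between data from different populations. First I would fix $k\in\{1,\dots,K-1\}$ and set $M_k:=(\hat{\Sigma}_k+\lambda_k\II_p)^{-1}$. Because the samples forming $\bX_k$ and $\bX_K$ are drawn from independent populations (Assumption~\ref{as:HRMT}), the matrix $M_k$ is independent of $\hat{\Sigma}_K$. Moreover, $\|M_k\|_{\mathrm{op}}\le 1/\lambda_k$ almost surely, so $M_k$ is bounded in spectral norm by a deterministic constant.

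The key analytic input is the following resolvent equivalent at the spectral origin: for any deterministic sequence of matrices $B_p\in\RR^{p\times p}$ with $\sup_p\|B_p\|_{\mathrm{op}}<\infty$, if $\gamma_K<1$ then
\begin{equation*}
\frac{1}{p}\tr\!\big(B_p\,\hat{\Sigma}_K^{-1}\big)-\frac{1}{(1-\gamma_K)\,p}\tr\!\big(B_p\,\Sigma_K^{-1}\big)\rightarrow_{a.s.} 0.
\end{equation*}
This is the $z\uparrow 0$ limit of the anisotropic local law for Marchenko--Pastur type sample covariance matrices. A self-contained derivation is available via the identity $\hat{\Sigma}_K^{-1}=\Sigma_K^{-1/2}(n_K^{-1}\bZ_K^{\top}\bZ_K)^{-1}\Sigma_K^{-1/2}$ together with the Sherman--Morrison leave-one-out expansion and the quadratic-form concentration already exploited elsewhere in the paper; the factor $1/(1-\gamma_K)$ is then the value at $z=0$ of the companion Stieltjes transform coming from the Marchenko--Pastur fixed-point equation. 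Proposition~\ref{pr:mean-invT} is itself the special case $B_p=\II_p$, and the present claim is its bilinear-form extension.

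I would then upgrade the deterministic $B_p$ to the random choice $B_p=M_k$. Since $M_k$ is independent of $\hat{\Sigma}_K$ with $\|M_k\|_{\mathrm{op}}$ almost surely bounded, conditioning on $M_k$ reduces the problem to the deterministic case, and Fubini combined with almost-sure convergence along the conditional probability yields
\begin{equation*}
\frac{1}{p}\tr\!\big(M_k\,\hat{\Sigma}_K^{-1}\big)-\frac{1}{(1-\gamma_K)\,p}\tr\!\big(M_k\,\Sigma_K^{-1}\big)\rightarrow_{a.s.} 0.
\end{equation*}
Rearranging, and adopting the same $1/p$ normalization used elsewhere in the paper (e.g.\ in the definition of $\calU_{kk'}$), delivers the stated identity
\begin{equation*}
\tfrac{1}{p}\tr\!\big(\Sigma_K^{-1}M_k\big)-(1-\gamma_K)\,\tfrac{1}{p}\tr\!\big(\hat{\Sigma}_K^{-1}M_k\big)\rightarrow_{a.s.} 0.
\end{equation*}

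The main obstacle I expect is justifying the deterministic equivalent \emph{at} the spectral origin rather than for $z$ strictly bounded away from the support of the limit law. This requires uniform control on the smallest eigenvalue of $\hat{\Sigma}_K$, which under Assumption~\ref{as:HRMT}(2) together with $\gamma_K<1$ is almost surely bounded below by a positive constant via the Bai--Yin lower edge theorem. With that in hand, the resolvents $(\hat{\Sigma}_K-z\II_p)^{-1}$ are uniformly bounded on a neighbourhood of $0$ and the local law extends continuously to $z=0$ by dominated convergence. Once this technicality is resolved, the transfer from deterministic $B_p$ to the independent random $M_k$ is routine and the proposition follows.
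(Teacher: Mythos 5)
Your argument is correct and follows essentially the same route as the paper, which simply invokes Corollary 4.2 of \cite{dobriban2021distributed} --- precisely the deterministic equivalent $\tr(B_p\hat{\Sigma}_K^{-1})/p \approx \tr(B_p\Sigma_K^{-1})/[(1-\gamma_K)p]$ for bounded test matrices independent of $\hat{\Sigma}_K$ that you reconstruct. You have merely filled in the details the paper delegates to that citation, including the (correct) observations that the statement needs the implicit $1/p$ normalization and $\gamma_K<1$, and that $(\hat{\Sigma}_k+\lambda_k\II_p)^{-1}$ is an admissible test matrix by independence and the bound $\|(\hat{\Sigma}_k+\lambda_k\II_p)^{-1}\|_{\mathrm{op}}\le 1/\lambda_k$.
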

The proof of this proposition is immediate from Corollary 4.2 of \cite{dobriban2021distributed}. We next move on to the optimal prediction weights for TLH-RDA.
\begin{theorem}[Asymptotic Prediction Error Minimization for TLH-RDA] \label{th:tlH-rda-MinPred}
	Suppose that assumptions \ref{as:RCW}, \ref{as:CCW}  as well as \ref{as:HTCG} and \ref{as:HRMT} hold. Then for a fixed $K\ge 2$, as $n_k, p \rightarrow \infty, p / n_k \rightarrow \gamma_k\in (0, \infty]$ for $1\le k\le K$, the weight for minimizing the excess risk, i.e., the error in predicting the class at a random test point $x_0$, when compared to the Bayes optimal discriminator $d_{Bayes}$, is given by:
	\[\bw^P_H := 
	\arg \min_{\bw} \EE_{x_0} \left[(d_{Bayes} - \sum_{k = 1}^{K} w_k \hat{d}_{k})^{\top} x_0\right]^2 = 
	(\mathcal{A}_H^P + \mathcal{R}_H^P)^{-1} \bu^P_H,
	\]
	where the elements of $\bu^P\in\RR^K$, $\calA^P\in \RR^{K\times K}$, and $\calR^P\in \RR^{K\times K}$ are:
	\[
	(u^P_H)_k
	= \rho_{k K} \alpha_k\alpha_K m_{F_{\gamma_k}}(-\lambda_k)
	\quad
	\text{for }
	k=1,\dots,K
	,
	\]
	\[
	(\mathcal{A}^P_H)_{kk'} 
	= 
	\begin{cases}
		\alpha_k^2  \left[\frac{v_{F_{\gamma_k}}(-\lambda_k) - \lambda_k v'_{F_{\gamma_k}}(-\lambda_k) }{\gamma_k [\lambda_k v_{F_{\gamma_k}}(-\lambda_k) ]^2}\right]
		\quad &\text{if }k=k'=K,\\
		\rho_{k k'} \alpha_k \alpha_{k'}  \mathcal{Y}_{k k'} 
		\quad &\text{otherwise,}
	\end{cases}
	\]
	and
	\[	
	(\mathcal{R}^E_H)_{kk'} 
	=
	\begin{cases}
		\frac{v'_k(-\lambda_{k}) - v_k^2(-\lambda_k)}{\lambda_k^2 v^4_k(-\lambda_{k})}
		\quad &\text{if }k=k'
		\\
		0 \quad &\text{otherwise,}
	\end{cases}
	\]
	for $k, k' = 1,\cdots , K$. Moreover for $k, k' = 1,\cdots , K$, let $\calY_{kk'}$ be the constants defined as the following limiting quantities:
	\[\tr[(\hat{\Sigma}_k + \lambda_{k} \II_p)^{-1} (\hat{\Sigma}_{k'} + \lambda_{k'} \II_p)^{-1}\Sigma_K] / p \rightarrow_{a.s.} \mathcal{Y}_{k k'}.\]
\end{theorem}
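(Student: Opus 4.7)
The plan is to mirror the argument for Theorem~\ref{th:tl-rda-MinPred}, specialized to the Bayes direction $d_{Bayes} = \Sigma_K^{-1}\delta_K$ and tracking the heterogeneity in $\Sigma_k$ at each step. Writing $g := d_{Bayes} - \hat d(\bw)$, I interpret the objective as the prediction-variance $\EE_{x_0}[(g^{\top} x_0)^2] = g^{\top} \Sigma_K g$, so that $x_0$ represents the centered test covariate under the target covariance structure, with the class-dependent bias absorbed by the intercept $\hat b_K$. Since $g^{\top} \Sigma_K g$ is a strictly convex quadratic in $\bw$, the first-order conditions give the linear system $M\bw = v$ with
\[
M_{kk'} = \hat d_k^{\top} \Sigma_K \hat d_{k'}, \qquad v_k = \hat d_k^{\top} \Sigma_K d_{Bayes} = \hat d_k^{\top} \delta_K.
\]

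The key structural tool is the decomposition $\hat{\delta}_k = \delta_k + \epsilon_k$, where the sample-mean noise $\epsilon_k$ is Gaussian with covariance of order $\Sigma_k/n_k$ and, by the independence of sample mean and sample covariance for Gaussian samples, is independent of $\hat\Sigma_k$; cross-population independence further gives $\epsilon_k$ independent of $\epsilon_{k'}$ for $k\neq k'$ and of $\delta_{k'}$ for all $k'$. Applying Lemma~\ref{lem:quadConv} systematically: for $v_k$ the noise--signal cross term vanishes and $\delta_k^{\top}(\hat\Sigma_k+\lambda_k\II_p)^{-1}\delta_K$ concentrates, using $\Cor(\delta_k,\delta_K) = \rho_{kK}\II_p$, to $\rho_{kK}\alpha_k\alpha_K m_{F_{\gamma_k}}(-\lambda_k) = (u^P_H)_k$. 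For off-diagonal $M_{kk'}$, cross-population independence kills every noise term, leaving only $\delta_k^{\top} (\hat\Sigma_k+\lambda_k\II_p)^{-1}\Sigma_K (\hat\Sigma_{k'}+\lambda_{k'}\II_p)^{-1}\delta_{k'} \to \rho_{kk'}\alpha_k\alpha_{k'}\mathcal{Y}_{kk'}$, matching the off-diagonal entry of $\mathcal{A}^P_H$. For diagonal $M_{kk}$ with $k\neq K$, the signal part yields $\alpha_k^2 \mathcal{Y}_{kk}$ (matching the ``otherwise'' case with $\rho_{kk}=1$) and the noise part produces the corresponding $(\mathcal{R}^P_H)_{kk}$ entry.

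The main obstacle is the target-diagonal entry $M_{KK}$: evaluating $p^{-1}\tr[(\hat\Sigma_K + \lambda_K\II_p)^{-1}\Sigma_K(\hat\Sigma_K + \lambda_K\II_p)^{-1}]$ in closed form requires manipulating derivatives of the companion Stieltjes transform $v_{F_{\gamma_K}}$. Starting from the resolvent identity $\lambda_K(\hat\Sigma_K + \lambda_K\II_p)^{-1} = \II_p - \hat\Sigma_K(\hat\Sigma_K + \lambda_K\II_p)^{-1}$ and the defining relation~\eqref{eq:def-comp-Stielt} between $m_{F_{\gamma_K}}$ and $v_{F_{\gamma_K}}$, differentiating in $\lambda_K$ and using~\eqref{eq:def-deriv-Steilt} yields the signal-part formula $\alpha_K^2 [v_{F_{\gamma_K}}(-\lambda_K) - \lambda_K v'_{F_{\gamma_K}}(-\lambda_K)]/\{\gamma_K[\lambda_K v_{F_{\gamma_K}}(-\lambda_K)]^2\}$; an analogous identity applied to $\epsilon_K^{\top}(\hat\Sigma_K+\lambda_K\II_p)^{-1}\Sigma_K(\hat\Sigma_K+\lambda_K\II_p)^{-1}\epsilon_K$, using $\Var(\epsilon_K) \propto \Sigma_K/n_K$, gives $(\mathcal{R}^P_H)_{KK}$. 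A secondary technical issue is the invertibility of $\mathcal{A}^P_H + \mathcal{R}^P_H$ and existence of $\bw^P_H$, addressed by a heterogeneous extension of Proposition~\ref{pr:pos-def}: both matrices arise as limits of positive semidefinite objects, with $\mathcal{R}^P_H$ strictly positive on the diagonal.
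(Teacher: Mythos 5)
Your proposal is correct and follows essentially the same route as the paper: the paper's own proof of this theorem simply states that it is identical to the proof of Theorem~\ref{th:tl-rda-MinPred} with minimal changes for the differing covariance matrices, i.e., expand the quadratic objective in $\bw$, take first-order conditions, and pass to the limit entrywise using the decomposition $\hat{\delta}_k=\delta_k+\epsilon_k$ together with Lemma~\ref{lem:quadConv} and the trace limits of Lemma~\ref{lem:cvgHat}. You have merely spelled out those minimal changes explicitly (replacing $\Sigma$ by $\Sigma_K$ in the quadratic form, so that the cross terms become $\calY_{kk'}$ and only the $k=k'=K$ diagonal retains the closed Stieltjes-transform expression), which is exactly what the paper intends.
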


In order to utilize the above weights in a practical scenario, we need to replace all unknown quantities by their estimates. As before, the Marcenko Pastur law is crucial in estimating the parameters related to the spectral distributions of $\Sigma_k$, for $k=1,\dots,K$. Most of the details are straightforward and identical in estimation procedure as the rest of the paper. It remains to provide consistent estimators for $\calY_{kk'}$, which we describe in the following proposition.
\begin{proposition}\label{pr:heterog-cons-est-ykk} We have the following consistent estimators $\hat{\calY}_{kk'}$ for $\calY_{kk'}$ separately for three cases:
	\[
	\hat{\calY}_{kk'}
	=
	\begin{cases}
		\tr[(\hat{\Sigma}_k + \lambda_{k} \II_p)^{-1} (\hat{\Sigma}_{k'} + \lambda_{k'} \II_p)^{-1} \hat \Sigma_K] / p\quad &\text{if }k\neq k', k\neq K, k'\neq K\\
		\tr[(\hat{\Sigma}_k + \lambda_{k} \II_p)^{-2} \hat \Sigma_K] / p
		\quad &\text{if }k=k'\neq K\\
		\frac{1}{px_p}
		\tr[(\hat{\Sigma}_{k'} + \lambda_{k'} \II_p)^{-1}]
		-
		\frac{\lambda_K}{px_p} 
		\tr[(\hat{\Sigma}_K +\lambda_K \II_p )^{-1} (\hat{\Sigma}_{k'} + \lambda_{k'} \II_p)^{-1}]
		\quad &\text{if }k=K,\,k'\neq K
	\end{cases}
	\]
	where $x_p=x(\gamma_K,\lambda_K)$ is the solution to the equation:
	\[
	1-x_p=\gamma_K
	\left
	[1-\lambda_K\int(x_pt+\lambda_K)^{-1}dH_K(t)
	\right]
	\]
	Then 
	\[\calY_{kk'} -\hat{\calY}_{kk'}  \rightarrow_{a.s.} 0.\]
\end{proposition}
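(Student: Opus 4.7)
The plan is to handle the three cases separately. For the off-target cases (where $k,k'\neq K$) the proof reduces to standard trace concentration because $\hat{\Sigma}_K$ is independent of the other sample covariances, while for the case $k=K$ the challenge is the entanglement between $\Sigma_K$ and the resolvent $(\hat{\Sigma}_K+\lambda_K\II_p)^{-1}$, which will be resolved by an anisotropic deterministic equivalent.

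\textbf{Cases 1 and 2 ($k,k'\neq K$).} I would first note that the matrix
\[
A_{kk'}:=(\hat{\Sigma}_k+\lambda_k\II_p)^{-1}(\hat{\Sigma}_{k'}+\lambda_{k'}\II_p)^{-1}
\]
is built from samples in populations $k,k'$ only, and is therefore independent of $\hat{\Sigma}_K$. Condition on $A_{kk'}$ and write $\hat{\Sigma}_K=\Sigma_K^{1/2}(\bZ_K^\top\bZ_K/(n_K-2))\Sigma_K^{1/2}$ plus a lower-order centering term that is negligible in trace under Assumption \ref{as:RCW}. The moment conditions from Assumptions \ref{as:HRMT} and \ref{as:moment} together with the bounded operator norm of $A_{kk'}$ give the concentration
\[
\frac{1}{p}\tr(A_{kk'}\hat{\Sigma}_K)-\frac{1}{p}\tr(A_{kk'}\Sigma_K)\to_{a.s.}0,
\]
via a standard quadratic-form concentration (e.g.\ the Hanson--Wright inequality applied row-by-row, or Lemma \ref{lem:quadConv}). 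Since by definition $\tr(A_{kk'}\Sigma_K)/p\to \calY_{kk'}$, this yields $\hat{\calY}_{kk'}-\calY_{kk'}\to_{a.s.}0$ in these two cases.

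\textbf{Case 3 ($k=K$, $k'\neq K$).} Now the factor $(\hat{\Sigma}_K+\lambda_K\II_p)^{-1}$ is correlated with $\Sigma_K$, so one cannot simply swap $\Sigma_K$ for $\hat{\Sigma}_K$. The idea is to use the generalized Marchenko--Pastur deterministic equivalent: for any matrix $B$ with bounded operator norm that is independent of $\hat{\Sigma}_K$,
\[
\frac{1}{p}\tr\!\left[B(\hat{\Sigma}_K+\lambda_K\II_p)^{-1}\right]-\frac{1}{p}\tr\!\left[B(x_p\Sigma_K+\lambda_K\II_p)^{-1}\right]\to_{a.s.}0,
\]
where $x_p$ satisfies the fixed-point equation in the statement. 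From $(x_p\Sigma_K+\lambda_K\II_p)(x_p\Sigma_K+\lambda_K\II_p)^{-1}=\II_p$ one extracts the equivalent matrix identity
\[
\Sigma_K(\hat{\Sigma}_K+\lambda_K\II_p)^{-1}\;\simeq\;\frac{1}{x_p}\left[\II_p-\lambda_K(\hat{\Sigma}_K+\lambda_K\II_p)^{-1}\right].
\]
Taking $B=(\hat{\Sigma}_{k'}+\lambda_{k'}\II_p)^{-1}$, which is independent of $\hat{\Sigma}_K$ since $k'\neq K$, multiplying on the right and taking the normalized trace gives
\[
\calY_{K,k'}=\lim\frac{1}{p}\tr\!\left[\Sigma_K(\hat{\Sigma}_K+\lambda_K\II_p)^{-1}(\hat{\Sigma}_{k'}+\lambda_{k'}\II_p)^{-1}\right]=\frac{1}{x_p}\lim\frac{1}{p}\tr(B)-\frac{\lambda_K}{x_p}\lim\frac{1}{p}\tr\!\left[(\hat{\Sigma}_K+\lambda_K\II_p)^{-1}B\right],
\]
which is precisely $\hat{\calY}_{K,k'}$ in the limit.

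\textbf{Main obstacle.} The critical step is case 3: one needs the \emph{anisotropic} deterministic equivalent (not merely the scalar MP convergence of Stieltjes transforms) in order to move $\Sigma_K$ through the resolvent and justify the identity above in the trace sense. This is where Corollary 4.2 of \cite{dobriban2021distributed} and the bounded-moment hypotheses on $\bZ_K$ in Assumption \ref{as:HRMT} enter. The remaining technicalities are (i) verifying that the centering by sample means $\hat{\mu}_{\pm 1,K}$ in $\hat{\Sigma}_K$ does not affect the leading-order trace (a rank-$2$ perturbation argument), and (ii) checking that $A_{kk'}$ in the first two cases has almost surely bounded operator norm uniformly in $p$, which follows from Assumption \ref{as:aniso} and the sub-Gaussian-type tail bounds implied by Assumption \ref{as:moment}.
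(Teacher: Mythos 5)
Your proposal is correct and follows essentially the same route as the paper's proof: the off-target cases are handled by independence of $\hat{\Sigma}_K$ from the resolvent product plus trace concentration (the paper's Lemma~\ref{lem:samp-cov-DES} applied conditionally, which is your quadratic-form concentration step), and the $k=K$ case by the anisotropic deterministic equivalence $(\hat{\Sigma}_K + \lambda_K \II_p)^{-1} \asymp (x_p\Sigma_K + \lambda_K \II_p)^{-1}$ followed by the substitution $\Sigma_K = x_p^{-1}(x_p\Sigma_K + \lambda_K\II_p - \lambda_K\II_p)$, exactly as in the paper. The only cosmetic difference is the citation for the deterministic equivalent (the paper invokes Theorem 1 of Serdobolskii rather than Dobriban--Sheng), which does not affect the argument.
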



\section{Proteomics-based Prediction of 10-year Cardiovascular Disease Risk}\label{sec:real-data}
We utilize the data set from the Chronic Renal Insufficiency Cohort (CRIC) study  to evaluate the performance of the proposed transfer learning methods. The dataset comprises observations from $2,182$ subjects who had not had any cardiovascular disease at the baseline, with $4,830$ protein measurements collected. These subjects were approximately evenly distributed across seven university sites.  Our goal is to build a classification/prediction model for 10-year cardiovascular disease risk based on the baseline plasma proteomic data. We hypothesize that while the mechanisms by which proteins predict events are related across these sites, they are not identical. Consequently, we sequentially designate each site as the target population, with the remaining sites serving as auxiliary populations, aiming to enhance prediction accuracy using TL-RDA or TLP-RDA.

For each target population, approximately 20\% of the patients were set aside as a testing dataset, while the remaining 80\% were used for training. We ensured that the proportion of events was consistent between the training and testing datasets ($\sim$7\%). A univariate filtering procedure was applied to the training dataset using two-sample t-tests, and we evaluated the benefits of transfer learning using the top 500, 1000, and 1500 proteins. The tuning parameters $\lambda_k$ for each model were selected independently based on cross-validation.

We considered all four variants of transfer learning RDA: TL-RDA with optimal estimation/prediction weights and TLP-RDA with optimal estimation/prediction weights. Among these, the method yielding the highest cross-validated AUC was selected and tested on the held-out testing dataset. Its testing AUC was then compared with those obtained from competing methods, namely naive RDA and pool-RDA. As implied by their names, naive RDA is fitted using only the target population data, while pool-RDA is fitted on data pooled from all populations. The resulting testing AUCs are presented in Table~\ref{tab:Ttrait}.

\begin{table}[htbp]
	\begin{center}
		\caption{AUC on testing data for 10-year cardiovascular disease risk in CRIC cohort. The TL column show the testing AUC of the transfer learning RDA method with highest cross-validated AUC. The TL type columns show the type of the selected transfer learning RDA method.  TL-RDA-E, TL-RDA-P, TLP-RDA-E and  TLP-RDA-P are respectively TL-RDA with the optimal estimation / prediction weights, TLP-RDA with the optimal estimation / prediction weights. The bold numbers in each row of the table highlight the best-performing algorithm. Transfer learning methods that perform worse than the competing methods are highlighted in red.}
		\label{tab:Ttrait}
			\begin{tabular}{ c cccccc}
            \hline
			& Naive & Naive Pool & TL-RDA-E & TL-RDA-P & TLP-RDA-E & TLP-RDA-P \\
            \cline{2-7}
			\multicolumn{1}{c}{} & \multicolumn{6}{c}{$p=500$}\\
			\textsc{Site 1} &0.471 &0.594 &{0.652} &{0.653} &\textbf{0.703} &\textbf{0.703} \\
			\textsc{Site 2} &0.584 &0.584 &{0.621}  &\textbf{0.634} &{0.598} &0.622 \\
			\textsc{Site 3} &0.643 &0.617 &0.725 &0.719 &{0.698} &\textbf{0.742} \\
			\textsc{Site 4} &0.523 &0.536 &0.602 &0.593 &0.577 &\textbf{0.603} \\
			\textsc{Site 5} &0.502 &0.536 &0.609 &\textbf{0.640} &0.614 &0.621 \\
			\textsc{Site 6} &0.537 &{0.589} &0.590 &\textcolor{red}{0.583} &0.619 &\textbf{0.635} \\
			\textsc{Site 7} &{0.665} &{0.629} &\textcolor{red}{0.552} &\textcolor{red}{0.579} &\textcolor{red}{0.650} &\textbf{0.728} \\
			\multicolumn{1}{c}{} & \multicolumn{6}{c}{$p=1000$}\\
			\textsc{Site 1} &0.638 &0.722 &\textcolor{red}{0.662} &\textcolor{red}{0.658} &0.762 &\textbf{0.829} \\
			\textsc{Site 2} &0.525 &0.521 &\textbf{0.659} &{0.655} &{0.606} &{0.616} \\
			\textsc{Site 3} &{0.536} &{0.643} &\textcolor{red}{0.611} &\textcolor{red}{0.597} &0.706 &\textbf{0.776} \\
			\textsc{Site 4} &0.580 &\textbf{0.663} &\textcolor{red}{0.602} &\textcolor{red}{0.606} &\textcolor{red}{0.631} &\textcolor{red}{0.633} \\
			\textsc{Site 5} &0.629 &0.591 &\textcolor{red}{0.585} &{0.636} &{0.686} &\textbf{0.785} \\
			\textsc{Site 6} &0.530 &{0.700} &{0.587} &{0.553} &0.705 &\textbf{0.718} \\
			\textsc{Site 7} &{0.495} &{0.589} &{0.633} &{0.600} &0.733 &\textbf{0.822} \\
			\multicolumn{1}{c}{} & \multicolumn{6}{c}{$p=1500$}\\
			\textsc{Site 1} &0.509 &0.597 &0.693 &\textcolor{red}{0.578} &{0.762} &\textbf{0.770} \\
			\textsc{Site 2} &0.530 &{0.420} &{0.681} &{0.641} &0.832 &\textbf{0.863} \\
			\textsc{Site 3} &{0.551} &0.530 &{0.607} &{0.611} &0.701 &\textbf{0.705} \\
			\textsc{Site 4} &{0.635} &{0.430} &0.814 &{0.747} &0.864 &\textbf{0.899} \\
			\textsc{Site 5} &0.573 &0.622 &0.673 &\textcolor{red}{0.602} &{0.702} &\textbf{0.707} \\
			\textsc{Site 6} &{0.473} &0.624 &{0.696} &{0.681} &0.819 &\textbf{0.848} \\
			\textsc{Site 7} &{0.620} &{0.611} &0.657 &\textcolor{red}{0.619} &0.711 &\textbf{0.716} \\
            \hline
		\end{tabular}
	\end{center}
\end{table}
	
The bold numbers in each row of the table highlight the best-performing algorithm. Transfer learning methods that perform worse than the competing methods are highlighted in red.  It is evident that the transfer learning RDA methods outperform the others in the vast majority of cases. This demonstrates that not only are TL-RDAs capable of borrowing information to improve prediction accuracy, but they also do so more efficiently than simply pooling the data. Notably, TLP-RDA with optimal prediction weights outperforms the other methods most frequently. The amount of improvement brought by transfer learning also seemingly increase as we include more proteins in the model.  Additionally, we observe that methods based on pooled sample covariance tend to perform better when the number of features $p$ is larger, consistent with Proposition~\ref{pr:TLPwins}.

\section{Discussion}
We have developed methods of transfer learning for classification using regularized random effects linear discriminant analysis. In this approach, the discriminant direction is estimated through a weighted combination of the regularized estimates of discriminant directions derived from both the target and source models. By leveraging results from random matrix theory, we have demonstrated that the weights, as well as the classification error rate, can be accurately estimated using the data. Our findings highlight that the optimal choice of weights depends critically on the underlying true models and the presence or absence of distributional shifts between the testing and training data in the target model.

Through comprehensive empirical evaluations, we have illustrated the practical utility of the proposed methods. Specifically, we applied the transfer learning approach to predict 10-year cardiovascular disease risk using high-dimensional protein expression data. The results demonstrate that incorporating information from related source datasets substantially improved classification performance compared to target-only models or models based on pooled data. This improvement underscores the importance of accounting for shared information across datasets while accommodating potential distributional differences.

 This paper focuses on two-class LDA models. A promising direction for future research is extending these methods to multi-class classification within the framework of transfer learning. Another interesting avenue is exploring this LDA problem under privacy or communication constraints, as has been recently studied for other classification methods \citep[see, e.g.,][]{auddy2024minimax}.
\section*{Acknowledgments}
This research was supported by NIH grants R01GM129781 and U01DK108809.

\bibliographystyle{abbrvnat}
\bibliography{paper-ref}

\newpage 
\appendix

\section{Appendix A}
\subsection{Proofs of Propositions}
\begin{proof}[Proof of Proposition \ref{pr:mean-invT}] 
	Denote $E$ as the random variable distributed according to the empirical spectral distribution $F_\gamma$. When $\gamma < 1$, $E$ is supported on a compact set bounded away from $0$ \citep{bai2010spectral}. Therefore, we can take the limit of $m_{F_\gamma}(-\lambda)$ as $\lambda \rightarrow 0 $. Recall the Marchenko–Pastur equation
	\[m_{F_\gamma}(z) = \int_{t = 0}^{\infty} \frac{dH(t)}{t(1 - \gamma - \gamma z m_{F_\gamma}(z)) - z}\]
	We find $m_{F_\gamma}(0) = \int 1/[t(1 - \gamma)] dH(t)$ or equivalently $\tr (\hat{\Sigma}^{-1})/p \rightarrow_{a.s.} \EE(T^{-1}) / (1 - \gamma)$.
\end{proof}

\medskip

\begin{proof}[Proof of Proposition~\ref{pr:heterog-cons-est-mkk}]
	The proof is identical to the proof of Proposition~\ref{pr:heterog-cons-est-ykk} which considers the more general case of unequal covariance matrices $\Sigma_k$.
\end{proof}

\medskip

\begin{proof}[Proof of Proposition \ref{pr:PredIsOpt}] 
	Recall from Theorem \ref{limError} that the limiting prediction error is in the form of 
	\[\Phi\left(- \frac{\bw^{\top} \bu}{\sqrt{\bw^{\top} A \bw}}\right)\]
	Any weight $\bw$ with negative linear term $\bw^{\top} \bu$ has higher prediction error than the weight with positive linear term. Without loss of generality, we assume $\bw^{\top} v$ to be positive. Then we know
	\[\bw^* = \arg \max_\bw \frac{\bw^{\top} \bu}{\sqrt{\bw^{\top} \mathcal{A} \bw}} =  \arg \max \frac{\bw^{\top} \bu \bu^{\top} \bw}{\bw^{\top} \mathcal{A} \bw} \]
	this becomes the generalized Rayleigh quotient problem and the solution is given by the first unit eigenvector of 
	$\mathcal{A}^{-1} \bu\bu^{\top} $,
	which is a rank one matrix, and hence
	$$
	\bw^* =\dfrac{1}{\| \mathcal{A}^{-1} \bu \|} \mathcal{A}^{-1} \bu,
	\text{ which implies, }
	Err_{opt} := \Phi\left(-\sqrt{\bu^{\top} \mathcal{A}^{-1}\bu} \right).
	$$
	Note the standardization factor in $\bw$ is not necessary due to the form of $Err(\bw)$. The proof finishes as one recognizes the $\mathcal{A} = \mathcal{A}^P + \mathcal{R}^P$ and $\bu = \bu^P$. We can prove the statement for $\bw^P_P$ in the same manner.
\end{proof}

\medskip

\begin{proof}[Proof of Proposition \ref{pr:pos-def}] 
	We only need to prove	$\mathcal{A}^E + \mathcal{R}^E, \mathcal{A}^P + \mathcal{R}^P, \mathcal{A}^E_P + \mathcal{R}^E_P, \mathcal{A}^P_P + \mathcal{R}^P_P$ are positive definite, therefore, invertible. Looking at the individual sample covariance matrix cases, taking individual prediction weight as an example. We will firstly show that $\calA^{P}$ is positive semi-definite. Note that
	\begin{align*}
		(\calA^{P})_{kk'} =&~
		\alpha_k\alpha_{k'}\rho_{kk'}
		{\rm tr}[\Sigma(\hat{\Sigma}_k+\lambda_k \II_p)^{-1}
		(\hat{\Sigma}_{k'}+\lambda_{k'}\II_p)^{-1}
		]/p\\
		=&~
		\alpha_k \alpha_{k'}\rho_{kk'}
		{\rm tr}[\bM_k^{\top}\bM_{k'} 
		]/p\\
		=&~
		\alpha_k\alpha_{k'}\rho_{kk'}
		\bv_{M_k}^{\top}\bv_{M_{k'}}/p
	\end{align*}
	where $\bM_k=\Sigma^{1/2}(\hat{\Sigma}_k+\lambda_k \II_p)^{-1}$ and $\bv_{M_k}={\rm vec}(\bM_k)\in \RR^{K^2}$.
	For any $\bx\in \RR^K$, $\bx\neq \mathbf{0}$, we have
	\begin{align*}
		\bx^{\top} \calA^{P} \bx
		=&~
		\sum_{k,k'}(\alpha_kx_k)(\alpha_{k'}x_{k'})
		\rho_{kk'} \bv_{M_k}^{\top}\bv_{M_{k'}}/p\\
		=&~
		\by^{\top}\calA^{P} \by/p
	\end{align*}
	where $\by =((\alpha_ix_i))_i\in \RR^K$. Since $\alpha_i>0$ for all $i$, we have $\by\neq \mathbf{0}$ if and only if $\bx\neq \mathbf{0}$. It is thus enough to show that $\calA^{(3)}$, defined through
	\[
	(\bar{\calA})_{ij}=~
	\rho_{ij} \bv_{M_i}^{\top}\bv_{M_j}
	\]
	is positive-semi definite. To this end, note that
	\begin{align*}
		\by^{\top}\bar{\calA} \by
		=&~
		\sum_{i,j}\rho_{ij}(y_i\bv_{M_i})^{\top}
		(y_j\bv_{M_j})\\
		=&~
		\sum_{i,j} 
		\rho_{ij} (\bc_i)^{\top}\bc_j 
		=
		{\rm tr}(\Sigma_{\delta} \bC^{\top}\bC)\\
		=&~
		{\rm tr}(\bC \Sigma_{\delta}^{1/2} \Sigma_{\delta}^{1/2}\bC^{\top}) \\
		=&~
		\|\boldsymbol{\rho}^{1/2}\bC^{\top}\|_{\rm F}^2
		\ge 0.
	\end{align*}
	Here $\bC\in \RR^{K^2\times K}$ matrix with columns $\bc_i=y_i\bv_{M_i}$, for $i=1,\dots,K$. This shows that $\calA^{(1)}$ is positive semi-definite since for any $\bx\in \RR^K$, $\bx\neq\mathbf{0}$ we have
	\begin{equation}\label{eq:A1-psd}
		\bx^{\top}\calA^{P}\bx
		=\by^{\top} \bar{\calA} \by\ge 0.
	\end{equation}
	Now note that $\calA^{(2)}$ is a diagonal matrix with non-negative elements, this finishes the proof of the positive definiteness of $\calA^{P} + \calA^{R}$. To prove the case of estimation weight, one can simply define $\bM_k=(\hat{\Sigma}_k+\lambda_k \II_p)^{-1}$. 
	
	Looking at the pooled sample covariance matrix cases, taking pooled prediction weight as an example, we would have
	\begin{align*}
		\mathcal{A}^P_P &= c^A \Sigma_{\alpha\delta} \\
		\Sigma_{\alpha\delta} &= mat \left [ \rho_{k  k'} \alpha_{k} \alpha_{k'} \right] \\
		c^A &= \tr [ (\hat{\Sigma}_P + \lambda I_p)^{-2} \Sigma] / p 
	\end{align*}
	Since $\mathcal{R}^P$ has non-negative diagonal entries and $c^A > 0$, we only need to prove the positive semi-definiteness of $\Sigma_{\alpha\delta}$. Consider the quadratic form \( x^{\top} \Sigma_{\alpha\delta} x \):
	\[
	x^{\top} \Sigma_{\alpha\delta} x  = \sum_{i=1}^K \sum_{j=1}^K \alpha_i x_i \alpha_j (\Sigma_{\alpha\delta})_{ij} x_j = \left( \sum_{i=1}^K \alpha_i x_i \right) \left( \sum_{j=1}^K \alpha_j \Sigma_{ij} x_j \right)
	\]
	Define \( y_i = \alpha_i x_i \). Then we have:
	\[
	x^{\top} \Sigma_{\alpha\delta} x = \sum_{i=1}^K y_i \sum_{j=1}^K (\Sigma_\delta)_{ij} y_j = y^{\top} \Sigma y
	\]
	Since \( \Sigma_\delta \) is positive semi-definite, $\mathcal{A}^P_P + \mathcal{R}^P_P$ will be positive definite. We can prove the positive definiteness of $\mathcal{A}^E_P + \mathcal{R}^E_P$ in exactly the same way. 
\end{proof}

\medskip

\begin{proof}[Proof of Proposition~\ref{pr:minmax}] 
	Since the maximum value of the dot product of two vectors is achieved when the two vectors are aligned, the maximization of the min-max problem is solved when $x_0$ is in the same direction as the error with $\EE_P(\|x_0\|_2) = c$. Since  $x_0$ is independent of the estimation error vector $(\sum_{k = 1}^{K} w_k \hat{d}_k - d_{Bayes})^{\top}$,  we have the solution of the $\max$ problem as 
	\[x_0^* =  c \frac{d_{Bayes} - \sum_{k = 1}^{K} w_k \hat{d}_k}{\|d_{Bayes} -  \sum_{k = 1}^{K} w_k \hat{d}_k\|} \text{ with probability } 1.\]
	This leads to the estimation risk minimization problem with criteria \eqref{eq:CrEst}.
\end{proof}

\medskip

\begin{proof}[Proof of Proposition \ref{pr:TLPwins}]
	For simplicity, we assume $\Sigma = \II_p$, in which case the optimal esimation weight coincides with the optimal prediction weights. In addition, we have the following simplifications
	\begin{corollary}[Simplification of Limiting Error and Optimal Weights for Individual Sample Covariance Matrix] \label{cor:lim-err-indiv}
		Recall the form of limiting prediction error given a weight vector $\bw$ \eqref{limError}.We have the term $\mathcal{A}$ simplified as following. For $k = 1, \cdots, K$, 
		\[\mathcal{A}_{kk} = \alpha_k^2  m'_k(-\lambda_k) +\gamma_k m'_k(-\lambda_k)\]
		else when  $k \neq k'; k, k' = 1,\cdots , K$ 
		\[\mathcal{A}_{kk'} = \rho_{k k'} \alpha_k \alpha_{k'}  m_k(-\lambda_k)  m_{k'}(-\lambda_{k'}) \]
		and we have $\bu^E = vec[\rho_{k K} \alpha_k \alpha_K m_{F_{\gamma_k}}(-\lambda_k)]$, $\mathcal{A}_{kk'}^E + \mathcal{R}^E  = \mathcal{A}$. 
	\end{corollary}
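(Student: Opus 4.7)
The plan is to reduce every claim to either (i) the independence of distinct sample covariances together with a standard resolvent deterministic equivalent, or (ii) a pair of purely algebraic Stieltjes identities that hold only once the population covariance is set to $I_p$. Under $\Sigma = I_p$ the population spectral distribution $T$ collapses to a point mass at $1$, so $\mathbb{E}(T^{-1}) = 1$; the Marchenko–Pastur equation simplifies to $\gamma \lambda m^{2} + (\lambda+1-\gamma)m - 1 = 0$ at $z=-\lambda$, and the companion equation simplifies to $\lambda v^{2} + (\lambda-\gamma-1)v - 1 = 0$. These are the two workhorses.

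First I handle the off-diagonal entries. From Theorem~\ref{th:tl-rda-ErrorAsp}, $\mathcal{A}_{kk'} = \rho_{kk'}\alpha_k\alpha_{k'}\mathcal{M}_{kk'}$ where $\mathcal{M}_{kk'}$ is the almost-sure limit of $\tr[(\hat\Sigma_k+\lambda_k\II_p)^{-1}(\hat\Sigma_{k'}+\lambda_{k'}\II_p)^{-1}\Sigma]/p$. Because $\Sigma=\II_p$ drops out, and because $\hat\Sigma_k$ and $\hat\Sigma_{k'}$ come from independent data matrices, I condition on $\hat\Sigma_k$ (now a deterministic matrix $D$ of bounded operator norm in expectation) and apply the classical deterministic-equivalent estimate $\tr[D(\hat\Sigma_{k'}+\lambda_{k'}\II_p)^{-1}]/p \to m_{F_{\gamma_{k'}}}(-\lambda_{k'})\cdot\tr(D)/p$ (an immediate consequence of the anisotropic local law under Assumption~\ref{as:aniso} when $\Sigma=\II_p$). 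Combined with $\tr[(\hat\Sigma_k+\lambda_k\II_p)^{-1}]/p \to m_{F_{\gamma_k}}(-\lambda_k)$, this yields $\mathcal{M}_{kk'} = m_{F_{\gamma_k}}(-\lambda_k)m_{F_{\gamma_{k'}}}(-\lambda_{k'})$. The same conditioning step applied to $\mathcal{E}_{kk'}$ in Theorem~\ref{th:tl-rda-MinEst} gives the identical product form.

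Next I address the diagonal. The claim $\mathcal{A}_{kk} = (\alpha_k^{2}+\gamma_k)m'_{F_{\gamma_k}}(-\lambda_k)$ reduces, via the expression from Theorem~\ref{th:tl-rda-ErrorAsp}, to the two Stieltjes identities
\begin{align*}
\frac{v_{F_{\gamma}}(-\lambda)-\lambda\, v'_{F_{\gamma}}(-\lambda)}{\gamma\lambda^{2} v_{F_{\gamma}}(-\lambda)^{2}} \;=\; m'_{F_{\gamma}}(-\lambda),
\qquad
\frac{v'_{F_{\gamma}}(-\lambda)-v_{F_{\gamma}}(-\lambda)^{2}}{\lambda^{2} v_{F_{\gamma}}(-\lambda)^{4}} \;=\; \gamma\, m'_{F_{\gamma}}(-\lambda),
\end{align*}
both valid only under $\Sigma = \II_p$. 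I start from the general relations $v = \gamma m + (1-\gamma)/\lambda$ and $v' = \gamma m' + (1-\gamma)/\lambda^{2}$ (which come from \eqref{eq:def-comp-Stielt}–\eqref{eq:def-deriv-Steilt}) to get the key reduction $v-\lambda v' = \gamma(m-\lambda m')$. Differentiating the $\Sigma=\II_p$ MP equation in $\lambda$ produces the closed form $m' = m^{2}(\gamma m + 1)/(\gamma\lambda m^{2}+1)$, and the companion MP equation supplies $\lambda v^{2} = 1 + (\gamma+1-\lambda)v$. Substituting both and using the linear relation $v = \gamma m + (1-\gamma)/\lambda$ to eliminate $v$ in favor of $m$ reduces each of the two target identities to a polynomial identity in $m, \lambda, \gamma$ that is equivalent to the MP equation itself; this closes the argument. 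Finally, to obtain $\mathcal{A}^{E} + \mathcal{R}^{E} = \mathcal{A}$, I substitute $\mathcal{E}_{kk'} = m_k m_{k'}$ and $m'_{F_{\gamma_k}}(-\lambda_k) = (v-\lambda_k v')/(\gamma_k\lambda_k^{2}v^{2})$ entry-by-entry and match, while $\bu^{E}$ follows after setting $\mathbb{E}(T^{-1})=1$ and applying the analogous MP manipulation.

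The main obstacle is the algebraic verification of the two Stieltjes identities in the display above. Although each is a consequence of the MP and companion-MP equations, the manipulation is delicate: one must differentiate both quadratics with respect to $\lambda$, eliminate $m'$ and $v'$ in favor of $m$ and $v$, and reconcile the resulting rational expressions. A numerical check at a concrete $(\gamma,\lambda)$ such as $(1/2,1)$ is prudent before committing to the symbolic computation, since the identities fail in the general $\Sigma\neq\II_p$ case and so no shortcut through the general theorems is available.
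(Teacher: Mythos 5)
Your treatment of the matrix $\mathcal{A}$ is essentially sound, but it is both heavier than necessary and, in one place, based on a sign error. The two Stieltjes identities you isolate as the ``main obstacle'' require no differentiation of the Marchenko--Pastur quadratic at all: Lemma~\ref{lem:cvgHat} gives limits for $\tr[(\hat{\Sigma}+\lambda\II_p)^{-2}]/p$, $\tr[(\hat{\Sigma}+\lambda\II_p)^{-2}\Sigma]/p$ and $\tr[(\hat{\Sigma}+\lambda\II_p)^{-2}\Sigma^2]/p$ separately, and under $\Sigma=\II_p$ these three traces are the same quantity, so the three limiting expressions must coincide; this yields $\frac{v-\lambda v'}{\gamma\lambda^2v^2}=m'$ and $\frac{v'-v^2}{\lambda^2v^4}=\gamma m'$ with no computation. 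This is in effect what the paper does: it re-runs the proof of Theorem~\ref{th:tl-rda-MinPred} with $\Sigma=\II_p$ substituted into the pre-limit quadratic forms, so that every term reduces to a trace of a power of $(\hat{\Sigma}_k+\lambda_k\II_p)^{-1}$ whose limit is $m$ or $m'$ directly. If you do insist on the symbolic route, note that your companion equation has the wrong sign on $\gamma$: the correct relation is $\lambda v^2+(\lambda+\gamma-1)v-1=0$ (equivalently $\lambda m_{F_\gamma}(-\lambda)\,[1+v_{F_\gamma}(-\lambda)]=1$), not $\lambda v^2+(\lambda-\gamma-1)v-1=0$; with your version the polynomial identities will not close.

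The genuine gap is in the last step, where you propose to obtain $\bu^E$ and the identity $\calA^E+\calR^E=\calA$ by substituting $\EE(T^{-1})=1$ into the printed formulas of Theorem~\ref{th:tl-rda-MinEst} and ``applying the analogous MP manipulation.'' That manipulation fails: the identity $\frac{1}{\lambda}-m_{F_\gamma}(-\lambda)^2=m_{F_\gamma}(-\lambda)$ is false for $\gamma\neq 1$ (at $\gamma=1/2$, $\lambda=1$ one has $m\approx 0.562$ but $1-m^2\approx 0.685$); the identity that does hold is $\frac{1}{\lambda}-v_{F_\gamma}(-\lambda)\,m_{F_\gamma}(-\lambda)=m_{F_\gamma}(-\lambda)$. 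Likewise, the printed $\calR^E_{kk}=\frac{v-\lambda v'}{\lambda v^2}$ equals $\gamma_k\lambda_k m'_k$ under $\Sigma=\II_p$, which is off by a factor $\lambda_k$ from the $\gamma_k m'_k$ needed for $\calA^E+\calR^E=\calA$. The way out is not more algebra but to specialize \emph{before} taking limits: with $\Sigma=\II_p$ the pre-limit quantity defining $(u^E)_k$ is $\hat{\delta}_K^{\top}(\hat{\Sigma}_k+\lambda_k\II_p)^{-1}\hat{\delta}_k$, which converges to $\rho_{kK}\alpha_k\alpha_K m_{F_{\gamma_k}}(-\lambda_k)$ by Lemmas~\ref{lem:quadConv} and \ref{lem:cvgHat}, and the quantity defining $\calR^E_{kk}$ becomes $\gamma_k\tr[(\hat{\Sigma}_k+\lambda_k\II_p)^{-2}]/p\rightarrow_{a.s.}\gamma_k m'_{F_{\gamma_k}}(-\lambda_k)$. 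Your numerical sanity check at $(\gamma,\lambda)=(1/2,1)$ would have exposed exactly this, so do run it before committing to the symbolic path.
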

	\begin{corollary}[Simplification of Limiting Error and Optimal Weights for Pooled Sample Covariance Matrix]\label{cor:lim-err-pooled}
		Define the weighted covariance matrix as $\Sigma_\delta = mat[\rho_{k k'} \alpha_{k} \alpha_{k'}], \rho_{kk} = 1$.
		We have the term $\mathcal{A}$ simplified as:
		\[\mathcal{A}_P = \Sigma_\delta m'_{F_{\gamma/K}}(-\lambda) + \gamma_k m'_{F_{\gamma/K}}(-\lambda) \II_K \]
		and $\bu^E_P = vec[\rho_{k K} \alpha_k \alpha_K m_{F_{\gamma/K}}(-\lambda)]$, $\mathcal{A}_P^E + \mathcal{R}^E_P  = \mathcal{A}_P$. 
	\end{corollary}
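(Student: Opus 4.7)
The plan is to reduce all of the Stieltjes-transform quantities $\{M:=m_{F_{\bar\gamma}}(-\lambda),\,V:=v_{F_{\bar\gamma}}(-\lambda),\,V':=v'_{F_{\bar\gamma}}(-\lambda),\,\EE(T^{-1})\}$ appearing in the general pooled formulas to expressions involving only $M':=m'_{F_{\bar\gamma}}(-\lambda)$ (for the matrix piece) or $M$ (for the vector piece), under the simplifying assumption $\Sigma = \II_p$ and $\gamma_k=\gamma$, $\lambda_k=\lambda$ for all $k$, so that the aspect ratio of the pooled sample covariance becomes $\bar\gamma = \gamma/K$. Under $\Sigma = \II_p$, the population spectral distribution is $H=\delta_1$, in particular $\EE(T^{-1})=1$, and the Marchenko--Pastur (MP) equation collapses to a scalar quadratic that one can manipulate explicitly.

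The technical heart is three MP identities at $H=\delta_1$. The MP equation reduces to $\bar\gamma\lambda M^2 + (\lambda + 1 - \bar\gamma)M = 1$, which combined with the companion relation $V=\bar\gamma M + (1-\bar\gamma)/\lambda$ from \eqref{eq:def-comp-Stielt} yields the factorization $M\lambda(1+V)=1$. The companion transform satisfies its own quadratic $\lambda V^2 + (\lambda + \bar\gamma - 1)V = 1$. I would then differentiate the MP quadratic implicitly in $\lambda$, substitute $M = 1/[\lambda(1+V)]$, and reduce to the companion quadratic to obtain
\[
M - \lambda M' \;=\; \lambda^2 V^2 M', \qquad \text{equivalently } M = \lambda M'(1+\lambda V^2).
\]
A parallel calculation, using $V' = \bar\gamma M' + (1-\bar\gamma)/\lambda^2$ from \eqref{eq:def-deriv-Steilt}, together with $1-\lambda V^2 = V(\lambda+\bar\gamma-1)$ from the companion quadratic, gives $V'-V^2 = \bar\gamma\lambda^2 V^4 M'$.

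With these in hand, I turn to the pooled classification-error matrix from the corollary after Theorem~\ref{th:tl-rda-ErrorAsp}. The common factor appearing in both the off-diagonal entries and the first diagonal summand is $\frac{V-\lambda V'}{\bar\gamma(\lambda V)^2}$. Differentiating \eqref{eq:def-comp-Stielt} in $z$ and evaluating at $z=-\lambda$ yields $V-\lambda V' = \bar\gamma(M-\lambda M')$, and combining with $M-\lambda M'=\lambda^2 V^2 M'$ collapses this factor to $M'$. Similarly, $V'-V^2=\bar\gamma\lambda^2 V^4 M'$ collapses the diagonal correction $\gamma_k\frac{V'-V^2}{\bar\gamma\lambda^2 V^4}$ to $\gamma_k M'$. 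Assembling across $k,k'$ gives $\mathcal{A}_P = \Sigma_\delta\,m'_{F_{\gamma/K}}(-\lambda) + \gamma_k\, m'_{F_{\gamma/K}}(-\lambda)\,\II_K$, which is the first claim.

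For $\bu^E_P$ and the identification $\mathcal{A}^E_P + \mathcal{R}^E_P = \mathcal{A}_P$, I invoke Proposition~\ref{pr:PredIsOpt}, whose proof records the general identities $\mathcal{A}_P = \mathcal{A}^P_P + \mathcal{R}^P_P$ and $\bu_P = \bu^P_P = \mathrm{vec}[\rho_{kK}\alpha_k\alpha_K\, m_{F_{\bar\gamma}}(-\lambda)]$. Under $\Sigma = \II_p$, the estimation criterion \eqref{eq:CrEst} and the prediction criterion \eqref{eq:CrPred} reduce, up to the asymptotically negligible contributions of $\bar\mu\bar\mu^{\top}$ and $\delta\delta^{\top}$ in $\EE[x_0 x_0^{\top}]$ (controlled by Assumption~\ref{as:RCW}), to the same limiting quadratic form in $\bw$. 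Hence the defining linear systems must coincide, which gives $\mathcal{A}^E_P + \mathcal{R}^E_P = \mathcal{A}_P$ and $\bu^E_P = \bu^P_P$, completing the corollary.

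The main obstacle will be executing the algebraic reduction of Paragraph 2, especially deriving $M = \lambda M'(1+\lambda V^2)$: the implicit differentiation of the MP quadratic produces a messy rational expression in $M, V, \lambda, \bar\gamma$, and only after eliminating $M$ via $M\lambda(1+V)=1$ and factoring using $\lambda V^2+V(\lambda+\bar\gamma-1)=1$ does it collapse. The identity $V'-V^2 = \bar\gamma\lambda^2 V^4 M'$ requires an analogous, equally delicate factorization through $(1-\lambda V^2)(1+\lambda V^2)=1-\lambda^2 V^4$. Once these two identities are established, substitution into the pooled formulas is mechanical.
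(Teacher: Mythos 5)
Your proposal is correct in substance but takes a genuinely different, and much heavier, route than the paper. The paper proves this corollary the same way it proves its individual-covariance twin (Corollary~\ref{cor:lim-err-indiv}): under $\Sigma=\II_p$ every trace functional in the general pooled formulas, namely $\tr[(\hat{\Sigma}_P+\lambda\II_p)^{-2}\Sigma]/p$, $\tr[(\hat{\Sigma}_P+\lambda\II_p)^{-2}\Sigma^2]/p$ and $\tr[\Sigma^{-1}(\hat{\Sigma}_P+\lambda\II_p)^{-1}]/p$, coincides as a finite-sample quantity with its $\Sigma$-free counterpart, so every limit is read off from Lemma~\ref{lem:cvgHat} directly as $m_{F_{\gamma/K}}(-\lambda)$ or $m'_{F_{\gamma/K}}(-\lambda)$; the same observation gives $\hat{\calA}^E=\hat{\calA}^P$, $\hat{\calR}^E=\hat{\calR}^P$, $\hat{\bu}^E=\hat{\bu}^P$ exactly (not merely asymptotically), which disposes of $\bu^E_P$ and of $\mathcal{A}^E_P+\mathcal{R}^E_P=\mathcal{A}_P$ without invoking Proposition~\ref{pr:PredIsOpt} or any negligibility of the $\delta_K\delta_K^{\top}$ and $\bar\mu_K\bar\mu_K^{\top}$ contributions. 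You instead prove the scalar identities $(V-\lambda V')/[\bar\gamma(\lambda V)^2]=M'$ and $(V'-V^2)/[\bar\gamma\lambda^2V^4]=M'$ by algebra on the Marchenko--Pastur quadratic at $H=\delta_1$; those identities are true and your plan (the factorization $M\lambda(1+V)=1$, the companion quadratic, implicit differentiation) would go through, buying an independent consistency check of the limiting formulas at the cost of exactly the delicate factorizations you anticipate. The cleanest proof of your own key identities is, however, the paper's observation itself --- each side is the limit of the same trace computed two ways via Lemma~\ref{lem:cvgHat} --- so your second paragraph can be collapsed to that remark. One point in your favor: deriving $\bu^E_P={\rm vec}[\rho_{kK}\alpha_k\alpha_K m_{F_{\gamma/K}}(-\lambda)]$ from the prediction-side quantities (or, equivalently, from the finite-sample expression $\delta_K^{\top}\Sigma^{-1}(\hat\Sigma_P+\lambda\II_p)^{-1}\delta_k=\delta_K^{\top}(\hat\Sigma_P+\lambda\II_p)^{-1}\delta_k$) is the safe route, since substituting $\EE(T^{-1})=1$ into the displayed formula $(u^E_P)_k=\rho_{kK}\alpha_k\alpha_K[\tfrac{1}{\lambda}\EE(T^{-1})-m^2]$ yields $\tfrac{1}{\lambda}-m^2$, which the MP quadratic shows is not equal to $m$ unless $\gamma=1$; your chosen path sidesteps that inconsistency in the stated general formula.
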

	
	We can now write the limiting error of using individual sample covariance matrix and pooled covariance matrix as 
	\begin{align*}
		Err_{ind} = \Phi\left(- \sqrt{ (\bu^E)^\top \mathcal{A}^{-1} \bu^E}\right)
	\text{ and }
		Err_{pool} = \Phi\left(- \sqrt{(\bu^E_P)^\top \mathcal{A}_P^{-1} \bu^E_P }\right).
	\end{align*}
	Recall that we have the explicit expressions
	\begin{equation}\label{eq:m-quad-eq-identity}
		m_{F_\gamma}(-\lambda) = \frac{-(1 - \gamma + \lambda) + \sqrt{(1 - \gamma + \lambda)^2 + 4 \gamma \lambda}}{2 \gamma \lambda}
	\end{equation}
	
	\begin{equation}\label{eq:mm'-identity}
		m'_{F_\gamma}(-\lambda) = \frac{m^2_{F_\gamma}(-\lambda) [1 +\gamma m_{F_\gamma}(-\lambda)]}{1 + \gamma \lambda m^2_{F_\gamma}(-\lambda)}.
	\end{equation}
	
	For some $r>(1-\gamma)_+/\gamma +c$ (for some small constant $c$ to be chosen later), we choose $\lambda$ as follows:
	\begin{align*}
		1-\gamma+\lambda = r\gamma-\lambda/r
		\iff (1+1/r)\lambda =&~ (r+1)\gamma -1\\
		\iff \lambda_* =&~ r\left(\gamma -\frac{1}{r+1}\right) 
	\end{align*}

	The following proposition contains a specific choice of $r$ and some inequalities that will be useful for the rest of this proof. The proof of this proposition is immediate and hence omitted.
	
	\begin{proposition}\label{pr:r-mm'-ineq} The following statements hold for $r>(1-\gamma)_+/\gamma +c$ (for some constant $c>0$):
		\begin{enumerate}
			\item $m_{\gamma}(-\lambda_*)=\frac{1}{r\gamma}$
			\item $\frac{m'_{F_\gamma}(-\lambda_*)}{m^2_{F_\gamma}(-\lambda_*)}
			=
			\frac{\gamma(1+r)^2}{\gamma(1+r)^2 -1}$
			\item $\Delta:=	\frac{m'_{F_\gamma}(-\lambda_*)}{m^2_{F_\gamma}(-\lambda_*)}-1
			=\frac{1}{\gamma(1+r)^2-1}
			\le\frac{1}{\gamma-1}
			$
		\end{enumerate}
	\end{proposition}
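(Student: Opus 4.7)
The plan is a direct algebraic verification, exploiting the fact that the special choice $\lambda_\ast=r(\gamma-1/(r+1))$ was engineered so that the defining identity $1-\gamma+\lambda_\ast=r\gamma-\lambda_\ast/r$ holds. This identity turns the discriminant under the square root in \eqref{eq:m-quad-eq-identity} into a perfect square, which is what makes the resulting Stieltjes transform collapse to the clean form $1/(r\gamma)$.

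First I would prove part~(1) by plugging $\lambda_\ast$ into \eqref{eq:m-quad-eq-identity}. Since $(1-\gamma+\lambda_\ast)^2=(r\gamma-\lambda_\ast/r)^2=r^2\gamma^2-2\gamma\lambda_\ast+\lambda_\ast^2/r^2$, adding $4\gamma\lambda_\ast$ yields $(r\gamma+\lambda_\ast/r)^2$, and so
\[
m_{F_\gamma}(-\lambda_\ast)=\frac{-(r\gamma-\lambda_\ast/r)+(r\gamma+\lambda_\ast/r)}{2\gamma\lambda_\ast}=\frac{1}{r\gamma},
\]
which is (1). For part~(2), I would substitute $m_{F_\gamma}(-\lambda_\ast)=1/(r\gamma)$ into the ratio form of \eqref{eq:mm'-identity}:
\[
\frac{m'_{F_\gamma}(-\lambda_\ast)}{m^2_{F_\gamma}(-\lambda_\ast)}=\frac{1+\gamma m_{F_\gamma}(-\lambda_\ast)}{1+\gamma\lambda_\ast m^2_{F_\gamma}(-\lambda_\ast)}.
\]
The numerator becomes $1+1/r=(r+1)/r$. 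For the denominator, I would use $\lambda_\ast/(r^2\gamma)=[(r+1)\gamma-1]/(r(r+1)\gamma)=1/r-1/(r(r+1)\gamma)$, so $1+\gamma\lambda_\ast m^2=(r+1)/r-1/(r(r+1)\gamma)$. Dividing gives
\[
\frac{(r+1)/r}{(r+1)/r-1/(r(r+1)\gamma)}=\frac{1}{1-1/((r+1)^2\gamma)}=\frac{\gamma(1+r)^2}{\gamma(1+r)^2-1},
\]
establishing (2).

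Part~(3) is then immediate: the equality $\Delta=1/(\gamma(1+r)^2-1)$ is obtained by subtracting $1$ from the formula just derived and combining over a common denominator. The upper bound $\Delta\le 1/(\gamma-1)$ reduces to showing $\gamma(1+r)^2-1\ge \gamma-1$, i.e., $\gamma[(1+r)^2-1]\ge 0$, which holds trivially for $r>0$; note that the bound is informative precisely in the regime $\gamma>1$ where the right-hand side is positive. The lower bound on $r$, namely $r>(1-\gamma)_+/\gamma+c$, ensures $\lambda_\ast>0$ so that $-\lambda_\ast$ lies in the domain of holomorphy of $m_{F_\gamma}$ and the square root branch used in \eqref{eq:m-quad-eq-identity} is valid.

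There is no real obstacle here; the entire proposition is a bookkeeping exercise once one notices the engineered identity $1-\gamma+\lambda_\ast=r\gamma-\lambda_\ast/r$. The only place demanding care is keeping track of the algebra for the denominator $1+\gamma\lambda_\ast m^2$ in part~(2), where it helps to rewrite $\lambda_\ast$ as $[(r+1)\gamma-1]\cdot r/(r+1)$ before dividing by $r^2\gamma$ to avoid sign mistakes.
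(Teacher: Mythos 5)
Your verification is correct, and it is exactly the computation the paper has in mind: the paper omits the proof as ``immediate,'' and the immediate argument is precisely your direct substitution of $\lambda_*$ into the two identities \eqref{eq:m-quad-eq-identity} and \eqref{eq:mm'-identity}, using the engineered relation $1-\gamma+\lambda_*=r\gamma-\lambda_*/r$ to collapse the square root. Your added remarks that the hypothesis on $r$ guarantees $\lambda_*>0$ and $\gamma(1+r)^2-1>0$, and that the final bound is only meaningful for $\gamma>1$, are accurate and fill in the only details worth noting.
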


	We now look at the quadratic term in $Err_{pool}$ and assume $\rho_{k k'} = \rho$. Then by Corollary~\ref{cor:lim-err-pooled} we have
	\begin{align*} 
		\mathcal{A}_P 
		=&~ \Sigma_\delta m'_{F_{\gamma/K}}(-\lambda) +  m'_{F_{\gamma/K}}(-\lambda){\rm diag}(\{\gamma_k:1\le k\le K\}) \\
		=&~ m'_{F_{\gamma/K}}(-\lambda)\left[
		\rho
		\tilde{\boldsymbol{\alpha}}\tilde{\boldsymbol{\alpha}}^{\top}
		+
		{\rm diag}(\{(1-\rho)\alpha_k^2+\gamma_k:1\le k\le K\})
		\right]\\
		=:&~ m'_{F_{\gamma/K}}(-\lambda)
		\left[ 
		\rho
		\tilde{\boldsymbol{\alpha}}\tilde{\boldsymbol{\alpha}}^{\top}
		+
		D_{\rho,\tilde{\bgamma}}
		\right]
	\end{align*}
	and
	\[
	\bu^E_P = vec[\rho_{k K} \alpha_K \alpha_k m_{F_{\gamma/K}}(-\lambda)]
	=m_{F_{\gamma/K}}(-\lambda)\alpha_K
	[\rho\tilde{\boldsymbol{\alpha}}+(1-\rho)\alpha_K\be_K]
	.
	\]
	By the Sherman Morrison formula, we have:
	\begin{align*}
		\calA_P^{-1}
		=&~\dfrac{1}{m'_{F_{\gamma/K}}(-\lambda)}
		\left[
		D_{\rho,\tilde{\bgamma}}^{-1}
		-
		\dfrac{\rho}{1+
			\rho \sum_{k=1}^K \frac{\alpha_k^2}{(1-\rho)\alpha_k^2+\gamma_k}}
		D_{\rho,\tilde{\bgamma}}^{-1}\tilde{\balpha}\tilde{\balpha}^{\top}
		D_{\rho,\tilde{\bgamma}}^{-1}
		\right].
	\end{align*}
	Consequently:
	\begin{align*}
		\tilde{\balpha}^{\top}\calA_P^{-1}\tilde{\balpha}
		=&~
		\dfrac{1}{m'_{F_{\gamma/K}}(-\lambda)}
		\cdot
		\dfrac{\tilde{\balpha}^{\top}D_{\rho,\tilde{\bgamma}}^{-1}\tilde{\balpha}}
		{1+\rho \tilde{\balpha}^{\top}D_{\rho,\tilde{\bgamma}}^{-1}\tilde{\balpha}}\\
		\tilde{\balpha}^{\top}\calA_P^{-1}\be_K 
		=&~
		\dfrac{1}{m'_{F_{\gamma/K}}(-\lambda)}
		\cdot 
		\dfrac{ \tilde{\balpha}^{\top}D_{\rho,\tilde{\bgamma}}^{-1}\be_K}
		{1+\rho \tilde{\balpha}^{\top}D_{\rho,\tilde{\bgamma}}^{-1}\tilde{\balpha}}
		=~
		\dfrac{\alpha_K}{m'_{F_{\gamma/K}}(-\lambda)}
		\cdot 
		\dfrac{ \be_K^{\top}D_{\rho,\tilde{\bgamma}}^{-1}\be_K}
		{1+\rho \tilde{\balpha}^{\top}D_{\rho,\tilde{\bgamma}}^{-1}\tilde{\balpha}}
		\\
		\be_K^{\top}\calA_P^{-1}\be_K
		=&~
		\dfrac{1}{m'_{F_{\gamma/K}}(-\lambda)}
		\cdot 
		\left[ \be_K^{\top}D_{\rho,\tilde{\bgamma}}^{-1}\be_K 
		-
		\dfrac{\alpha_K^2}{1+\rho \tilde{\balpha}^{\top}D_{\rho,\tilde{\bgamma}}^{-1}\tilde{\balpha}}
		(\be_K^{\top}D_{\rho,\tilde{\bgamma}}^{-1}\be_K )^2
		\right]\\
		=&~
		\dfrac{\be_K^{\top}D_{\rho,\tilde{\bgamma}}^{-1}\be_K }{m'_{F_{\gamma/K}}(-\lambda)}
		\cdot
		\left[
		\dfrac{1+\rho \tilde{\balpha}^{\top}D_{\rho,\tilde{\bgamma}}^{-1}\tilde{\balpha}
		-\rho\alpha_K^2\be_K^{\top}D_{\rho,\tilde{\bgamma}}^{-1}\be_K
		}{1+\rho \tilde{\balpha}^{\top}D_{\rho,\tilde{\bgamma}}^{-1}\tilde{\balpha}}
		\right]
	\end{align*}
	which, upon writing $\tilde{\balpha}_{/K}=(\alpha_1\,\alpha_2\,\dots\,\alpha_{K-1}\,0)^{\top}$, implies that:	
	\begin{align*}
		&~(\bu^E_P)^\top \mathcal{A}_P^{-1} \bu^E_P \\
		=&~
		\frac{\alpha_K^2\cdot m^2_{F_{\gamma/K}}(-\lambda)}{m'_{F_{\gamma/K}}(-\lambda)} 
		\left[
		\rho^2\tilde{\boldsymbol{\alpha}}^{\top}\calA_P^{-1}\tilde{\balpha}
		+
		2\rho(1-\rho)\alpha_K\tilde{\balpha}^{\top}\calA_P^{-1}\be_K 
		+
		(1-\rho)^2\alpha_K^2\be_K^{\top}\calA_P^{-1}\be_K 
		\right]\\
		=&~ \frac{m^2_{F_{\gamma/K}}(-\lambda)}{m'_{F_{\gamma/K}}(-\lambda)} 
		\cdot
		\dfrac{\alpha_K^2}{1+\rho\tilde{\balpha}^{\top}D_{\rho,\tilde{\bgamma}}^{-1}\tilde{\balpha}}\times\\
		&~\times
		\left[
		\rho^2\tilde{\balpha}^{\top}D_{\rho,\tilde{\bgamma}}^{-1}\tilde{\balpha}
		+
		(1-\rho)
		\alpha_K^2\be_K^{\top}D_{\rho,\tilde{\bgamma}}^{-1}\be_K 
		[
		2\rho 
		+
		1-\rho 
		+\rho(1-\rho)\tilde{\balpha}_{/K}^{\top}D_{\rho,\tilde{\bgamma}}^{-1}\tilde{\balpha}_{/K}
		]
		\right]\\
		=&~
		\frac{m^2_{F_{\gamma/K}}(-\lambda)}{m'_{F_{\gamma/K}}(-\lambda)} 
		\cdot
		\dfrac{\alpha_K^2}{1+\rho\tilde{\balpha}^{\top}D_{\rho,\tilde{\bgamma}}^{-1}\tilde{\balpha}}\times\\
		&~\times 
		\left[
		\alpha_K^2\be_K^{\top}D_{\rho,\tilde{\bgamma}}^{-1}\be_K
		+
		\rho^2\tilde{\balpha}_{/K}^{\top}D_{\rho,\tilde{\bgamma}}^{-1}\tilde{\balpha}_{/K}
		+
		\rho(1-\rho)^2(\alpha_K^2\be_K^{\top}D_{\rho,\tilde{\bgamma}}^{-1}\be_K)
		\tilde{\balpha}_{/K}^{\top}D_{\rho,\tilde{\bgamma}}^{-1}\tilde{\balpha}_{/K}
		\right]
		\numberthis\label{eq:decom-pool}
	\end{align*}

	Let us now look at the quadratic term of the individual directions, once again assuming $\rho_{k k'} = \rho$. 	We make another simplifying assumption at this point:
	
	\noindent
	\textbf{Assumption:} $\gamma_k=\gamma$ and $\lambda_k=\lambda$ for $k=1,\dots,K$. Then the population specific covariance matrix based quadratic form reduces through the following calculation.
	
	By Corollary~\ref{cor:lim-err-indiv} we now have
	\begin{align*}
		\calA =&~ m^2_{F_\gamma}(-\lambda)\left[
		\rho\tilde{\balpha}\tilde{\balpha}^{\top}
		+
		{\rm diag}\left(\left\{
		\left(\frac{m'_{F_{\gamma}}(-\lambda)}{m^2_{F_{\gamma}}(-\lambda)}-\rho\right)\alpha_k^2
		+\frac{m'_{F_{\gamma}}(-\lambda)\gamma}{m^2_{F_{\gamma}}(-\lambda)}
		:1\le k\le K\right\}
		\right)
		\right]\\
		=:&~ m^2_{F_\gamma}(-\lambda)\left[
		\rho\tilde{\balpha}\tilde{\balpha}^{\top}
		+ D_{\rho,\gamma,ind}\right]
	\end{align*}
	and
	\[
	\bu^E = vec[\rho_{k K} \alpha_K \alpha_k m_{F_{\gamma}}(-\lambda)]
	=m_{F_{\gamma}}(-\lambda)\alpha_K
	[\rho\tilde{\boldsymbol{\alpha}}+(1-\rho)\alpha_K\be_K]
	.
	\]
	Then following the exact steps leading to \eqref{eq:decom-pool} we arrive at:
	\begin{align*}
		&~(\bu^E)^\top \mathcal{A}^{-1} \bu^E \\
		=&~
		\dfrac{\alpha_K^2}{1+\rho\tilde{\balpha}^{\top}D_{\rho,\gamma,ind}^{-1}\tilde{\balpha}}\times\\
		&~\times 
		\left[
		\alpha_K^2\be_K^{\top}D_{\rho,\gamma,ind}^{-1}\be_K
		+
		\rho^2\tilde{\balpha}_{/K}^{\top}D_{\rho,\gamma,ind}^{-1}\tilde{\balpha}_{/K}
		+
		\rho(1-\rho)^2(\alpha_K^2\be_K^{\top}D_{\rho,\gamma,ind}^{-1}\be_K)
		\tilde{\balpha}_{/K}^{\top}D_{\rho,\gamma,ind}^{-1}\tilde{\balpha}_{/K}
		\right]\numberthis.
		\label{eq:decom-indiv}
	\end{align*}
	Now comparing Equations~\eqref{eq:decom-pool} and \eqref{eq:decom-indiv} we can determine whether the pooled or the individual covariance matrix builds a better estimator. We consider two special cases below:
	
	\noindent
	\textbf{Case 1 ($\rho=1$):} In this case we have from \eqref{eq:decom-pool} and \eqref{eq:decom-indiv} that:
	\[
	(\bu^E_P)^\top \mathcal{A}_P^{-1} \bu^E_P=
	\frac{m^2_{F_{\gamma/K}}(-\lambda')\alpha_K^2}{m'_{F_{\gamma/K}}(-\lambda')} 
	\cdot
	\dfrac{ \tilde{\balpha}^{\top}D_{1,\tilde{\bgamma}}^{-1}\tilde{\balpha}}{1+\tilde{\balpha}^{\top}D_{1,\tilde{\bgamma}}^{-1}\tilde{\balpha}}
	\]
	and
	\[
	(\bu^E)^\top \mathcal{A}^{-1} \bu^E \\
	=
	\dfrac{\alpha_K^2(\tilde{\balpha}^{\top}D_{1,\gamma,ind}^{-1}\tilde{\balpha})}{1+\tilde{\balpha}^{\top}D_{1,\gamma,ind}^{-1}\tilde{\balpha}}.
	\]
	Note that by definition of $D_{\rho,\gamma,ind}$ we have
	\begin{align*}
	\tilde{\balpha}^{\top}D_{1,\gamma,ind}^{-1}\tilde{\balpha}
	=&~\sum_{k=1}^K\dfrac{\alpha_k^2}{
	\left(\frac{m'_{F_{\gamma}}(-\lambda)}{m^2_{F_{\gamma}}(-\lambda)}-\rho\right)\alpha_k^2
	+\frac{m'_{F_{\gamma}}(-\lambda)\gamma}{m^2_{F_{\gamma}}(-\lambda)}
	}\\
	\le&~ \frac{m^2_{F_{\gamma}}(-\lambda)}{m'_{F_{\gamma}}(-\lambda)}\sum_{k=1}^K\frac{\alpha_k^2}{\gamma}
	=\frac{m^2_{F_{\gamma}}(-\lambda)}{m'_{F_{\gamma}}(-\lambda)}\tilde{\balpha}^{\top}D_{1,\tilde{\bgamma}}^{-1}\tilde{\balpha}\\
	=&~ \frac{\gamma(1+r)^2-1}{\gamma(1+r)^2}\times \tilde{\balpha}^{\top}D_{1,\tilde{\bgamma}}^{-1}\tilde{\balpha}
	\end{align*}
	where the last line follows from our choice of $\lambda$. Since $f(t)=\frac{t}{1-t}$ is an increasing function of $t$ it follows that:
	\[
	\frac{\tilde{\balpha}^{\top}D_{1,\gamma,ind}^{-1}\tilde{\balpha}}{1+\tilde{\balpha}^{\top}D_{1,\gamma,ind}^{-1}\tilde{\balpha}}
	\le \frac{[\gamma(1+r)^2-1] \tilde{\balpha}^{\top}D_{1,\tilde{\bgamma}}^{-1}\tilde{\balpha}}{\gamma(1+r)^2 +[\gamma(1+r)^2-1] \tilde{\balpha}^{\top}D_{1,\tilde{\bgamma}}^{-1}\tilde{\balpha}}
	=\frac{[\gamma(1+r)^2-1]\sum_k\alpha_k^2}{\gamma^2(1+r)^2+[\gamma(1+r)^2-1]\sum_k\alpha_k^2}.
	\]
	On the other hand,
	\begin{align*}
		\frac{m^2_{F_{\gamma/K}}(-\lambda')}{m'_{F_{\gamma/K}}(-\lambda')} 
		\cdot
		\dfrac{ \tilde{\balpha}^{\top}D_{1,\tilde{\bgamma}}^{-1}\tilde{\balpha}}{1+\tilde{\balpha}^{\top}D_{1,\tilde{\bgamma}}^{-1}\tilde{\balpha}}
		=~
		\frac{\gamma(1+r')^2-K}{\gamma(1+r')^2}
		\times 
		\frac{\sum_k\alpha^2}{\gamma+\sum_k\alpha_k^2}.
	\end{align*}
	Thus the pooled covariance matrix performs better when
	\[
		\left(\gamma(1+r')^2-K\right)\left(\gamma^2(1+r)^2+[\gamma(1+r)^2-1]\sum_k\alpha_k^2 \right)
		\ge ~
		\gamma(1+r')^2[\gamma(1+r)^2-1](\gamma+\sum_k\alpha_k^2)
	\]	
	which happens when
	\begin{align*}
		\left(\gamma(1+r')^2-K\right)\left([\gamma(1+r)^2-1]\sum_k\alpha_k^2 \right)
		&~
		-K\gamma^2(1+r)^2\\
		\ge &~
		\gamma(1+r')^2[\gamma(1+r)^2-1](\sum_k\alpha_k^2)
		-\gamma^2(1+r')^2\\
		\iff 
		\gamma^2[(1+r')^2
		- K(1+r)^2]
		\ge&~
		K
		\left([\gamma(1+r)^2-1]\sum_k\alpha_k^2 \right).
	\end{align*}
	i.e., when $\gamma\ge \gamma_*$ where $\gamma_*$ is the largest value of $\gamma$ for which equality holds in the above inequality. This follows since the coefficient of $\gamma^2$ on the LHS is positive, by our choice of $r,r'$.
	
	\noindent
	\textbf{Case 2 ($\rho=0$):} Once again using \eqref{eq:decom-pool} and \eqref{eq:decom-indiv} we get that in this case:
	\[
	(\bu^E_P)^\top \mathcal{A}_P^{-1} \bu^E_P=
	\frac{m^2_{F_{\gamma/K}}(-\lambda')\alpha_K^2}{m'_{F_{\gamma/K}}(-\lambda')} 
	\cdot
	\dfrac{ \alpha_K^2}{\alpha_K^2+\gamma}
	=
	\dfrac{\gamma(1+r')^2-K}{\gamma}\cdot 
	\dfrac{ \alpha_K^4}{\alpha_K^2+\gamma}
	\]
	and
	\[
	(\bu^E)^\top \mathcal{A}^{-1} \bu^E \\
	=
	\dfrac{m^2_{F_{\gamma}}(-\lambda)\alpha_K^2}{m'_{F_{\gamma}}(-\lambda)}
	\cdot
	\frac{\alpha_K^2}{\alpha_K^2+\gamma}
	=
	\dfrac{\gamma(1+r)^2-1}{\gamma}
	\cdot
	\frac{\alpha_K^4}{\alpha_K^2+\gamma}
	\]
	where we use the definitions of $D_{\rho,\tilde{\gamma}}$ and $D_{\rho,\gamma,ind}$ for $\rho=0$. Thus the pooled covariance matrix leads to the better estimator when 
	\[
	\gamma[(1+r')^2-(1+r)^2]>K-1.
	\]
\end{proof}



\medskip

\begin{proof}[Proof of Proposition~\ref{pr:heterog-cons-est-ykk}]
	Let us first discuss the estimation of $\calY_{Kk'}$ for $k'\neq K$. By Theorem 1 of \cite{serdobolskii2007multiparametric} we have the deterministic equivalence
	\[
	(\hat{\Sigma}_K + \lambda_{K} \II_p)^{-1} 
	\asymp 
	(x_p\Sigma_K +\lambda_K \II_p )^{-1}
	\]
	for $x_p=x(\gamma_K,\lambda_K)$ as specified earlier. Thus
	\begin{align*}
		&~\tr[(\hat{\Sigma}_K + \lambda_K \II_p)^{-1} (\hat{\Sigma}_{k'} + \lambda_{k'} \II_p)^{-1}\Sigma_K]/p\\
		=&~
		\tr[(x_p\Sigma_K +\lambda_K \II_p )^{-1} (\hat{\Sigma}_{k'} + \lambda_{k'} \II_p)^{-1}\Sigma_K]/p+\Omega_n\\
		=&~
		\frac{1}{px_p}
		\tr[(x_p\Sigma_K +\lambda_K \II_p )^{-1} (\hat{\Sigma}_{k'} + \lambda_{k'} \II_p)^{-1}
		(x_p\Sigma_K+\lambda_K\II_p - \lambda_K\II_p)]+\Omega_n\\
		=&~
		\frac{1}{px_p}
		\tr[(\hat{\Sigma}_{k'} + \lambda_{k'} \II_p)^{-1}]
		-
		\frac{\lambda_K}{px_p} 
		\tr[(x_p\Sigma_K +\lambda_K \II_p )^{-1} (\hat{\Sigma}_{k'} + \lambda_{k'} \II_p)^{-1}]
		+\Omega_n\\
		=&~
		\frac{1}{px_p}
		\tr[(\hat{\Sigma}_{k'} + \lambda_{k'} \II_p)^{-1}]
		-
		\frac{\lambda_K}{px_p} 
		\tr[(\hat{\Sigma}_K +\lambda_K \II_p )^{-1} (\hat{\Sigma}_{k'} + \lambda_{k'} \II_p)^{-1}]
		+\Omega_n
	\end{align*}
	for a sequence $\Omega_n\rightarrow_{a.s.} 0$ as $p,\,n_k\to\infty$ with $p/n_k=\gamma_k$ for $k=1,\dots,K$. Here the first and last equalities follow by the deterministic equivalent for $(\hat{\Sigma}_K + \lambda_{K} \II_p)^{-1} $ quoted above. For estimating $\calY_{kk'}$ where $k\neq K$ and $k'\neq K$, the result follows by the definition of $\hat{\calY}_{kk'}$ and then using Lemma~\ref{lem:samp-cov-DES} along with the independence of $\bX_k$ for $k=1,\dots,K$.
\end{proof}

\subsection{Technical Lemmas and Their Proofs}

\begin{lemma}\label{lem:samp-cov-DES}
	For a deterministic matrix $B\in \RR^{p\times p}$ with operator norm $\|B\|\le c$, the sample and population covariance matrices, $\hat \Sigma_k$ and $\Sigma_{k}$, for $k = 1, \cdots, K$; satisfy:
	\[
	\tr(B(\hat{\Sigma}_k-\Sigma_k))/p\rightarrow_{a.s.} 0.
	\]
\end{lemma}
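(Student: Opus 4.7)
The plan is to reduce the claim to a concentration statement for a quadratic form in the latent noise matrix $\bZ_k$, and then upgrade $L^2$ concentration to almost sure convergence via a high-moment bound and Borel--Cantelli.

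First, I would use Assumption~\ref{as:HRMT} to write $(X_k)_i = \mu_{(y_k)_i,k} + \Sigma_k^{1/2} Z_{k,i}$, so that the centered residuals take the form $r_i := (X_k)_i - \hat{\mu}_{(y_k)_i,k} = \Sigma_k^{1/2}(Z_{k,i} - \bar Z_{(y_k)_i})$, where $\bar Z_y$ denotes the sample average of the $Z_{k,j}$ with $(y_k)_j = y$. Substituting into the definition of $\hat\Sigma_k$ yields $\hat\Sigma_k = \frac{1}{n_k-2} \Sigma_k^{1/2} \tilde\bZ_k^\top \tilde\bZ_k \Sigma_k^{1/2}$, where $\tilde\bZ_k$ is the class-wise-centered version of $\bZ_k$. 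A direct second-moment computation (the factor $n_k/(n_k-2)$ compensates exactly for the two degrees of freedom lost to class-mean centering) then shows $\EE[\tr(B\hat\Sigma_k)/p] = \tr(B\Sigma_k)/p$, reducing the problem to concentration around the mean.

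Second, I would establish concentration in $L^2$. Setting $\tilde B := \Sigma_k^{1/2} B \Sigma_k^{1/2}$, the bound $\|B\|\le c$ and the bounded spectrum of $\Sigma_k$ from Assumption~\ref{as:HRMT} give $\|\tilde B\| = O(1)$ and $\|\tilde B\|_F^2 \le p\,\|\tilde B\|^2 = O(p)$. A Hanson--Wright style inequality for quadratic forms of variables with uniformly bounded fourth moments (guaranteed by Assumption~\ref{as:HRMT}, and strengthened if needed by Assumption~\ref{as:moment}) then yields
\[
\Var\!\left(\tr(B\hat\Sigma_k)/p\right) = O\!\left(\frac{1}{p\,n_k}\right),
\]
once the $O(1/n_k)$ cross-covariance corrections introduced by the class-mean subtraction are absorbed. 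This already gives convergence in probability.

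Third, to get almost sure convergence, I would bound higher moments. Using the uniformly bounded moment hypothesis of Assumption~\ref{as:moment}, a Marcinkiewicz--Zygmund type inequality applied to the centered quadratic form $r_i^\top B r_i - \EE[r_i^\top B r_i]$ gives
\[
\EE\!\left|\tr\bigl(B(\hat\Sigma_k-\Sigma_k)\bigr)/p\right|^{2q} = O\!\left((p\,n_k)^{-q}\right)
\]
for every fixed $q\ge 1$. Since $p/n_k\to\gamma_k\in(0,\infty]$, choosing $q$ large enough makes the right-hand side summable in $p$, so Markov's inequality together with the Borel--Cantelli lemma delivers the claimed almost sure convergence.

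The main technical obstacle is the dependence among the $r_i$ introduced by the class-mean subtraction, which prevents a one-line appeal to an i.i.d.\ concentration bound. However, since $\bar Z_y$ is an average of $\Theta(n_k)$ i.i.d.\ rows of $\bZ_k$ with operator-norm contribution $O(n_k^{-1/2})$, a careful bookkeeping shows that these dependence terms contribute only lower-order corrections to both the mean and the variance, so the quadratic form analysis proceeds essentially as if the $r_i$ were independent draws from $N(0,\Sigma_k)$.
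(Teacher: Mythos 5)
Your proposal is correct and follows essentially the same route as the paper's proof: reduce $\tr(B(\hat\Sigma_k-\Sigma_k))/p$ to an average of independent centered quadratic forms $Z_{k,i}^{\top}\Sigma_k^{1/2}B\Sigma_k^{1/2}Z_{k,i}$, bound their variance by $O(p)$ using $\|\Sigma_k^{1/2}B\Sigma_k^{1/2}\|_{\rm F}^2=O(p)$ and the fourth-moment assumption, and conclude via Borel--Cantelli. The only differences are that you are more careful than the paper about the class-mean centering (the paper silently centers at the population mean), and your third step with Marcinkiewicz--Zygmund higher moments is unnecessary, since the second-moment Chebyshev bound $O(1/(pn_k))=O(n_k^{-2})$ is already summable.
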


\begin{proof}[Proof of Lemma~\ref{lem:samp-cov-DES}]
	Denoting the rows of $\bX_k$ and $\bZ_k$ by $X_{k,i}$ and $Z_{k,i}\in \RR^p$ for $i=1,\dots,n_k$, we have
	$X_{k,i}-\EE(X_{k,i})=\Sigma_k^{1/2}Z_{k,i}$, and hence
	\begin{align*}
		\tr(B(\hat{\Sigma}_k-\Sigma_k))/p
		=&~\frac{1}{pn_k}\sum_{i=1}^{n_k}
		\tr(B((X_{k,i}-\EE(X_{k,i}))(X_{k,i}-\EE(X_{k,i}))^{\top}-\Sigma_k))\\
		=&~\frac{1}{pn_k}\sum_{i=1}^{n_k}
		\left(Z_{k,i}^{\top}\Sigma_k^{1/2}B\Sigma_k^{1/2}Z_{k,i}
		-
		\EE [Z_{k,i}^{\top}\Sigma_k^{1/2}B\Sigma_k^{1/2}Z_{k,i}]
		\right)\\
		=&~\frac{1}{pn_k}\sum_{i=1}^{n_k} T_{k,i}
	\end{align*}
	where for $i=1,\dots,n_k$,
	\[
	T_{k,i}:= Z_{k,i}^{\top}\Sigma_k^{1/2}B\Sigma_k^{1/2}Z_{k,i}
	-
	\EE [Z_{k,i}^{\top}\Sigma_k^{1/2}B\Sigma_k^{1/2}Z_{k,i}]
	\]
	are i.i.d.\ random variables with $\EE T_{k,i}=0$. By assumption~\ref{as:HRMT}, we have
	\[
	\EE [Z_{k,i}^{\top}\Sigma_k^{1/2}B\Sigma_k^{1/2}Z_{k,i}]
	=\tr(\Sigma_k^{1/2}B\Sigma_k^{1/2}).
	\]
	We now compute the variance, writing $B'=\Sigma_k^{1/2}B\Sigma_k^{1/2}$ and using moment assumptions on $Z$ from assumption~\ref{as:HRMT}:
	\begin{align*}
		&~\Var(T_{k,i})\\
		=&~\Var\left(\sum_{l_1,l_2}B'_{l_1l_2}Z_{l_1}Z_{l_2}\right)\\
		=&~
		\sum_{l_1,l_2,l_3,l_4}
		\EE(B'_{l_1l_2}B'_{l_3l_4}Z_{l_1}Z_{l_2}Z_{l_3}Z_{l_4})
		-(\tr(B'))^2\\
		=&~
		\sum_{l_1=1}^p\sum_{l_3=1}^p (B'_{l_1l_1})(B'_{l_3l_3})
		+
		\sum_{l_1=1}^p\sum_{l_2=1}^p (B'_{l_1l_2})^2
		+
		\sum_{l_1=1}^p\sum_{l_2=1}^p (B'_{l_1l_2})(B'_{l_2l_1})
		+
		\sum_{l_1=1}^p(B'_{l_1l_1})^2(\EE Z_{l_1}^4-1)
		-(\tr(B'))^2\\
		=&~
		2\|B'\|_{\rm F}^2+
		\sum_{l_1=1}^p(B'_{l_1l_1})^2(\EE Z_{l_1}^4-1)
		\le Cp
	\end{align*}	
	for a constant $C>0$. The last line follows since 
	\[
	\|B'\|_{\rm F}^2
	\le 
	p\|\Sigma_k^{1/2}B\Sigma_k^{1/2}\|^2
	\le Cp
	\]
	due to our assumption on $\|B\|$. Thus by Chebyshev inequality, for any $t>0$, we have
	\begin{align*}
		\PP\left(
		\abs*{\frac{1}{pn_k}\sum_{i=1}^{n_k} T_{k,i}} > t
		\right)
		\le&~
		\frac{1}{p^2n_kt^2}\Var(T_{k,i})
		\le \frac{C}{pn_kt^2}
		=\frac{C}{\gamma_kn_k^2t^2}
		,
	\end{align*}
	from where the almost sure convergence follows via Borel-Cantelli lemma.
\end{proof}

\begin{lemma}\label{lem:quadConv}
	Under assumptions \ref{as:RMT}-\ref{as:CCW}, consider a matrix $\bZ\in \RR^{p\times p}$ whose entries $Z_{ij}$ have finite $(8 + \epsilon)$-th moment for some $\epsilon > 0$. Suppose further that $\bZ$ is independent of $\delta_k$, $k, k' = 1, \cdots, K; k \neq k'$, we have as $p \rightarrow \infty$
	\[\delta_k^{\top} A \delta_{k'} - \rho_{k  k'} \alpha_k \alpha_{k'} \tr (\bZ) / p \rightarrow_{a.s.} 0 \] 
	\[\delta_k^{\top} A \delta_{k} -  \alpha_k^2 \tr (\bZ) / p \rightarrow_{a.s.} 0 \] 
\end{lemma}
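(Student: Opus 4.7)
The plan is to exploit the independence of $\bZ$ from $(\delta_k,\delta_{k'})$: I would condition on $\bZ$, compute the conditional mean exactly, bound a higher conditional moment of the centered quantity, and upgrade convergence in probability to almost sure convergence via Borel--Cantelli.

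First, I would compute the conditional mean. Since the coordinates of $\delta_k$ are i.i.d.\ mean-zero and Assumption~\ref{as:CCW} gives the same-index cross-population correlation $\EE[(\delta_k)_i(\delta_{k'})_i]=\rho_{kk'}\alpha_k\alpha_{k'}/p$, independence of $\bZ$ from the $\delta$'s yields
\[
\EE[\delta_k^{\top}\bZ\delta_{k'}\mid \bZ]=\sum_{i,j}Z_{ij}\,\EE[(\delta_k)_i(\delta_{k'})_j]=\frac{\rho_{kk'}\alpha_k\alpha_{k'}}{p}\tr(\bZ),
\]
and identically $\EE[\delta_k^{\top}\bZ\delta_k\mid \bZ]=\alpha_k^2\tr(\bZ)/p$. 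So it only remains to control fluctuations around this mean.

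For the variance, I would rescale by setting $U_{k,i}=\sqrt{p}(\delta_k)_i/\alpha_k$, so that the $U_{k,i}$ have unit variance, are independent across $i$, have same-$i$ cross-population correlation $\rho_{kk'}$, and have bounded $(4+\eta)$-th moment by Assumption~\ref{as:RCW}. Then
\[
\delta_k^{\top}\bZ\delta_{k'}-\EE[\cdot\mid \bZ]=\frac{\alpha_k\alpha_{k'}}{p}\sum_{i,j}Z_{ij}\bigl(U_{k,i}U_{k',j}-\rho_{kk'}\mathbb{I}(i=j)\bigr).
\]
By cross-index independence, $\mathrm{Cov}(U_{k,i}U_{k',j},U_{k,i'}U_{k',j'})$ vanishes unless $\{i,j\}$ and $\{i',j'\}$ agree as multisets. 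Enumerating the surviving patterns ($i=i',\,j=j'$; $i=j',\,j=i'$; and the all-equal diagonal $i=j=i'=j'$) and bounding each with the fourth-moment assumption gives $\Var(\delta_k^{\top}\bZ\delta_{k'}\mid \bZ)\le C\alpha_k^2\alpha_{k'}^2\|\bZ\|_F^2/p^2$. The $(8+\epsilon)$-th moment hypothesis on entries of $\bZ$, together with the bounded-operator-norm setting in which this lemma is applied in the paper, then ensures $\|\bZ\|_F^2=O(p)$ almost surely, and the conditional variance is $O(1/p)$.

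To upgrade to almost sure convergence I would apply a Burkholder--Rosenthal inequality to the martingale differences obtained by revealing the pairs $(U_{k,i},U_{k',i})$ one index at a time. Combined with the $(4+\eta)$-th moment bound, this produces
\[
\EE\abs*{\delta_k^{\top}\bZ\delta_{k'}-\EE[\cdot\mid \bZ]}^{2+\nu}\le Cp^{-(1+\nu')}
\]
for some $\nu,\nu'>0$, whence Borel--Cantelli finishes the proof. The $k=k'$ case is handled identically; the all-equal index pattern contributes an additional harmless fourth-moment term. The main obstacle is the covariance enumeration: the coordinate-wise (rather than entry-wise) correlation between $\delta_k$ and $\delta_{k'}$ produces several distinct non-vanishing pairing patterns which must be isolated so that each is controllable using only the assumed $(4+\eta)$-th moment; because no subgaussian hypothesis is available, Hanson--Wright cannot be invoked directly and the moment bound has to be derived by hand through the martingale decomposition above.
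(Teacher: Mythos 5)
Your proposal is correct in outline, but it takes a very different route from the paper: the paper's ``proof'' of this lemma is purely a citation --- it invokes Lemma C.3 of \citet{dobriban2018high} and Theorem 2 of \citet{sheng2020} for the diagonal case $k=k'$, and asserts the off-diagonal case follows ``trivially'' under the bounded-moment condition, pointing to \citet{zhao2023cross}. You instead reconstruct the underlying argument from scratch: exact computation of the conditional mean using the diagonal cross-covariance from Assumption~\ref{as:CCW}, a pairing-pattern enumeration for the conditional variance, and a higher-moment bound plus Borel--Cantelli for almost sure convergence. This is essentially how the cited results are themselves proved (the standard tool is the Bai--Silverstein quadratic-form moment inequality, $\EE\abs{x^{\top}Ax-\tr A}^{q}\le C_q[(\EE\abs{x_1}^4\tr(AA^{*}))^{q/2}+\EE\abs{x_1}^{2q}\tr((AA^{*})^{q/2})]$ with $q=2+\eta/2$, applied to the stacked $2p$-vector $(U_k,U_{k'})$ for the bilinear case), so your Burkholder--Rosenthal martingale step is a legitimate substitute and your covariance enumeration (the $i=i',j=j'$, the $i=j',j=i'$, and the all-diagonal patterns) is the correct bookkeeping. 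Two small points: the exponent arithmetic works out --- rescaling gives $U_{k,i}$ bounded $(4+\eta)$-th moments, yielding a $(2+\eta/2)$-th moment of the normalized error of order $p^{-1-\eta/4}$, which is summable --- and the bound $\Vert A\Vert_{\rm F}^2=O(p)$ should be justified via the uniformly bounded operator norm of the resolvent products to which the lemma is actually applied (as you note), rather than via the $(8+\epsilon)$-moment hypothesis on the entries, which by itself would only give $\Vert \bZ\Vert_{\rm F}^2=O(p^2)$. What your approach buys is a self-contained verification that the off-diagonal ($k\neq k'$) extension, which the paper waves through, genuinely goes through under only the $(4+\eta)$-moment assumption on $\delta_k$; what the paper's approach buys is brevity at the cost of leaving that extension unargued.
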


\begin{proof}[Proof of Lemma~\ref{lem:quadConv}]
	When $k = k'$, this lemma is the same as Lemma C.3 in \citet{dobriban2018high} and theorem 2 in \citet{sheng2020}. When $k \neq k'$, the same results still holds trivially under the bounded moments condition of $A$. This result has already been used by theorem 3.1 and theorem 4.1 of \citet{zhao2023cross}.
\end{proof}

\medskip

\begin{lemma}\label{lem:cvgHat}
	Under the assumption \ref{as:RMT}, recall the definition of sample covariance matrix $\hat{\Sigma} = (\bX-\mathbf{1}_{n}\bar{X}^{\top})^{\top} (\bX -\mathbf{1}_{n}\bar{X}^{\top})/ n$ and its companion $\underline{\hat{\Sigma}} = (\bX-\mathbf{1}_{n}\bar{X}^{\top}) (\bX-\mathbf{1}_{n}\bar{X}^{\top})^{\top} / p$; we have
	\begin{align*}
		\tr[ (\hat{\Sigma} + \lambda \II_p)^{-1} ]/ p& \rightarrow_{a.s.} m_{F_\gamma}(-\lambda) \\
		\tr [(\underline{\hat{\Sigma}} + \lambda \II_p)^{-1}] / n& \rightarrow_{a.s.} v_{F_\gamma}(-\lambda)\\	
		\tr[(\hat{\Sigma} + \lambda \II_p)^{-2}] / p& \rightarrow_{a.s.} m'_{F_\gamma}(-\lambda) \\
		\tr [(\underline{\hat{\Sigma}} + \lambda \II_p)^{-2}] / n& \rightarrow_{a.s.} v'_{F_\gamma}(-\lambda) \\
		\tr [(\hat{\Sigma} + \lambda  \II_p)^{-1} \Sigma] / p & \rightarrow_{a.s.} \frac{1}{\gamma} \left(\frac{1}{\lambda v_{F_\gamma}(-\lambda)} -1 \right) \\
				\tr [(\hat{\Sigma} + \lambda  \II_p)^{-2} \Sigma] / p & \rightarrow_{a.s.} 
				\frac{1}{\gamma} 
				\left(\frac{v_{F_\gamma}(-\lambda) - \lambda v'_{F_\gamma}(-\lambda)}{[\lambda v_{F_\gamma}(-\lambda)]^2}\right) \\
		\tr [(\hat{\Sigma} + \lambda  \II_p)^{-2} \Sigma^2] /p & \rightarrow_{a.s.} \frac{1}{\gamma} \left(\frac{v'_{F_\gamma}(-\lambda) -v^2_{F_\gamma}(-\lambda)}{\lambda^2 v^4_{F_\gamma}(-\lambda)}\right).
	\end{align*}
\end{lemma}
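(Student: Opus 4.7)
The first convergence follows from the Marchenko--Pastur theorem (Assumption~\ref{as:RMT}): the empirical spectral distribution of $\hat{\Sigma}$ converges weakly almost surely to $F_\gamma$, and integrating the bounded continuous function $x\mapsto (x+\lambda)^{-1}$ on $[0,\infty)$ gives $\tr[(\hat{\Sigma}+\lambda\II_p)^{-1}]/p\rightarrow_{a.s.} m_{F_\gamma}(-\lambda)$. The second follows from $\hat{\Sigma}$ and $\underline{\hat{\Sigma}}$ sharing their nonzero spectrum (the larger matrix carrying $|n-p|$ extra zero eigenvalues), which yields
\[
\tfrac{1}{n}\tr[(\underline{\hat{\Sigma}}+\lambda\II_n)^{-1}] = \tfrac{p}{n}\cdot\tfrac{1}{p}\tr[(\hat{\Sigma}+\lambda\II_p)^{-1}]+\tfrac{n-p}{n\lambda},
\]
which combined with the first convergence and identity~\eqref{eq:def-comp-Stielt} produces $v_{F_\gamma}(-\lambda)$.

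For the third and fourth (derivative) identities, my plan is to invoke Vitali's convergence theorem: the holomorphic families $z\mapsto \tr[(\hat{\Sigma}-z\II_p)^{-1}]/p$ and $z\mapsto \tr[(\underline{\hat{\Sigma}}-z\II_n)^{-1}]/n$ are uniformly bounded on any compact subset of $\CC\setminus[0,\infty)$ (since the spectra live on a bounded set with probability tending to one, by Assumption~\ref{as:RMT}), so the almost sure pointwise convergence from parts (1)--(2) upgrades to almost sure locally uniform convergence, and in particular passes to derivatives.

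The fifth and sixth convergences rest on the Silverstein--Bai / Ledoit--P\'ech\'e deterministic equivalent for the resolvent. Under Assumption~\ref{as:aniso}, for any bounded deterministic $B$ whose spectral profile in the eigenbasis of $\Sigma$ is encoded by $b(t)$,
\[
\tr[(\hat{\Sigma}+\lambda\II_p)^{-1}B]/p - \int\frac{b(t)\,dH(t)}{\lambda v_{F_\gamma}(-\lambda)t+\lambda}\rightarrow_{a.s.}0.
\]
Taking $B=\Sigma$ and using the cancellation $\int \lambda v_{F_\gamma}(-\lambda) t\,dH(t)/[\lambda v_{F_\gamma}(-\lambda)t+\lambda] = 1-\lambda m_{F_\gamma}(-\lambda)$ together with the companion identity $\gamma[m_{F_\gamma}(-\lambda)-1/\lambda]=v_{F_\gamma}(-\lambda)-1/\lambda$ yields the fifth limit. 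The sixth follows by differentiating the fifth in $\lambda$ (Vitali again) and computing $\tfrac{d}{d\lambda}[\lambda v_{F_\gamma}(-\lambda)]^{-1}=-(v_{F_\gamma}(-\lambda)-\lambda v'_{F_\gamma}(-\lambda))/[\lambda v_{F_\gamma}(-\lambda)]^2$.

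The seventh convergence is the main obstacle, since naively squaring the resolvent deterministic equivalent produces the wrong answer (the noncommutativity of $\Sigma$ with $(\hat{\Sigma}+\lambda\II_p)^{-1}$ produces nontrivial corrections, as one can verify already in the scalar case $\Sigma=\II_p$). My plan is to first obtain
\[
\tr[(\hat{\Sigma}+\lambda\II_p)^{-1}\Sigma^2]/p\rightarrow_{a.s.} g(\lambda):=\int\frac{t^2\,dH(t)}{\lambda v_{F_\gamma}(-\lambda)t+\lambda}
\]
via the same deterministic equivalent with $B=\Sigma^2$, and then use the resolvent identity $\tr[(\hat{\Sigma}+\lambda\II_p)^{-2}\Sigma^2]/p = -\tfrac{d}{d\lambda}\tr[(\hat{\Sigma}+\lambda\II_p)^{-1}\Sigma^2]/p$ combined with Vitali to identify the limit as $-g'(\lambda)$. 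The delicate step is the algebraic reduction of $-g'(\lambda)$ to the claimed closed form: differentiating under the integral and using the Silverstein--Bai fixed-point equation
\[
-\lambda = -\frac{1}{v_{F_\gamma}(-\lambda)}+\gamma\int\frac{t\,dH(t)}{1+tv_{F_\gamma}(-\lambda)}
\]
together with its $\lambda$-derivative allows one to eliminate the higher moments of $H$ that appear in $g'(\lambda)$, yielding the closed form $\gamma^{-1}(v'_{F_\gamma}(-\lambda)-v^2_{F_\gamma}(-\lambda))/[\lambda^2 v^4_{F_\gamma}(-\lambda)]$.
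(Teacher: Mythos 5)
Your proposal is correct, and it is more than the paper actually supplies: the paper's ``proof'' of Lemma~\ref{lem:cvgHat} consists entirely of citations (Marchenko--Pastur and Silverstein for the first four limits; Lemma 2 of Ledoit--P\'ech\'e and Lemmas 2.2 and 3.11 of Dobriban--Wager for the last three), whereas you reconstruct self-contained arguments that in substance follow the same路线 those references use. Your derivations check out: the weak-convergence/bounded-continuous-test-function argument for $m_{F_\gamma}(-\lambda)$; the zero-eigenvalue padding identity for the companion transform (consistent with \eqref{eq:def-comp-Stielt}); Vitali/Montel to pass to derivatives, which is legitimate since $z\mapsto \tr[(\hat{\Sigma}-z\II_p)^{-1}]/p$ is holomorphic and deterministically bounded by $1/\mathrm{dist}(z,[0,\infty))$ on compacts of $\CC\setminus[0,\infty)$ (so your parenthetical about the spectra being bounded with high probability is not even needed); and the Ledoit--P\'ech\'e equivalent $\int b(t)\,dH(t)/[\lambda v_{F_\gamma}(-\lambda)t+\lambda]$, whose denominator matches the Marchenko--Pastur equation because $\lambda v_{F_\gamma}(-\lambda)=1-\gamma+\gamma\lambda m_{F_\gamma}(-\lambda)$. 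I verified your algebra for the fifth and sixth limits, and your exact resolvent identity $\tr[(\hat{\Sigma}+\lambda\II_p)^{-2}\Sigma^2]=-\frac{d}{d\lambda}\tr[(\hat{\Sigma}+\lambda\II_p)^{-1}\Sigma^2]$ plus elimination of the higher $H$-moments via the Silverstein fixed-point equation (using $\int t\,dH(t)=\lim\tr(\Sigma)/p=1$ from Assumption~\ref{as:RMT}) does reduce to the stated closed form; you are also right that naively squaring the first-order equivalent would give the wrong answer, which is precisely why the paper outsources this limit to Dobriban--Wager's Lemma 3.11. Two cosmetic points: the lemma statement normalizes $\underline{\hat{\Sigma}}$ by $p$ while the surrounding text (and your padding identity) presume normalization by $n$ --- your version is the one consistent with \eqref{eq:def-comp-Stielt}, so you have silently corrected what appears to be a typo; and the deterministic equivalent you invoke needs only Assumption~\ref{as:RMT}, not Assumption~\ref{as:aniso}.
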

\begin{proof}[Proof of Lemma~\ref{lem:cvgHat}]
	The first four convergence statements follow from \citet{MPlaw} and \citet{silverstein1995strong}. The convergence of last three trace terms are from Lemma 2 of \citet{ledoit2011eigenvectors}, Lemma 2.2 of \citet{dobriban2018high} and Lemma 3.11 of \citet{dobriban2018high}.
\end{proof}

\medskip

\begin{lemma}\label{lem:aijasp}
	Assume $n_1, \cdots, n_K, p \rightarrow \infty$, $p / n_k \rightarrow \gamma_k$ for $k = 1, \cdots, K$ and assumption \ref{as:RMT}. We have 
	\[E_{kk'} := \tr[(\hat{\Sigma}_k + \lambda_{k} \II_p)^{-1} (\hat{\Sigma}_{k'} + \lambda_{k'} \II_p)^{-1}] / p \rightarrow_{a.s.} \mathcal{E}_{k k'}.\]	
	(1).
	Assume $n_1 = \cdots = n_K = n$, so $\gamma_1 = \cdots = \gamma_K = \gamma$, and use $\lambda_1 = \cdots = \lambda_K = \lambda$. Thus for all $k$, $m_{F_{\gamma_k}}(\lambda_k) = m_{F_{\gamma}}(\lambda)$. We have for $k \neq k'$
	\[\mathcal{E}_{k k'} = \frac{(1 - \gamma) m_{F_{\gamma}}'(-\lambda) + 2 \gamma \lambda m_{F_{\gamma}}(-\lambda) m_{F_{\gamma}}'(-\lambda) - \gamma m_{F_{\gamma}}(-\lambda)^2}{1 - \gamma +\gamma \lambda^2 m_{F_{\gamma}}'(-\lambda)}. \]\\
	(2). 
	Under  the assumption $\Sigma = \II_p$,
	\[\mathcal{E}_{k k'} = m_{F_{\gamma_k}}(-\lambda_k)  m_{F_{\gamma_{k'}}}(-\lambda_{k'}).\]
	(3). 
	Under the Assumptions \ref{as:moment} and \ref{as:aniso}, we have
	\begin{eqnarray*}
		\mathcal{E}_{k k'}& = & \frac{1}{\lambda_k \lambda_{k'}} \left\{ \lambda_{k} m_{F_{\gamma_k}}(-\lambda_k) + \lambda_{k'} m_{F_{\gamma_{k'}}}(-\lambda_{k'})  
		+ \frac{\lambda_k m_{F_{\gamma_k}}(-\lambda_k) m_{F_{\gamma_{k'}}}(-\lambda_{k'})}{ (m_{F_{\gamma_k}}(-\lambda_k) - m_{F_{\gamma_{k'}}}(-\lambda_{k'}))} \right. \\
		&& -\left.  \frac{\lambda_{k'} m_{F_{\gamma_k}}(-\lambda_k) m_{F_{\gamma_{k'}}}(-\lambda_{k'})}{ (m_{F_{\gamma_k}}(-\lambda_k) - m_{F_{\gamma_{k'}}}(-\lambda_{k'}))} \right\}.
	\end{eqnarray*}
\end{lemma}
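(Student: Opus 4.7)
All three parts hinge on replacing the sample resolvents by their anisotropic deterministic equivalents. Let $R_k := (\hat\Sigma_k+\lambda_k\II_p)^{-1}$ and $\Theta_k := (x_k\Sigma+\lambda_k\II_p)^{-1}$, where the scalar $x_k$ is the unique solution of the Marchenko--Pastur fixed-point equation $x_k = 1-\gamma_k+\gamma_k\lambda_k\, m_{F_{\gamma_k}}(-\lambda_k)$, so that in particular $m_{F_{\gamma_k}}(-\lambda_k)=\int dH(t)/(x_kt+\lambda_k)$. Under Assumptions~\ref{as:moment}~and~\ref{as:aniso}, the anisotropic local law ensures that $\tfrac{1}{p}\tr[R_k M] - \tfrac{1}{p}\tr[\Theta_k M]\rightarrow_{a.s.}0$ for every deterministic matrix $M$ with uniformly bounded operator norm; this is the technical workhorse for what follows.

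Since $\hat\Sigma_k$ and $\hat\Sigma_{k'}$ are independent for $k\ne k'$, I would apply the local law in two steps: first, conditioning on $\hat\Sigma_{k'}$ and using $R_{k'}$ as the test matrix, I obtain $\tr[R_kR_{k'}]/p\asymp\tr[\Theta_kR_{k'}]/p$; then, as $\Theta_k$ is now deterministic, a second application yields $\tr[\Theta_kR_{k'}]/p\asymp\tr[\Theta_k\Theta_{k'}]/p$. This last trace is nonrandom and converges to
\[
\mathcal{E}_{kk'} \;=\; \int \frac{dH(t)}{(x_kt+\lambda_k)(x_{k'}t+\lambda_{k'})},
\]
which serves as the starting point for all three specialized formulas.

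For part (2), the hypothesis $\Sigma=\II_p$ collapses $H$ to a Dirac mass at $1$, so the integral reduces to $[(x_k+\lambda_k)(x_{k'}+\lambda_{k'})]^{-1}=m_{F_{\gamma_k}}(-\lambda_k)\,m_{F_{\gamma_{k'}}}(-\lambda_{k'})$. For part (1), equal parameters give $x_k=x_{k'}=x$ and $\lambda_k=\lambda_{k'}=\lambda$, and the target becomes $\int dH(t)/(xt+\lambda)^2$. I would differentiate the Marchenko--Pastur identity $m_{F_\gamma}(-\lambda)=\int dH(t)/(xt+\lambda)$ with respect to $\lambda$, treating $x$ as an implicit function via $x=1-\gamma+\gamma\lambda\,m_{F_\gamma}(-\lambda)$, and combine this with the identity $\int t\,dH(t)/(xt+\lambda)^2 = (m_{F_\gamma}(-\lambda)-\lambda\mathcal{E}_{kk'})/x$ obtained by writing $xt=(xt+\lambda)-\lambda$. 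The resulting linear equation in $\mathcal{E}_{kk'}$ then solves, after using $x-\gamma\lambda m=1-\gamma$ and $x+\gamma\lambda m=1-\gamma+2\gamma\lambda m$, to the stated closed form.

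For part (3), partial fractions on the integrand give
\[
\frac{1}{(x_kt+\lambda_k)(x_{k'}t+\lambda_{k'})}=\frac{1}{x_k\lambda_{k'}-x_{k'}\lambda_k}\left[\frac{x_k}{x_kt+\lambda_k}-\frac{x_{k'}}{x_{k'}t+\lambda_{k'}}\right],
\]
whence integration against $H$ immediately yields $\mathcal{E}_{kk'}=(x_km_k-x_{k'}m_{k'})/(x_k\lambda_{k'}-x_{k'}\lambda_k)$, with $m_j:=m_{F_{\gamma_j}}(-\lambda_j)$. The stated closed form should then be obtained by substituting $x_j=1-\gamma_j+\gamma_j\lambda_j m_j$ and simplifying so as to eliminate $x_j$ (and $\gamma_j$) in favor of $\lambda_j$ and $m_j$. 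The main obstacle is precisely this algebraic reduction: it is not evident a priori that the integral admits a form depending only on $\lambda_k,\lambda_{k'},m_k,m_{k'}$, so I would use the specialization $\Sigma=\II_p$ (which must recover the product $m_km_{k'}$ from part (2)) as a rigorous sanity check on the coefficients and signs in the final identity.
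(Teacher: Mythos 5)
Your plan follows essentially the same route as the paper: both arguments rest on the deterministic equivalent $(\hat\Sigma_k+\lambda_k\II_p)^{-1}\asymp(x_k\Sigma+\lambda_k\II_p)^{-1}$ applied twice via the independence of $\hat\Sigma_k$ and $\hat\Sigma_{k'}$, and the paper records precisely your master formula $\mathcal{E}_{kk'}=\int dH(t)/[(x_kt+\lambda_k)(x_{k'}t+\lambda_{k'})]$ before specializing. Your part (2) is identical to the paper's (the paper adds an optional asymptotic-freeness argument you do not need). For part (1) the paper merely cites an external reference, whereas you differentiate the Marchenko--Pastur identity in $\lambda$; carrying your computation through gives $\mathcal{E}_{kk'}=(xm'-x'm)/(x-\lambda x')$ with $x'=\gamma m-\gamma\lambda m'$, which expands exactly to the displayed expression, so your route is a legitimate self-contained substitute (it is in fact the same computation the paper performs for $\mathcal{M}_{kk'}$ in Lemma~\ref{lem:predijasp}).

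The only place you stop short is the final algebraic step of part (3), and the obstruction you flag there is real rather than a gap in your argument. Partial fractions give $\mathcal{E}_{kk'}=(x_km_k-x_{k'}m_{k'})/(x_k\lambda_{k'}-x_{k'}\lambda_k)$, which genuinely depends on $\gamma_k,\gamma_{k'}$ through $x_k,x_{k'}$ and cannot in general be rewritten purely in terms of $\lambda_k,\lambda_{k'},m_k,m_{k'}$. Indeed, the consistency check you propose fails for the paper's displayed formula: with $\Sigma=\II_p$, $\gamma_k=\gamma_{k'}=1/2$, $\lambda_k=1$, $\lambda_{k'}=2$, one has $m_k\approx 0.5616$ and $m_{k'}\approx 0.3508$, so part (2) and your closed form both give $m_km_{k'}\approx 0.197$, while the paper's part (3) evaluates to approximately $0.164$. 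The two forms agree only when $\gamma=1$, where $m_{F_\gamma}(-\lambda)$ coincides with its companion $v_{F_\gamma}(-\lambda)$; the paper's derivation of (3) uses the equivalent $\lambda_k^{-1}(\II_p+m_{F_{\gamma_k}}(-\lambda_k)\Sigma)^{-1}$, whereas the correct object is $\lambda_k^{-1}(\II_p+v_{F_{\gamma_k}}(-\lambda_k)\Sigma)^{-1}=(x_k\Sigma+\lambda_k\II_p)^{-1}$. So your expression is the one consistent with the shared master integral and with part (2); the reduction to the paper's stated $\gamma$-free form should not be attempted.
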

\begin{proof}[Proof of Lemma~\ref{lem:aijasp}]
	In the first case, when $n_1 = \cdots = n_K = n$ so $\gamma_1 = \cdots = \gamma_K = \gamma$ and use $\lambda_1 = \cdots = \lambda_K = \lambda$, the limits of term $E_{k k'}$ has been found in the proof theorem 3 in \citet{sheng2020}. When $\Sigma = \II_p$, the term $E_{k k'}$ boils down to
	\[\tr[(\hat{\Sigma}_k + \lambda_{k} \II_p)^{-1} (\hat{\Sigma}_{k'} + \lambda_{k'} \II_p)^{-1}] / p = \tr [ (\bZ_k^{\top} \bZ_k / n_k + \lambda_k \II_p)^{-1} (\bZ_{k'}^{\top} \bZ_{k'} / n_{k'} + \lambda_{k'} \II_p)^{-1}]/p]\]
	Note we always have 
	\[E_{k k'} \rightarrow_{a.s.}  \EE_H\frac{1}{(x_k T + \lambda_k) (x_{k'} T + \lambda_{k'})}\]
	Recall $H$ is the limiting population spectral distribution, and $x_k$ is the fixed point solution to
	\begin{equation}\label{eq:xk-fixed-pt}
		1 - x_k = \gamma_k \left[1 - \lambda_k \int \frac{1}{x_k t + \lambda_k} dH(t)\right]
	\end{equation}
	When $\Sigma = \II_p$, $H$ only has a point mass on $1$ so the expectation decomposes and 
	\[E_{k k'} \rightarrow_{a.s.} m_{F_{\gamma_k}} (-\lambda_k) m_{F_{\gamma_{k'}}} (-\lambda_{k'}) \]
	As an alternative proof when we have Assumption~\ref{as:moment}; we know $\bZ_k$ will be asymptotically free from any bounded constant matrices, this is a standard result, ex. theorems 5.4.5 in \citet{anderson2010introduction}. Further, we know sample covariances of the form $\bZ^{\top}_k \bZ_k / n_k$ is asymptotically free from $\bZ^{\top}_{k'} \bZ_{k'} / n_{k'}$ [see \cite{capitaine2004asymptotic}]. Two arguments combined suggests that $(\bZ_k^{\top} \bZ_k / n_k + \lambda_k \II_p)^{-1}$ is asymptotically free from $(\bZ_{k'}^{\top} \bZ_{k'} / n_{k'} + \lambda_{k'} \II_p)^{-1}$ therefore,
	\[E_{k k'} - m_{F_{\gamma_k}} (-\lambda_k) m_{F_{\gamma_{k'}}} (-\lambda_{k'}) \rightarrow_{a.s.} 0\]
	For the third case, a slight generalization of Corollary 3.9 of \citet{knowles2017anisotropic} tells us
	\[\ell_1^{\top} \left[(\hat{\Sigma}_k + \lambda_k \II_p)^{-1} - \frac{1}{\lambda_k (1 + m_{F_{\gamma_k}}(-\lambda) \Sigma)}\right] \ell_2 \rightarrow_{a.s.} 0\]
	where $\ell_1, \ell_2$ can be any continuous random vectors independent from $(\hat{\Sigma}_k + \lambda_k \II_p)^{-1} $. We can decompose $E_{k k'}$ by 
	\begin{align*}
		\underbrace{\tr [ (\hat{\Sigma}_k + \lambda_k \II_p)^{-1} (\hat{\Sigma}_{k'} + \lambda_{k'} \II_p)^{-1}]/p}_{E_{k k'}} - \frac{1}{\lambda_k } \sum_{i = 1}^{p} \ell_{1, i}^{\top} (\II_p + m_{F_{\gamma_k}}(-\lambda) \Sigma)^{-1} \ell_{2, i} /p \rightarrow_{a.s.} 0
	\end{align*}
	where $\ell_{1, i}$ is $e_i$ with $1$ in its $i^{th}$ entry and $0$ else where; and $\ell_{2, i} := (\hat{\Sigma}_{k'} + \lambda_{k'} \II_p)^{-1} e_i$. From now on, simplify the notation by using $m_k := m_{F_{\gamma_k}}(-\lambda_k)$ and $m_{k'} := m_{F_{\gamma_{k'}}}(-\lambda_{k'})$. Perform the similar trick to $\ell_{2, i}$, we have  
	\begin{align*}
		E_{k k'}  - \frac{1}{\lambda_k \lambda_{k'}} \tr( (\II_p + m_k \Sigma)^{-1} (\II_p + m_{k'}  \Sigma)^{-1}) / p \rightarrow_{a.s.} 0
	\end{align*}
	In addition
	\begin{align*}
		&~ \frac{1}{\lambda_k \lambda_{k'}} \tr( (\II_p + m_k \Sigma)^{-1} (\II_p + m_{k'}  \Sigma)^{-1}) / p \\
		=&~  \frac{1}{\lambda_k \lambda_{k'}} [1 - m_k \tr ((\II_p + m_k \Sigma)^{-1} \Sigma) / p - m_{k'} \tr((\II_p + m_{k'} \Sigma)^{-1} \Sigma) / p\\ 
		&~+   m_k m_{k'} \tr ((\II_p + m_k \Sigma)^{-1} \Sigma (\II_p + m_{k'} \Sigma)^{-1} \Sigma)]/ p
	\end{align*}
	where we used the matrix identity
	\[(\II_p + m_{F_{\gamma_k}}(-\lambda_k) \Sigma)^{-1} = \II_p - m_{F_{\gamma_k}}(-\lambda_k) (\II_p + m_k(-\lambda_k) \Sigma)^{-1} \Sigma\] Each of the terms can be expressed in empirical quantities by
	\begin{align*}
		\tr ((\II_p + m_k \Sigma)^{-1} \Sigma)/ p  -  \EE_H \frac{1}{m_k(1 + t m_k)} 
		+  \EE_H \frac{1}{m_k (1 + t m_k)} - \frac{1}{m_k} [1 - \lambda_k m_k] 
		\rightarrow_{a.s.}  0
	\end{align*}
	With the same techniques, we get
	\begin{align*}
		& \tr ((\II_p + m_k \Sigma)^{-1} \Sigma (\II_p + m_{k'} \Sigma)^{-1} \Sigma)/ p  \\
		\rightarrow_{a.s.} & \frac{\lambda_k m_k}{m_k (m_k - m_{k'})} 
		-  \frac{\lambda_{k'} m_{k'}}{m_{k'}  (m_k - m_{k'})} 
		+  \frac{1}{m_k m_{k'}}
	\end{align*}
	Substitute these expressions back into the expressions for $E_{k k'}$ finishes the proof.
\end{proof}

\medskip

\begin{lemma}\label{lem:predijasp}
	With assumption  \ref{as:RMT}, and under $n_k, p \rightarrow \infty$, $p / n_k \rightarrow \gamma_k$,  we have the following convergence results
		\[\tr[(\hat{\Sigma}_k + \lambda_{k} \II_p)^{-1} (\hat{\Sigma}_{k'} + \lambda_{k'} \II_p)^{-1}\Sigma] / p \rightarrow_{a.s.} \mathcal{M}_{k k'}.\]	
		For $k \neq k'; k, k'\in \{1, \cdots, K\}$, we have:
		\begin{enumerate}
			\item Assume $n_1 = \cdots = n_K = n$, so $\gamma_1 = \cdots = \gamma_K = \gamma$, and use $\lambda_1 = \cdots = \lambda_K = \lambda$. 
			\[\mathcal{M}_{k k'} =  \frac{m_{F_{\gamma}}(-\lambda) - \lambda m'_{F_{\gamma}}(-\lambda)}{1 -  \gamma +  \gamma \lambda^2 m'_{F_{\gamma}}(-\lambda)}. \]
			\item Under the assumption $\Sigma = \II_p$,
			\[\mathcal{M}_{k k'} =   m_{F_{\gamma_k}} (-\lambda_k)  m_{F_{\gamma_{k'}}} (-\lambda_{k'}). \]
			\item Under Assumptions \ref{as:moment} and  \ref{as:aniso}, 
			\[\mathcal{M}_{k k'} = \frac{\lambda_k m_{F_{\gamma_k}} (-\lambda_k) - \lambda_{k'} m_{F_{\gamma_{k'}}} (-\lambda_{k'})}{\lambda_{k} \lambda_{k'} (m_{F_{\gamma_{k'}}} (-\lambda_{k'}) - m_{F_{\gamma_k}} (-\lambda_k))}. \]
		\end{enumerate}
\end{lemma}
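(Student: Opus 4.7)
The plan is to handle the three cases separately, mirroring the structure of the proof of Lemma~\ref{lem:aijasp} while accounting for the extra factor of $\Sigma$ inside the trace. A common foundation for all three cases is the independence of $\bX_k$ and $\bX_{k'}$ (from Assumption~\ref{as:RMT}), which allows me to condition on one sample covariance and apply random matrix theory tools to the other.

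Case~2 is immediate: if $\Sigma=\II_p$, the desired trace reduces exactly to $\mathcal{E}_{k k'}$ of Lemma~\ref{lem:aijasp}, and case~2 of that lemma already gives the product formula $m_{F_{\gamma_k}}(-\lambda_k)\,m_{F_{\gamma_{k'}}}(-\lambda_{k'})$. For case~1, where $\gamma_k=\gamma_{k'}=\gamma$ and $\lambda_k=\lambda_{k'}=\lambda$ so that $\hat{\Sigma}_k$ and $\hat{\Sigma}_{k'}$ are i.i.d., I would follow the argument used to prove case~1 of Lemma~\ref{lem:aijasp} (cf.\ Theorem~3 of \citet{sheng2020}). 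The extra $\Sigma$ factor requires substituting the additional trace limits $\tr[(\hat{\Sigma}+\lambda\II_p)^{-1}\Sigma]/p$ and $\tr[(\hat{\Sigma}+\lambda\II_p)^{-2}\Sigma]/p$ from Lemma~\ref{lem:cvgHat}, together with the Marchenko-Pastur fixed-point equation relating them through $m_{F_\gamma}(-\lambda)$ and $m'_{F_\gamma}(-\lambda)$.

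For case~3 I would apply the anisotropic local law (as in Corollary~3.9 of \citet{knowles2017anisotropic}, which is already invoked in the proof of case~3 of Lemma~\ref{lem:aijasp}), combined with the independence of the two sample covariances, to obtain
\[\tr[(\hat{\Sigma}_k+\lambda_k\II_p)^{-1}(\hat{\Sigma}_{k'}+\lambda_{k'}\II_p)^{-1}\Sigma]/p \;-\;\frac{1}{\lambda_k\lambda_{k'}}\int\frac{t\,dH(t)}{(1+\tilde m_k t)(1+\tilde m_{k'} t)}\rightarrow_{a.s.}0,\]
where $\tilde m_k$ is the parameter appearing in the deterministic equivalent of the $k$-th resolvent. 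The remaining integral splits via partial fractions,
\[\frac{t}{(1+\tilde m_k t)(1+\tilde m_{k'} t)} = \frac{1}{\tilde m_k-\tilde m_{k'}}\left[\frac{1}{1+\tilde m_{k'} t}-\frac{1}{1+\tilde m_k t}\right],\]
so each resulting one-variable integral $\int dH(t)/(1+\tilde m_k t)$ can be identified through the Marchenko-Pastur equation. Rewriting in terms of $m_{F_{\gamma_k}}(-\lambda_k)$ and $m_{F_{\gamma_{k'}}}(-\lambda_{k'})$ via \eqref{eq:def-comp-Stielt} yields the claimed closed form.

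The main obstacle is justifying the applications of the anisotropic local law when the ``test vectors'' are themselves columns of another random resolvent. This is handled by conditioning on one resolvent at a time and exploiting the independence of the two sample covariances, together with the bounded-moment Assumption~\ref{as:moment} and the spectral Assumption~\ref{as:aniso} that provide uniform operator-norm control on the relevant resolvents. The remaining work---partial-fraction decomposition followed by Marchenko-Pastur identities---is routine but requires careful bookkeeping to consolidate into the final compact form.
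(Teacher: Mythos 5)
Your proposal follows essentially the same route as the paper's proof: Case~2 is handled identically by reduction to $\mathcal{E}_{kk'}$ of Lemma~\ref{lem:aijasp}, and Case~3 matches the paper's argument exactly --- the anisotropic local law yields the deterministic-equivalent integral $\frac{1}{\lambda_k\lambda_{k'}}\int t\,dH(t)/[(1+m_kt)(1+m_{k'}t)]$, which is then evaluated by a partial-fraction decomposition and the Marchenko--Pastur identity $\int dH(t)/(1+m_kt)=1-\lambda_km_k$, just as in the paper's treatment of the analogous term in Lemma~\ref{lem:aijasp}.

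The one point to tighten is Case~1. For $k\neq k'$ the two resolvents are independent, so the limit is $\int t\,(xt+\lambda)^{-2}\,dH(t)$ with \emph{both} factors replaced by the same deterministic equivalent $(x\Sigma+\lambda\II_p)^{-1}$; this is \emph{not} the single-sample limit $\tr[(\hat{\Sigma}+\lambda\II_p)^{-2}\Sigma]/p\rightarrow_{a.s.}\gamma^{-1}\,(v_{F_\gamma}(-\lambda)-\lambda v'_{F_\gamma}(-\lambda))/[\lambda v_{F_\gamma}(-\lambda)]^2$ recorded in Lemma~\ref{lem:cvgHat}, which is the diagonal ($k=k'$) quantity and a genuinely different number. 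The paper instead evaluates $\int t\,(xt+\lambda)^{-2}dH(t)$ by differentiating the fixed-point equation for $x$ with respect to $\lambda$, writing the integral as $(m-\lambda m')/(x-\lambda x')$ and then using $x-\lambda x'=1-\gamma+\gamma\lambda^2 m'$. Your plan goes through provided the phrase ``substituting the trace limits from Lemma~\ref{lem:cvgHat}'' is replaced by this fixed-point/derivative computation; taken literally, that substitution would produce the wrong constant for the off-diagonal term.
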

\begin{proof}[Proof of Lemma~\ref{lem:predijasp}]
	The proof is similar to the proof of lemma \ref{lem:aijasp}. In the first case where $n_1 = \cdots = n_K = n$ so $\gamma_1 = \cdots = \gamma_K = \gamma$ and use $\lambda_1 = \cdots = \lambda_K = \lambda$, we have
	\[\tr[(\hat{\Sigma}_k + \lambda_{k} \II_p)^{-1} (\hat{\Sigma}_{k'} + \lambda_{k'} \II_p)^{-1}\Sigma] / p \rightarrow_{a.s.} \int \frac{t}{(xt +\lambda)^2}dH(t)\]
	When $\gamma$ is equal across all populations, we will use the shorter notation $m := m_{F_{\gamma}} (-\lambda), m' = m'_{F_{\gamma}} (-\lambda)$ in this supplement. By definitions, we have
	\[\int \frac{1}{x t + \lambda}dH(t) := m 
	\text{ and }
	 \int \frac{x't + 1}{(x t + \lambda)^2}dH(t) := m' .\]
	Here $x :=x_k $ is the solution to the fixed point equation in \eqref{eq:xk-fixed-pt}, and $x'$ is the derivative of $x$ with respect to $\lambda$. Then 
	\[m' = \int \frac{ (xt + \lambda - \lambda)\frac{x'}{x} + 1}{(x t + \lambda)^2}dH(t) = \frac{x'}{x} m + (1 - \frac{\lambda x'}{x})  \int \frac{1}{(x t + \lambda)^2}dH(t) \]
	\[\int \frac{1}{(x t + \lambda)^2}dH(t) = \frac{xm' - x'm}{x - \lambda x'} \]
	So the functional of interest is
	\begin{align*}
		 \int \frac{t}{(xt +\lambda)^2} d H(t) 
		=&~ \frac{\int \frac{x' t + 1}{(xt +\lambda)^2} d H(t) -  \int \frac{1}{(xt +\lambda)^2} d H(t)}{x'} \\
		=&~ \frac{m' -  \frac{xm' - x'm}{x - \lambda x'} }{x'} 
		= \frac{\frac{m'x - \lambda m' x' - xm' + x'm}{x - \lambda x'} }{x'} \\
		=&~ \frac{m - \lambda m' }{x - \lambda x'}  
		=~ \frac{m - \lambda m'}{1 -  \gamma +  \gamma \lambda^2 m'}.
	\end{align*}
	For the second case, when $\Sigma = \II_p$, $\mathcal{M}_{k k'} = \mathcal{E}_{k k'}$. So the proof follows from lemma \ref{lem:aijasp}. For the third case, pulling the trick with results from \citet{knowles2017anisotropic} again gives
	\begin{align*}
		 \tr(\Sigma (\hat{\Sigma}_k + \lambda_k \II_p)^{-1}  (\hat{\Sigma}_{k'} + \lambda_{k'} \II_p)^{-1}) / p 
		\rightarrow_{a.s.} & \frac{1}{\lambda_k \lambda_{k'}} \EE_H \frac{t}{(1 + t m_k) (1 + t m_{k'})} 
	\end{align*}
	which can be consistently estimated by
	\[\frac{1}{\lambda_k \lambda_{k'}}  \frac{\lambda_k m_k - \lambda_{k'} m_{k'}}{m_{k'} - m_k }\]
	as claimed by the lemma.
\end{proof}


\subsection{Proofs of Theorems}

\begin{proof} [Proof of Theorem \ref{th:tl-rda-ErrorAsp}] 
	Firstly, when $\pi_{-} = \pi_{+}$, we have 
	\[Err(\bw) = \Phi\left(\frac{(\hat{d}(\bw))^{\top} \mu_{-1}+ \hat{b}_K}{\sqrt{(\hat{d}(\bw))^{\top} \Sigma (\hat{d}(\bw))}}\right)\]
	Under Assumptions \ref{as:RCW} and \ref{as:CCW}; by Lemma 3.7 in \citet{dobriban2018high}, we know $\hat{b} \rightarrow_{a.s.} 0$. By  Lemma 3.8 in the same paper, we can use the almost sure limit $ (\hat{d}(\bw))^{\top} \mu_{-1} \rightarrow_{a.s.} (\hat{d}(\bw))^{\top} \delta_{K}$ and arrive at
	\[Err(\bw) \rightarrow_{a.s.} \Phi\left(\frac{(\hat{d}(\bw))^{\top} \delta_K}{\sqrt{(\hat{d}(\bw))^{\top} \Sigma (\hat{d}(\bw))}}\right).
	\]
	Observe that 
	\[(\hat{d}(\bw))^{\top} \delta_K = \bw^{\top} \hat{\bu}, \ \ \ \hat{\bu} = vec\left[\hat{\delta}_{k} (\hat{\Sigma}_k + \lambda_k)^{-1}  \delta_K\right]\]
	\[(\hat{d}(\bw))^{\top} \Sigma (\hat{d}(\bw)) = \bw^{\top} \hat{\calA} \bw, \ \ \ 
	\hat{\calA} = mat \left[ \hat{\delta}_{k}^{\top} (\hat{\Sigma}_k + \lambda_k\II_p)^{-1}  \Sigma  (\hat{\Sigma}_{k'} + \lambda_{k'}\II_p)^{-1} \hat{\delta}_{k'} \right ]\]
	Here $vec[\cdot], mat[\cdot]$ are the vector operator and matrix operator respectively. As argued in the proof of Lemma 3.7 in \citet{dobriban2018high}, one can decompose $\hat{\delta}_{k}$ in the $k^{th}$ population as 
	\[\hat{\delta}_{k} = \delta_{k} + \frac{1}{\sqrt{n_k}} \Sigma^{1/2} \tilde{\bZ}_k\]
	where $\tilde{\bZ}_k \in \RR^p$ are standard normal random vectors independent of $\bX_{k'}$ conditionally on $\mu_{\pm 1, k'}, \delta_{k'}$ for all $k' = 1,\cdots, K$. Thus, Lemma \ref{lem:quadConv} and Lemma \ref{lem:cvgHat} gives us
	\[\hat{\bu} \rightarrow_{a.s.} \rho_{k K} 
	\alpha_{k} \alpha_{K} \ vec\left[ m_{F_{\gamma_k}}(-\lambda_{k}) \right]\]
	We decompose $\hat{\calA}$ into three parts for further analysis, such that
	\[\hat{\calA}_{k k'} = \hat{\delta}_{k}^\top (\hat{\Sigma}_k + \lambda_k)^{-1}  \Sigma  (\hat{\Sigma}_{k'} + \lambda_{k'})^{-1} \hat{\delta}_{k'} = \tilde{A}_{k k'} + 2 \tilde{B}_{k k'} + \tilde{C}_{k k'} \]
	where 
	\begin{align*}
		M^{(kk')} &:= ( \hat{\Sigma}_k  + \lambda_k \II_p)^{-1} \Sigma (\hat{\Sigma}_{k'} + \lambda_{k'} \II_p)^{-1} \\
		\tilde{A}_{k k'} & := \delta^{\top}_k M^{(kk')} \delta_{k'} \\
		\tilde{B}_{k k'} & := \delta_{k}^\top  M^{(kk')} (\hat{\delta}_{k'} - \delta_{k'})\\
		\tilde{C}_{k k'} & := (\hat{\delta}_{k} - \delta_{k})^{\top} M^{(kk')}  (\hat{\delta}^{\top}_{k'} - \delta_{k'})
	\end{align*}
	Firstly, one can show $\tilde{B}_{k k'} \rightarrow_{a.s.} 0$ with the same techniques used to prove $\hat{b} \rightarrow_{a.s.} 0$. For the off-diagonal terms in $\tilde{A}$, one can again invoke Lemma~\ref{lem:quadConv} and Lemma~\ref{lem:cvgHat} to show 
	\[
	\tilde{A}_{k k'} -\rho_{k  k'} \alpha_{k} \alpha_{k'} \tr(M^{(kk')}) / p\rightarrow_{a.s.} 0.
	\]
	Next, the limit of $\tr(M^{(kk')}) / p$ is provided by Lemma~\ref{lem:predijasp}. For the diagonal terms $\tilde{A}_{k k}$, one can simply read off the limit of $\tr(M^{(kk)}) / p$ from Lemma~\ref{lem:cvgHat}. Finally, we have the following equality for $\tilde{C}$ based on decomposing $\hat{\delta}_{k}$:
	\begin{align*}
		\tilde{C}_{k k'} & = \frac{1}{n_k} \tilde{Z}_k^\top \Sigma^{1/2}   M^{(kk')} \Sigma^{1/2}  \tilde{Z}_{k'}
	\end{align*}
	We know $\tilde{C}_{k k'} \rightarrow_{a.s.} 0$ for $k \neq k'$ based on Lemma~\ref{lem:quadConv} due to the independence between $\tilde{Z}_k, \tilde{Z}_{k'}$. For the diagonal terms of $\tilde{C}$, we have 
	\[	\tilde{C}_{k k} -\gamma_k \tr (\Sigma M^{(kk)}) / p
	\rightarrow_{a.s.} 0
	\]
	and the limit of the trace of $\Sigma M^{(kk)}$ can again be read off from Lemma~\ref{lem:cvgHat}.
\end{proof}

\medskip

\begin{proof}[Proof of Theorem~\ref{th:tl-rda-MinEst}] Recall the decomposition $\hat{\delta}_{k} = \delta_{k} + \frac{1}{\sqrt{n_k}} \Sigma^{1/2} \tilde{\bZ}_k$, we then have from the objective in \eqref{eq:CrEst}, that
\begin{align*}
	\left\Vert \sum_{k = 1}^{K} w_k (\hat{\Sigma}_k + \lambda_k \II_p)^{-1} \hat{\delta}_k - \Sigma^{-1} \delta_K\right\Vert_2^2 = \bw^{\top} \left(\hat{\calA}^E + \hat{\calR}^E \right) \bw 
	- 2 (\hat{\bu}^E)^{\top} \bw + \Vert \delta_K^{\top}\Sigma^{-1} \Vert_2^2 
\end{align*}
where
\begin{align*}	
\hat{\calA}^E 
=&~mat \left[\delta_{k}^{\top} (\hat{\Sigma}_k + \lambda_k \II_p)^{-1} (\hat{\Sigma}_{k'} + \lambda_{k'} \II_p)^{-1} \delta_{k'} \right]\\
\hat{\calR}^E =&~ mat \left[ (\hat{\delta}_k - \delta_k)^\top (\hat{\Sigma}_k + \lambda_k \II_p)^{-1} (\hat{\Sigma}_{k'} + \lambda_{k'} \II_p)^{-1} (\hat{\delta}_{k'} - \delta_{k'}) \right] \\
\hat{\bu}^E =&~ \delta_K^{\top}\Sigma^{-1} vec\left[(\hat{\Sigma}_{k} + \lambda_{k} \II_p)^{-1} \delta_{k'} \right]
\end{align*}
Taking the derivative with respect to $\bw$, we get the finite-sample expression for optimal estimation weight
\[\hat{\bw}^E = \left(\hat{\calA}^E + \hat{\calR}^E  \right)^{-1} \hat{\bu}^E\]
Taking $n,p \rightarrow \infty; p / n_k \rightarrow \gamma_k$, we can get the asymptotic expressions for each of the three terms above. Consider the linear term $\hat{\bu}^E$ first. By Lemma~\ref{lem:quadConv}
\[\hat{\bu}^E \rightarrow_{a.s.}  \bu^E = vec \left[\rho_{k K} \alpha_k \alpha_K \tr[ \Sigma^{-1} (\hat{\Sigma}_{k} + \lambda_k \II_p)^{-1}]/ p \right] 
\]
Again, the \textit{anisotropic local law} tells us that:
\[\tr(\Sigma^{-1} (\hat{\Sigma}_{c, k} + \lambda_k \II_p)^{-1}) / p
\rightarrow_{a.s.}  \frac{1}{\lambda_k} \EE_H \left[\frac{1}{t (1 + t m_k(-\lambda_k)) }\right] 
= \frac{1}{\lambda_k} \EE_H \left[ \frac{1}{t} - \frac{m_k(-\lambda_k)}{1 + t m_k(-\lambda_k) }\right] \]
	\begin{align*}
		\tr(\Sigma^{-1} (\hat{\Sigma}_{c, k} + \lambda_k \II_p)^{-1}) / p
		\rightarrow &~ \frac{1}{\lambda_k} 
		\EE_H \left[\frac{1}{t [1 + t m_k(-\lambda_k)] }\right]  \label{eq1} \\
		= &~  \frac{1}{\lambda_k} \EE_H \left[ \frac{1}{t} - \frac{m_k(-\lambda_k)}{1 + t m_k(-\lambda_k) } \right] \\
		\rightarrow &~ \frac{1}{\lambda_k} \tr(\Sigma^{-1}) / p  - m_k(-\lambda_k) \tr [(\hat{\Sigma}_{c, k} + \lambda_k \II_p)^{-1}) ]/ p.
	\end{align*}
For diagonal terms of $\hat{\calA}^E$, we have 
\[\hat{\calA}^\EE_{kk} \rightarrow_{a.s.} \alpha_{k}^2 m_{F_{\gamma_k}}(-\lambda_k)\]
For off diagonal terms of $\hat{\calA}^E$, we have 
\[\hat{\calA}^\EE_{kk'} \rightarrow_{a.s.} \rho_{k  k'} \alpha_{k} \alpha_{k'} \mathcal{E}_{k k'} \]
where $\mathcal{E}_{k k'}$ can be found in Lemma~\ref{lem:aijasp}. Lastly, we firstly observe that the asymptotic off-diagonal terms of $\hat{\calR}^E$ are zero as $\tilde{\bZ}_k$ and $\tilde{\bZ}_{k'}$ are independent (see Lemma~\ref{lem:quadConv}). For diagonal terms of $R^E$, we have 
\[\hat{\calR}^E_{kk} = \tilde{\bZ}_k^\top \Sigma^{1/2} (\hat{\Sigma}_k + \lambda_k \II_p)^{-1} (\hat{\Sigma}_{k} + \lambda_{k} \II_p)^{-1} \Sigma^{1/2} \tilde{\bZ}_k \rightarrow_{a.s.} \frac{v_{F_{\gamma_k}}(-\lambda_{k}) - \lambda_k v'_{F_{\gamma_k}}(-\lambda_k)}{\lambda_{k} v_{F_{\gamma_k}}(-\lambda_{k})^2} \]
Gathering the asymptotic expressions together, along with the uniqueness of the asymptotic minimizer, we finish the proof.
\end{proof}

\medskip

\begin{proof}[Proof of Theorem~\ref{th:tl-rda-MinPred}] The proof is similar to the proof of Theorem~\ref{th:tl-rda-MinEst}. From the objective function in \eqref{eq:CrPred}, we have
\begin{align*}
	\EE_{x_0} \left\Vert [\sum_{k = 1}^{K} w_k (\hat{\Sigma}_c + \lambda_k \II_p)^{-1} \hat{\delta}_k - \Sigma^{-1} \delta_K]^{\top} x_0 \right\Vert_2^2=
	~ \bw^\top [\hat{\calA}^P + \hat{\calR}^P] \bw - 2 \bw^\top \hat{\bu}^P  
	 + \Vert\delta_K^\top \Sigma^{-1/2}\Vert_2^2 
	 \end{align*}
where
	\begin{align*}  
	\hat{\calA}^P =&~
	mat \left[\delta_{k}^{\top} (\hat{\Sigma}_k + \lambda_k \II_p)^{-1} \Sigma (\hat{\Sigma}_{k'} + \lambda_{k'} \II_p)^{-1} \delta_{k'} \right]\\
	\hat{\calR}^P =&~
	 mat \left[ (\hat{\delta}_k - \delta_k)^\top (\hat{\Sigma}_k + \lambda_k \II_p)^{-1}\Sigma (\hat{\Sigma}_{k'} + \lambda_{k'} \II_p)^{-1} (\hat{\delta}_{k'} - \delta_{k'}) \right] \\
	\hat{\bu}^P =&~ vec\left[\delta_K^{\top}  (\hat{\Sigma}_{k} + \lambda_{k} \II_p)^{-1} \delta_{k'} \right]
\end{align*}
We can again take the derivative with respect to $\bw$ and set it to zero, we get 
\[\hat{\bw}^P = \left( \hat{\calA}^P +  \hat{\calR}^P \right)^{-1} \hat{\bu}^P\]
Taking $n,p \rightarrow \infty; p / n_k \rightarrow \gamma_k$, we can get the expression for each of the three terms above. Consider the linear term $\hat{\bu}^P$, by Lemma~\ref{lem:quadConv}
\[\hat{\bu}^P \rightarrow_{a.s.}  vec \left[\rho_{k K} \alpha_k \alpha_K m_{F_{\gamma_k}}(-\lambda_{k}) \right].
\]
For diagonal terms of $\hat{\calA}^P$, we have 
\[\hat{\calA}^P_{kk} \rightarrow_{a.s.}  \frac{\alpha_{k}^2}{\gamma_k} \frac{v_{F_{\gamma_k}}(-\lambda_k) - \lambda v'_{F_{\gamma_k}}(-\lambda_k)}{[\lambda v_{F_{\gamma_k}}(-\lambda_k)]^2} \]
For off diagonal terms of $ \hat{\calA}^P$, we have 
\[ \hat{\calA}^P_{kk'} \rightarrow_{a.s.} \rho_{k  k'} \alpha_{k} \alpha_{k'} \mathcal{M}_{k k'}. \]
The off-diagonal terms of $ \hat{\calR}^P$ again converges to zero, and the diagonals are
\[ \hat{\calR}^P_{kk} = \tilde{\bZ}_k^\top \Sigma \Sigma^{1/2} (\hat{\Sigma}_k + \lambda_k \II_p)^{-1} (\hat{\Sigma}_{k} + \lambda_{k} \II_p)^{-1} \Sigma^{1/2} \tilde{\bZ}_k \rightarrow_{a.s.} \frac{v'_k(-\lambda_{k}) - v_k^2(-\lambda_k)}{\lambda_k^2 v^4_k(-\lambda_{k})}.
\]
Gathering the asymptotic expressions together, along with the uniqueness of the asymptotic minimizer, we finish the proof.
\end{proof}

\medskip

\begin{proof}[Proof of Theorem~\ref{th:tlH-rda-MinEst}]
	The proof is identical to that of Theorem~\ref{th:tl-rda-MinEst} with minimal changes due to the differing covariance matrices $\Sigma_k$.
\end{proof}

\medskip

\begin{proof}[Proof of Theorem~\ref{th:tlH-rda-MinPred}]
	The proof is identical to that of Theorem~\ref{th:tl-rda-MinPred} with minimal changes due to the differing covariance matrices $\Sigma_k$.
\end{proof}

\subsection{Proofs of Corollaries}

\begin{proof}[Proof of Corollary~\ref{cor:lim-err-indiv}]
	We follow the proof of Theorem~\ref{th:tl-rda-MinPred} for the special case $\Sigma = \II_p$. Consequently,
	\[
	\hat{\calA}^P_{kk'} 
	=
	\delta_{k}^{\top} (\hat{\Sigma}_k + \lambda_k \II_p)^{-1} \Sigma (\hat{\Sigma}_{k'} + \lambda_{k'} \II_p)^{-1} \delta_{k'}	
	\rightarrow_{a.s.}   
	\begin{cases}
	\alpha_k^2m'_k(-\lambda_k) &\text{if }k=k'\\
	\rho_{kk'}\alpha_k\alpha_{k'}
	m_k(-\lambda_k)m_{k'}(-\lambda_{k'}) &\text{otherwise}
	\end{cases}
	\]
	using the limits from Lemma~\ref{lem:cvgHat}. The rest of the expressions follow similarly.
\end{proof}

\section{Numerical Experiments} 
\subsection{Validation of Limiting Error Formulae}
In this subsection, we use synthetic data to verify the accuracy of the theoretical formula in Theorems~\ref{th:tl-rda-ErrorAsp}, \ref{th:tl-rda-MinEst}, \ref{th:tl-rda-MinPred}. We use a Toeplitz-type covariance matrix, $n_1,\cdots, n_K = 150, 140, 130, 120, 110, 100$ and $p = 150$. We use the same signal strength for all population $\alpha^2 = 0.5$ and the same weight correlation $\rho = 0.5$. The prediction error is based on $2000$ simulated testing data points. We then vary $\lambda_{k}$ from $0.3$ to $10$. For each $\lambda_k$, the simulation is repeated $50$ times. One can refer to figure \ref{indTheo} and figure \ref{PoolTheo} for the comparison between simulated testing error rate (boxplots) and theoretical error rate (red line) for TL-RDA and TL-RDA respectively. We can see that the limiting formulae are very accurate across the $\lambda_{k}$ considered.
\begin{figure}
	\includegraphics[width= 0.9\textwidth]{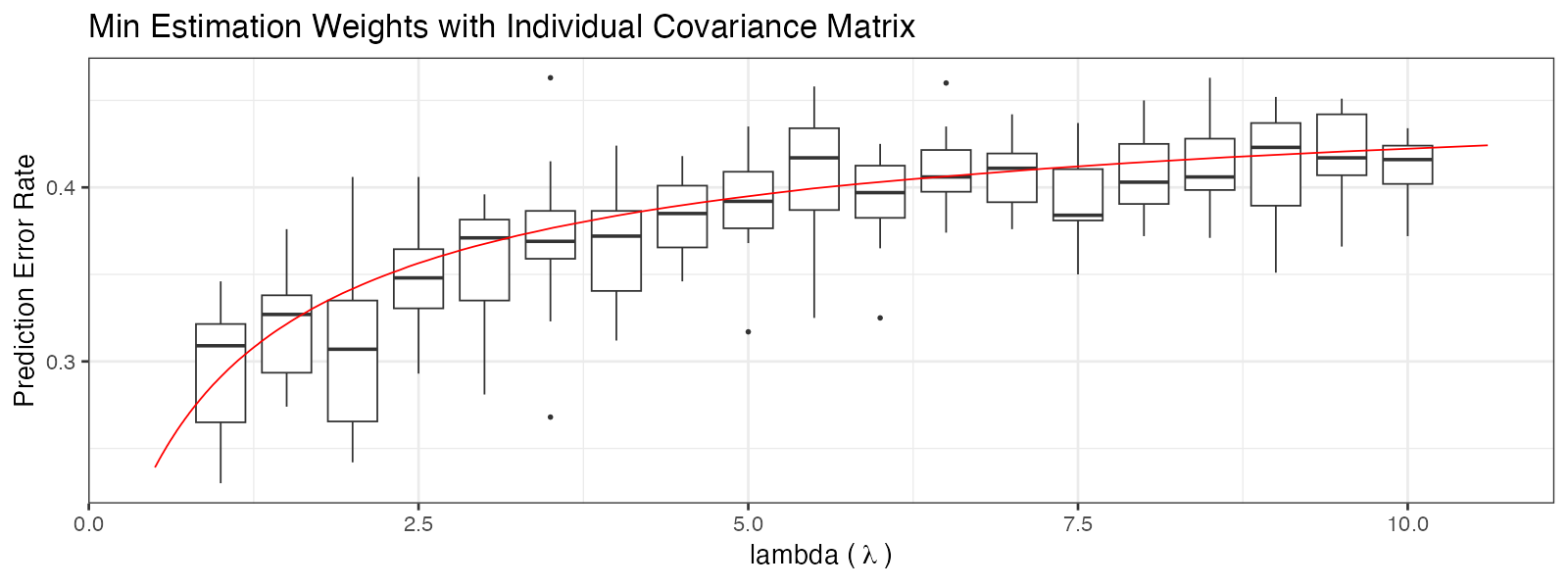}
	\centering
		\includegraphics[width= 0.9\textwidth]{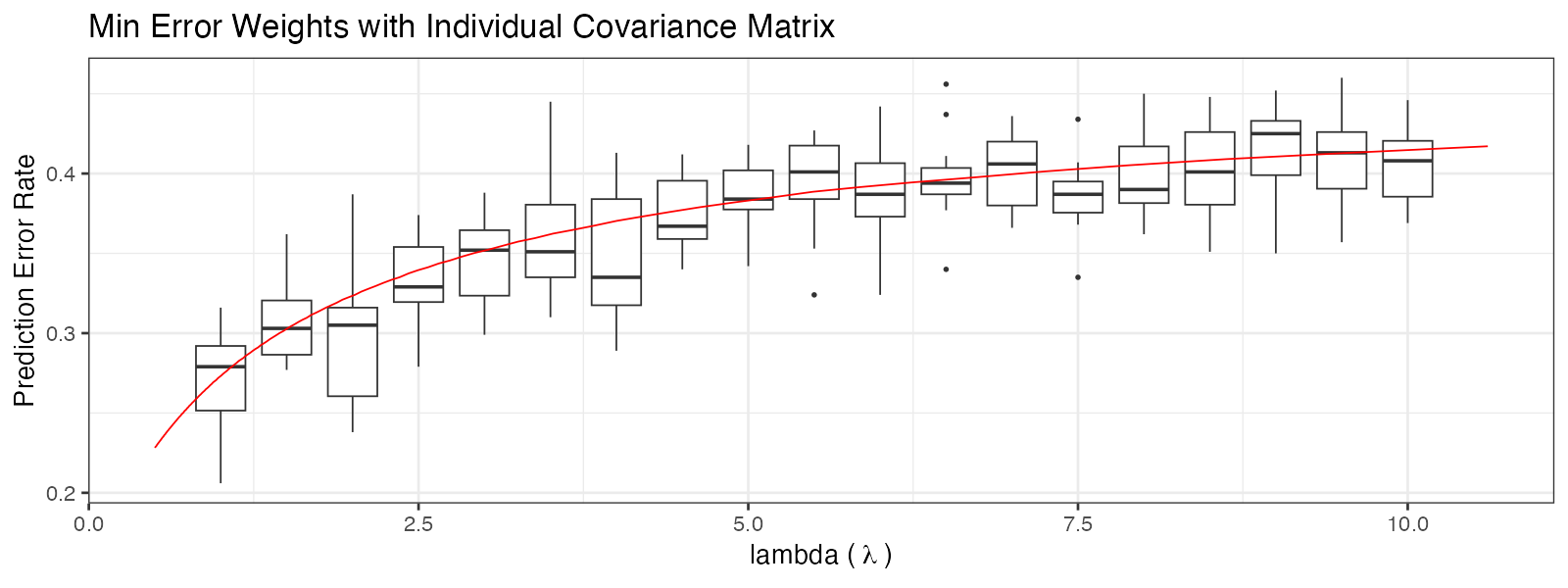}
	\centering
	\caption{Individual sample covariance matrices}
	\label{indTheo}
\end{figure}
\begin{figure}
	\includegraphics[width= 0.9\textwidth]{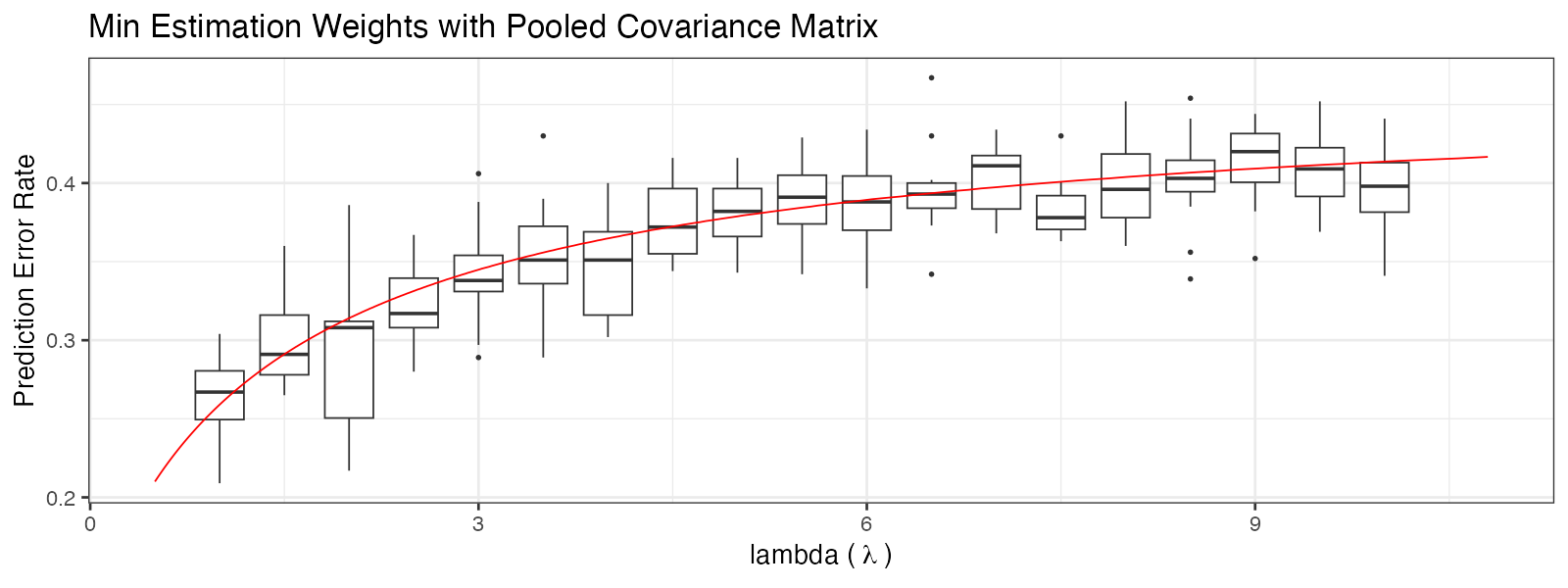}
	\centering
	\includegraphics[width= 0.9\textwidth]{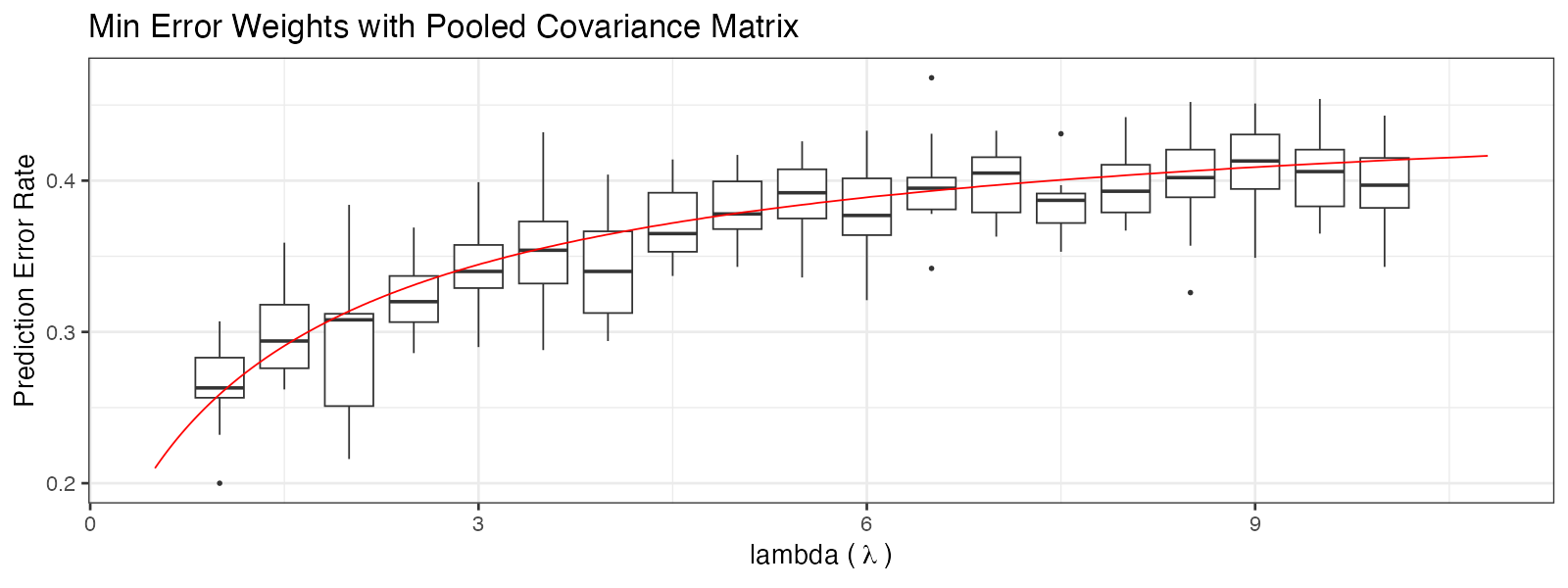}
	\centering
	\caption{Pooled sample covariance matrices}
	\label{PoolTheo}
\end{figure}
\subsection{Theoretical Error comparison}
Here we plot the limiting errors of the naive RDA, TL-RDA with estimation weight and TL-RDA with prediction weight under different scenarios. Note the naive RDA is fitted on target population data only. One can refer to figure \ref{indcompare} and figure \ref{poolcompare} for comparisons of TL-RDA and TLP-RDA respectively. We vary all parameters including correlation $\rho$, aspect ratio $\gamma$, signal strength $\alpha^2$, number of populations $K$ and population covariance matrix eigenvalues $\Sigma$. One can directly observe that TL estimators beat naive RDA in every set up considered, and not surprisingly, the prediction weight (minError) beats estimation weight (MinEst) in all set ups. Another interesting fact is estimation weight is usually not too much worse than prediction weight, especially in the TLP-RDA case. Lastly, we observe that the prediction errors turn to increase when $\gamma_k$s increase, decreases when $\alpha^2$ increases, decrease when the number of populations increases. In addition, the error rate decreases when the eigenvalue of $\Sigma$ decreases faster, that is, when the correlation strength between the features get stronger. 
\begin{figure}
	\includegraphics[width= 1\textwidth, height=0.32\textheight]{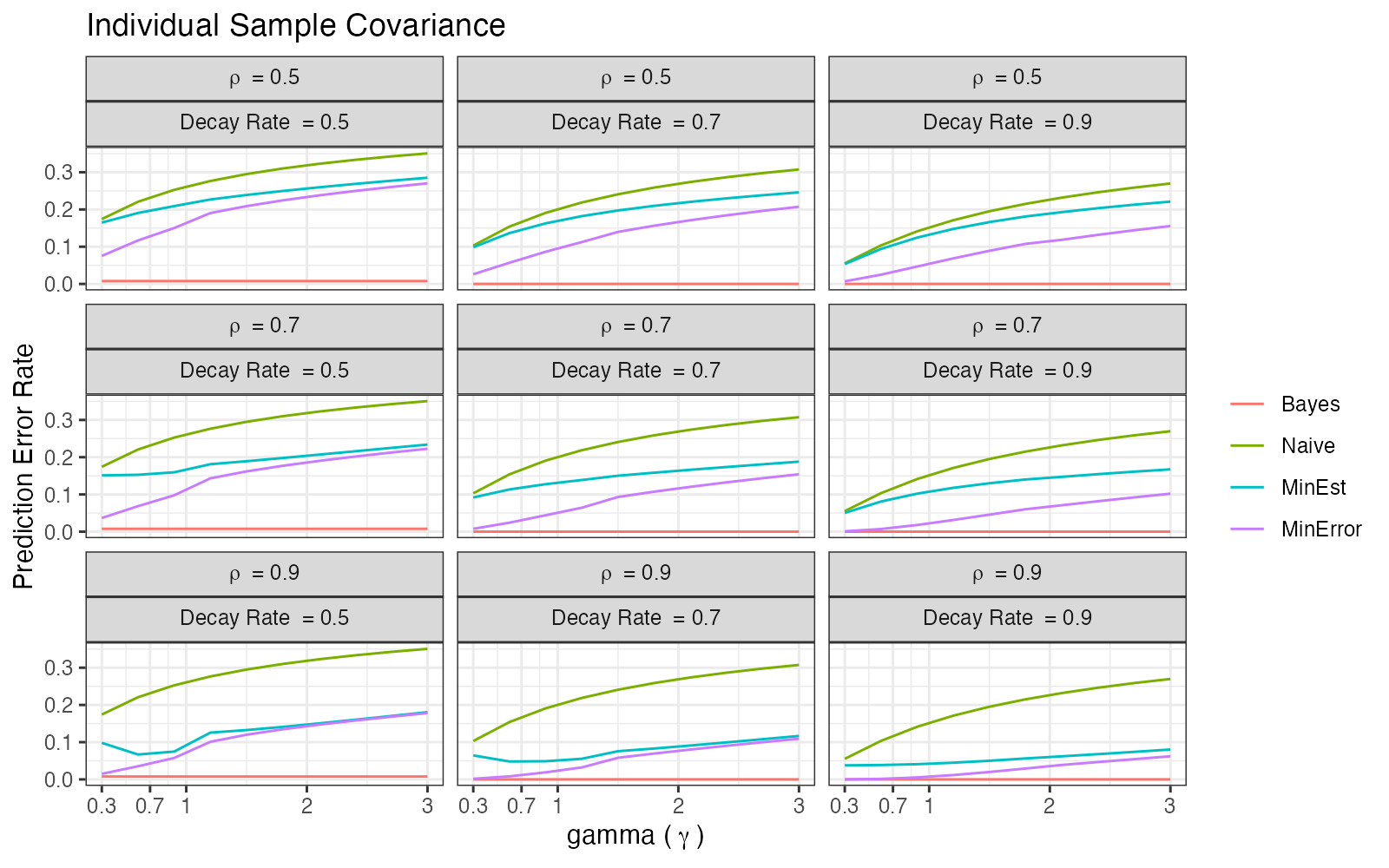}
	\centering
		\includegraphics[width= 1\textwidth,height=0.32\textheight]{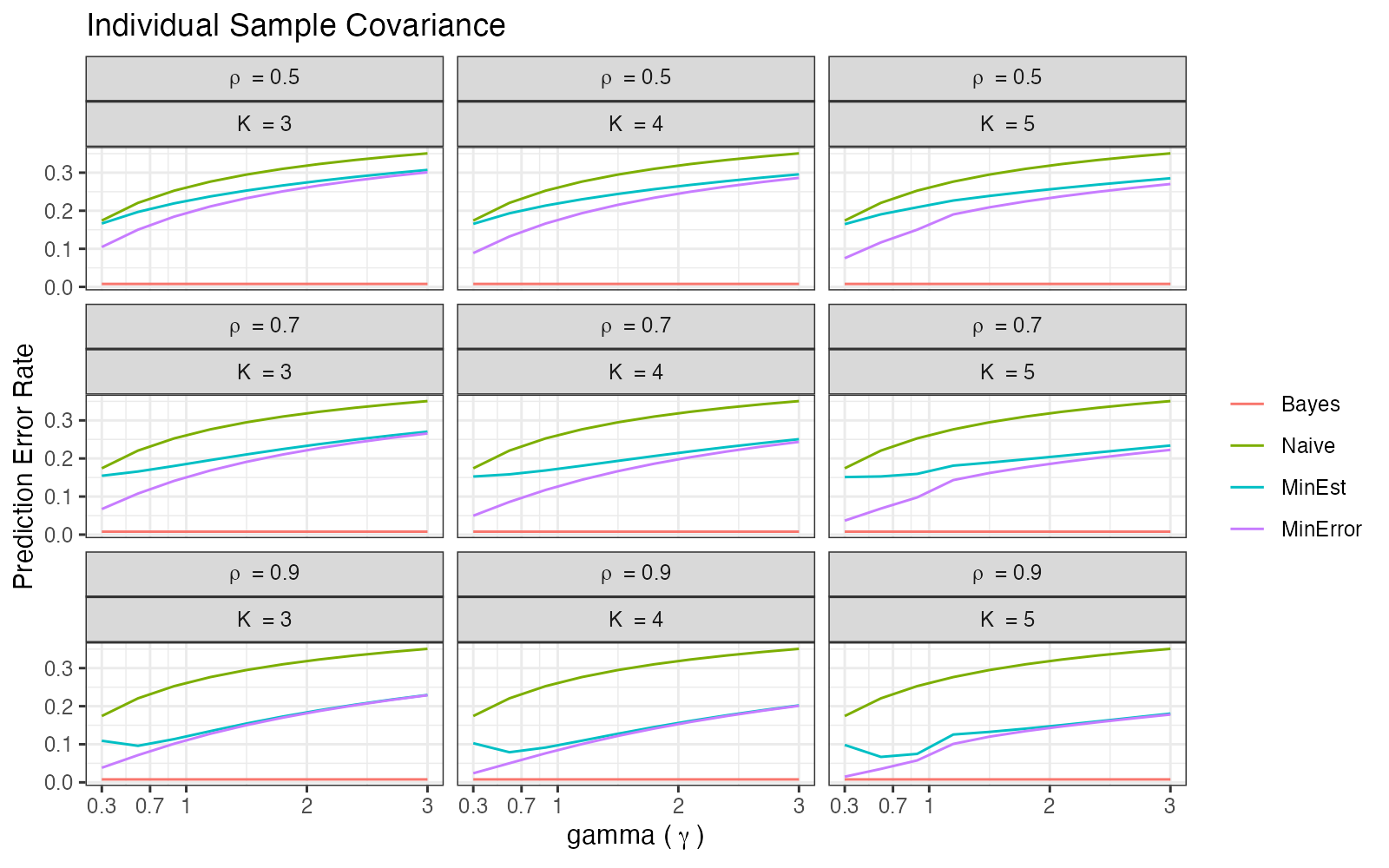}
	\centering
		\includegraphics[width= 1\textwidth,height=0.32\textheight]{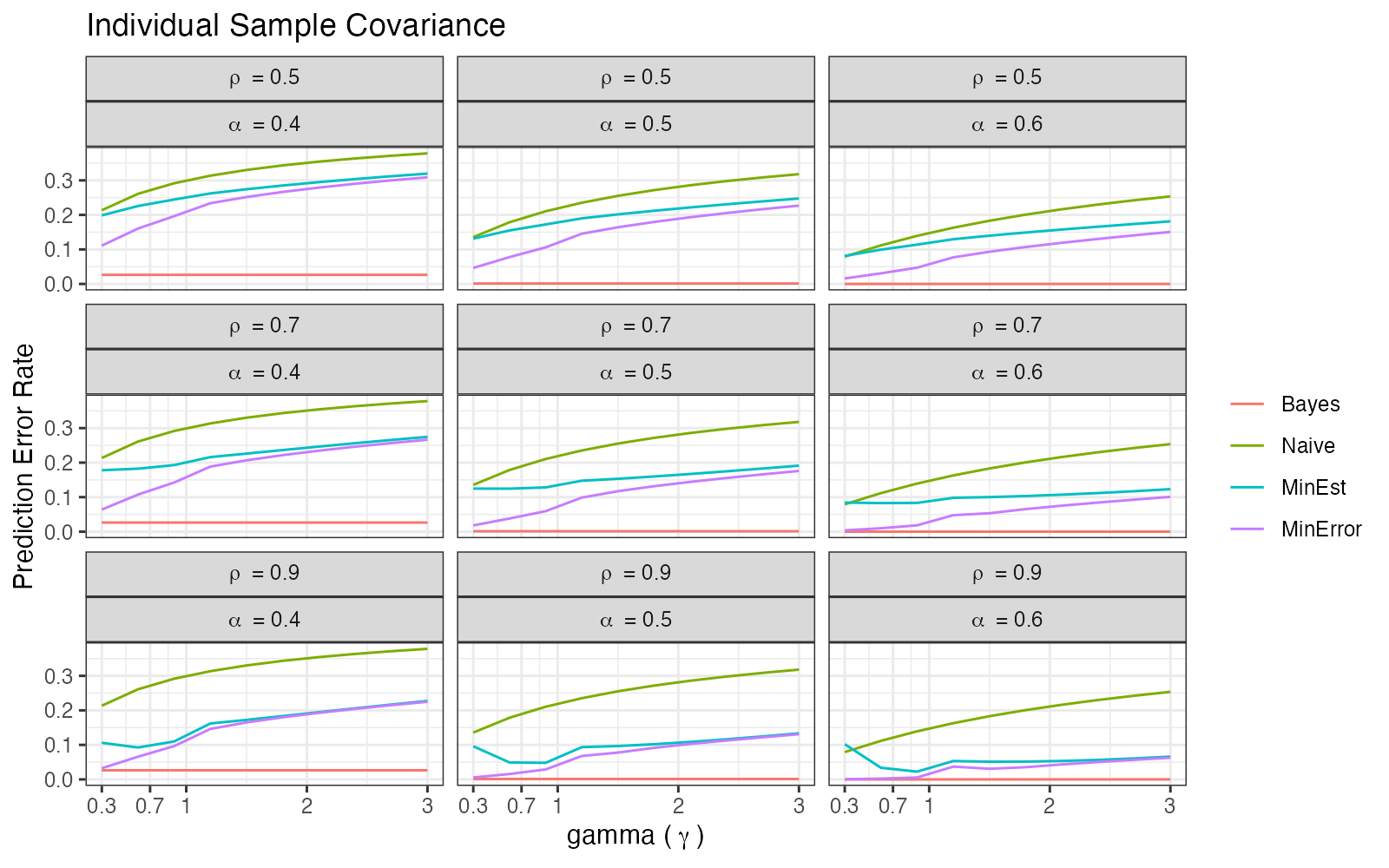}
	\centering
	\caption{Individual sample covariance matrices}
	\label{indcompare}
\end{figure}
\begin{figure}
	\includegraphics[width= 1\textwidth, height=0.32\textheight]{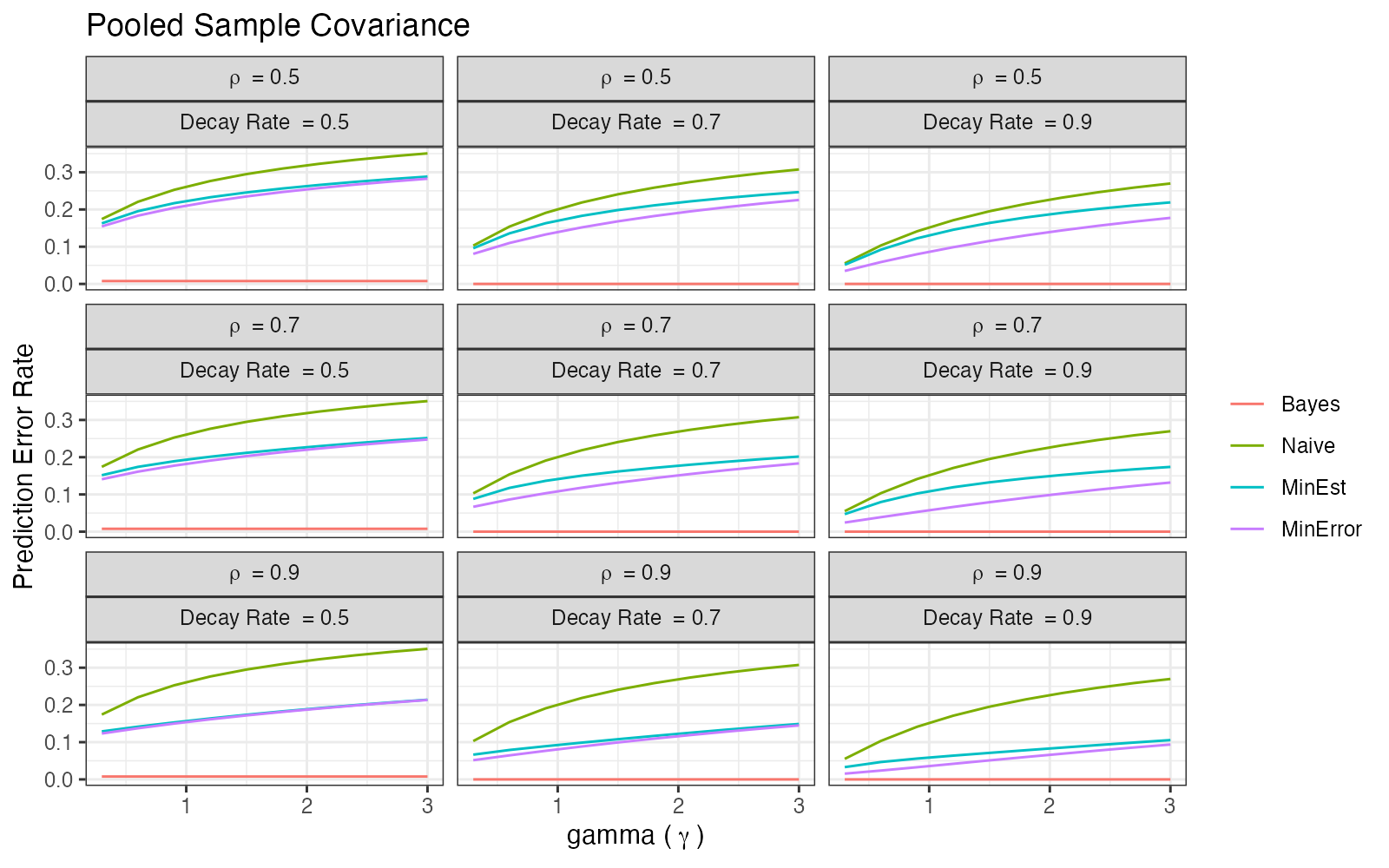}
	\centering
	\includegraphics[width= 1\textwidth, height=0.32\textheight]{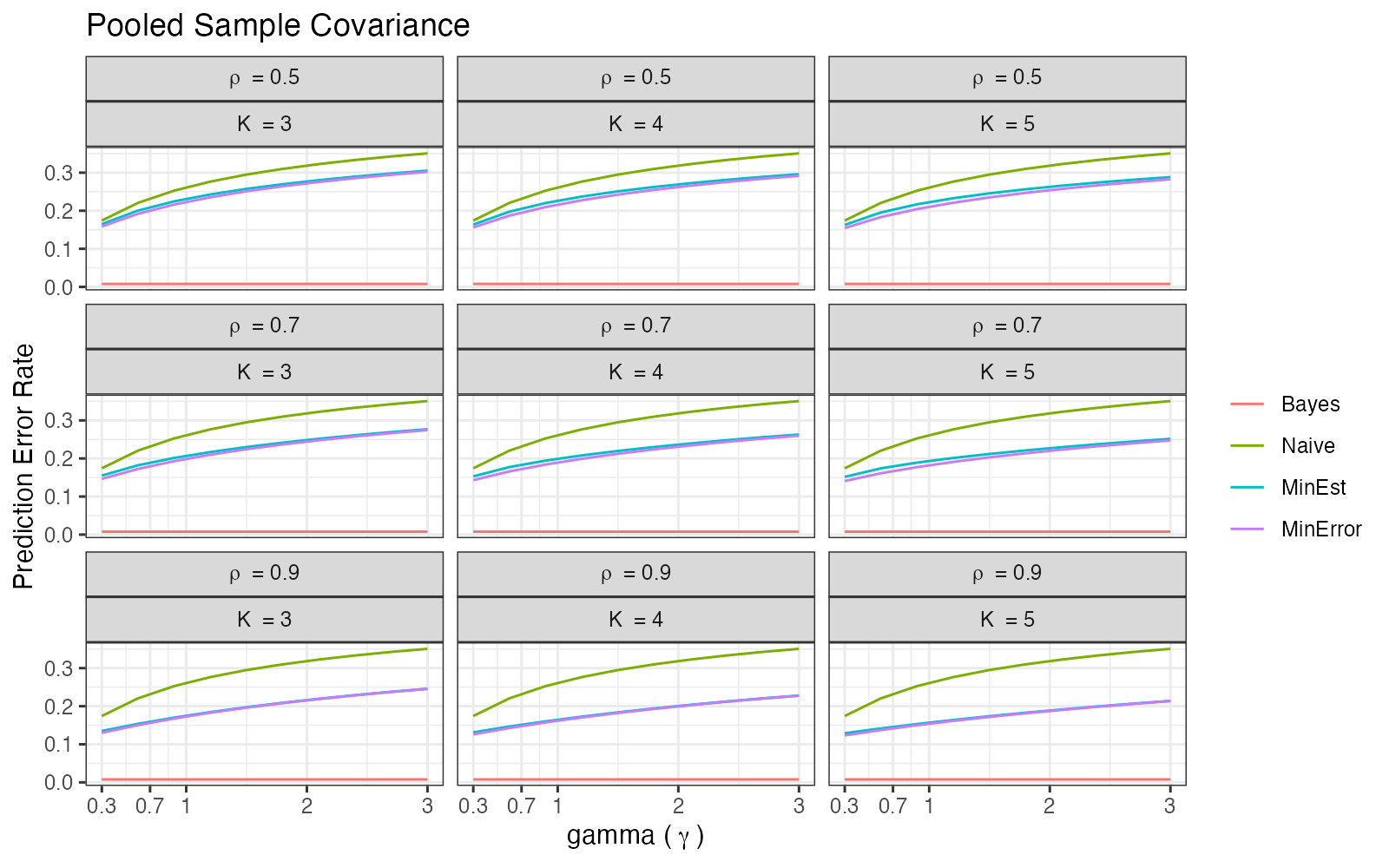}
	\centering
		\includegraphics[width= 1\textwidth, height=0.32\textheight]{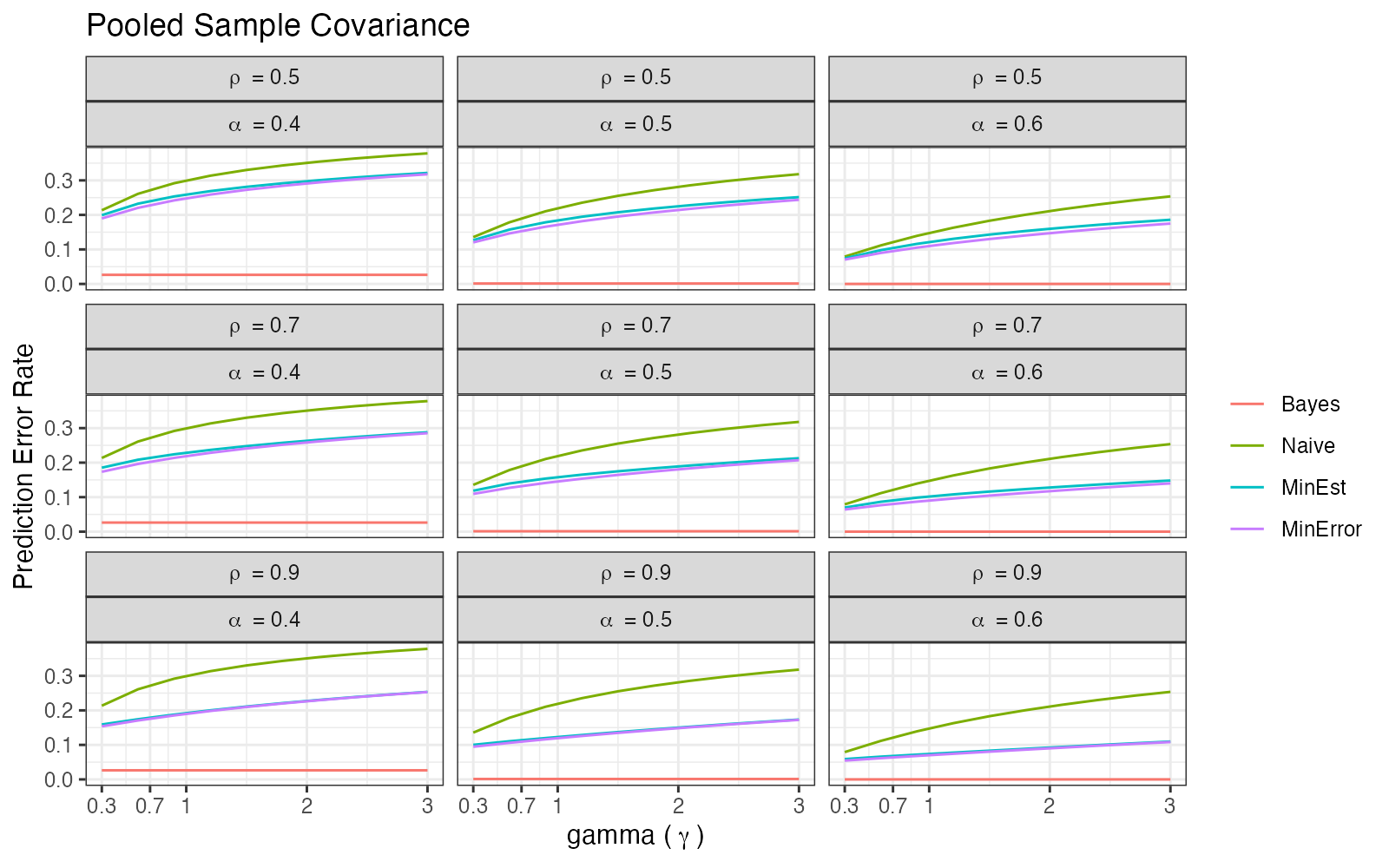}
	\centering
	\caption{Pooled covariance matrices}
		\label{poolcompare}
\end{figure}
\subsection{Estimator for Hyperparameters}\label{sec:estim-alpha-rho}
We have assumed through out this paper that $\alpha_{k}^2, \rho_{k  k'}$ are known constants. We prove that the usual moment estimators, listed below, are consistent after debiasing.
\[\hat{\alpha}_{k}^2 = \sum_{i = 1}^p [(\hat{\mu}_{+1, k})_i - (\hat{\mu}_{-1, k})_i]^2 / 4
-\gamma .
\]
\[\hat{\rho}_{k k'} = \sum_{i = 1}^p [(\hat{\mu}_{+1, k})_i - (\hat{\mu}_{-1, k})_i] [(\hat{\mu}_{+1, k'})_i - (\hat{\mu}_{-1, k'})_i] / 4
-\gamma .
\]
In order to prove the consistensies of the estimators above, it is sufficient to prove the following.
\begin{proposition}\label{pr:alpha} 
	For a length $p$ random vector $\mu$ and a $n \times p$ matrix $\bX$ generated by 
	\[E(\mu) = 0 \ \ \ Var[{\mu}] = \alpha^2 \II_p / p \ \ \ X_{ij} | \mu_j \sim N(\mu_j, 1) \ \ \ Cov(X) = \Sigma \]
	We have as $p, n \rightarrow \gamma, p/n \rightarrow \gamma$,
	\[\hat{\alpha}^2 := p \tilde{\alpha} = \sum_{j = 1}^{p} \hat{\mu}_j^2 \rightarrow \alpha^2 + \gamma. \]
\end{proposition}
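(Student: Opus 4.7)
The plan is to decompose the column-wise sample mean into signal and noise, and then to control each resulting piece using the tools already assembled in the paper. Writing the rows of $\bX$ conditional on $\mu$ as i.i.d.\ $N(\mu,\Sigma)$ (which is the joint law consistent with $X_{ij}\mid \mu_j\sim N(\mu_j,1)$ and $\operatorname{Cov}(X)=\Sigma$), we have the representation $\hat\mu = \mu + \Sigma^{1/2}\bar Z$, where $\bar Z = \tfrac{1}{n}\sum_{i=1}^n Z_i$ with the $Z_i$ i.i.d.\ $N(0,I_p)$ and independent of $\mu$. Expanding the sum of squares,
\[
\sum_{j=1}^p \hat\mu_j^2 \;=\; \|\mu\|_2^2 \;+\; 2\mu^{\top}\Sigma^{1/2}\bar Z \;+\; \bar Z^{\top}\Sigma\bar Z,
\]
and I would analyse each of the three summands separately.

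For the first summand, $\|\mu\|_2^2$, I would apply Lemma~\ref{lem:quadConv} with $A=I_p$, or alternatively compute $E\|\mu\|_2^2 = \alpha^2$ directly from $\Var(\mu_j)=\alpha^2/p$ and bound the variance of $\|\mu\|_2^2$ by $O(1/p)$ via the $(4+\eta)$-th moment assumption in Assumption~\ref{as:RCW}; a Borel-Cantelli argument then yields $\|\mu\|_2^2 \to \alpha^2$ almost surely. For the cross term, independence of $\mu$ and $\bar Z$ together with their zero means makes the term mean zero, and its variance equals $E[\mu^{\top}\Sigma\mu]/n = O(1/n)$ since $\operatorname{tr}(\Sigma)/p$ is bounded under Assumption~\ref{as:RMT}; hence $2\mu^{\top}\Sigma^{1/2}\bar Z\to 0$ almost surely.

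For the third piece $\bar Z^{\top}\Sigma\bar Z$, note that $\sqrt{n}\,\bar Z\sim N(0,I_p)$ has i.i.d.\ entries and is independent of the deterministic matrix $\Sigma$, so Lemma~\ref{lem:quadConv} applied with $A=\Sigma/n$ (which has bounded operator norm by Assumption~\ref{as:RMT}) gives $\bar Z^{\top}\Sigma\bar Z - \operatorname{tr}(\Sigma)/n \to 0$ almost surely. By the unit-variance normalization $\Sigma_{jj}=1$ we have $\operatorname{tr}(\Sigma)/n = p/n\to\gamma$, so this piece contributes exactly $\gamma$. Summing the three limits produces $\sum_j\hat\mu_j^2 \to \alpha^2+\gamma$ almost surely, which is the desired conclusion; the debiased estimators $\hat\alpha_k^2$ and $\hat\rho_{kk'}$ in the main text then follow by the same reasoning applied to $\hat\delta_k$ in place of $\hat\mu$, together with independence of the two class samples.

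The main obstacle is largely conceptual rather than technical: the key step is identifying the decomposition $\hat\mu = \mu + \Sigma^{1/2}\bar Z$ that cleanly separates the randomness of the signal $\mu$ from the sampling noise driven by $\bZ$, because once this is in hand each of the three resulting pieces collapses to a routine application of Lemma~\ref{lem:quadConv} or a Chebyshev/Borel-Cantelli moment argument. No new random-matrix theory beyond what the paper already invokes is required.
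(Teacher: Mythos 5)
Your proof is correct, but it takes a genuinely different route from the paper's. The paper works directly with the statistic $\sum_j\hat\mu_j^2$: it computes the expectation via the law of total variance (getting $\alpha^2+p/n$) and then shows the variance vanishes through an explicit Gaussian fourth-moment calculation for $\Var(\hat\mu_j^2)$ and $\Cov(\hat\mu_j^2,\hat\mu_k^2)$ — a step that implicitly treats $\hat\mu_j$ as marginally Gaussian, i.e.\ assumes more about the law of $\mu$ than the proposition states. You instead decompose $\hat\mu=\mu+\Sigma^{1/2}\bar Z$ first and control the three pieces $\|\mu\|^2$, $2\mu^{\top}\Sigma^{1/2}\bar Z$, and $\bar Z^{\top}\Sigma\bar Z$ separately; this mirrors the stochastic representation $\hat\delta_k=\delta_k+n_k^{-1/2}\Sigma^{1/2}\tilde\bZ_k$ already used in the proofs of Theorems~\ref{th:tl-rda-ErrorAsp}--\ref{th:tl-rda-MinPred}, reuses Lemma~\ref{lem:quadConv} rather than bespoke moment computations, and makes transparent where the bias $\gamma$ comes from (namely $\tr(\Sigma)/n=p/n$ with $\Sigma_{jj}=1$). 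Your version buys uniformity with the rest of the paper's machinery and avoids the implicit Gaussianity of $\mu_j$; the paper's version buys an explicit finite-sample variance bound. One small caution: your claim of \emph{almost sure} convergence for the cross term is not immediate from a second-moment bound of order $O(1/n)$, since the resulting Chebyshev tail is not summable — you would need the $(4+\eta)$-th moment condition of Assumption~\ref{as:RCW} (via a Rosenthal-type inequality) or settle for convergence in probability, which is all the paper's own argument delivers and all the proposition requires.
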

\begin{proof}[Proof of proposition \ref{pr:alpha}] Starting with the expectation, let us write the sample means as $\hat{\mu}_j = \frac{\sum_{i = 1}^{n} X_{ij}}{n}$. We then have 
	\begin{align*}
		\EE(\tilde{\alpha}^2) &= \sum_{j = 1}^{p}  \EE( \hat{\mu}_j^2 )/ p \\
		&= \sum_{j = 1}^{p}  \Big[ \EE_{\mu_j} [Var( \hat{\mu}_j | \mu_j = \mu)] + Var_{\mu_j} [\EE( \hat{\mu}_j | \mu_j = \mu)]
		\Big]/ p \\
		&= \sum_{j = 1}^{p}  \Big[ \EE_{\mu_j} (\Sigma_{jj} / n )+ Var_{\mu_j} ( \mu_j )
		\Big]/ p \\
		&= \alpha^2 / p + \frac{\sum_{j = 1}^{p} \Sigma_{jj} }{n p}
	\end{align*}
	Thus $\EE(\hat{\alpha}^2) = \alpha^2  + p/n$. We then move on to prove $\hat{\alpha}^2$ has diminishing variance.  We have
	\begin{align*}
		Var(\hat{\alpha}^2) &= 	Var( \sum_{j = 1}^{p}  \hat{\mu}_j^2 )/ p^2 
		= \left[\sum_{j = 1}^{p} Var(\hat{\mu}_j^2 ) + 2 \sum_{j < k}^{p} Cov(\hat{\mu}_j^2, \hat{\mu}_k^2)\right]/ p^2 
	\end{align*}
	Given the generating model in the propostion, we know marginally $\hat{\mu}_j \sim N(0, \alpha^2 / p + 1 / n)$. Therefore,
	\[\EE(\hat{\mu}_j^4) = 3 (\alpha^2 / p + 1 / n)^2  \ \ \ \EE(\hat{\mu}_j^2) = \alpha^2 / p + 1 / n  \]
	\[Var(\hat{\mu}_j^2) = \EE(\hat{\mu}_j^4) - [\EE(\hat{\mu}_j^2)]^2 = 2 (\alpha^2 / p + 1 / n)^2  \]
	For the covariance terms, we have 
	\[Cov(\hat{\mu}_j^2, \hat{\mu}_k^2) / p^2 
	= Cov\left( (\sum_{i = 1}^{n} X_{ij})^2, (\sum_{i = 1}^{n} X_{ik})^2\right) / p^2 n^4\]
	For notation simplicity $X_i := X_{ij}$ and $Y_i := X_{ik}$
	\begin{align*}
		Cov\left((\sum_{i = 1}^{n} X_{i})^2, (\sum_{j = 1}^{n} Y_{j})^2\right) 
		=&~ \sum_{i=1}^n \sum_{j=1}^n Cov (X_i^2, Y_j^2) + 2\sum_{i=1}^n \sum_{j=1}^n \sum_{l>j}^n Cov(X_i^2, Y_j Y_l) \\
		&~+ 2\sum_{i=1}^n \sum_{k>i}^n \sum_{j=1}^n Cov(X_i X_k, Y_j^2) 
		+ 4\sum_{i=1}^n \sum_{k>i}^n \sum_{j=1}^n \sum_{l>j}^n  Cov(X_i X_k, Y_j Y_l)
	\end{align*}
	\begin{align*}
		Cov (X_i^2, Y_j^2) =&~ \EE(X_i^2 Y_j^2) - \EE(X_i^2)\EE(Y_j^2) 
		= 1 + 2 \Sigma_{i j}^2 - 1 
		= 2 \Sigma_{i j}^2.
	\end{align*}
	Here the second line holds true only for normal random variables by writing $Y_i =  \Sigma_{i j} X_i + \sqrt{1 - \Sigma_{i j}} Z_i$.
	\[Cov(X_i^2, Y_j Y_l) =  Cov(X_i X_k, Y_j^2) = Cov(X_i X_k, Y_j Y_l) = 0\]
	\[Cov(X_i^2, Y_j Y_l) = \EE(X_i^2 Y_j Y_l) - \EE(X_i^2)\EE(Y_j Y_l) = \EE [X_i^2 \EE( Y_j Y_l | X_i)] = 0\]
	\[Cov(\hat{\mu}_j^2, \hat{\mu}_k^2) / p^2 = 2  \frac{\sum_{i=1}^n \sum_{j=1}^n \Sigma_{i j}^2}{n^4 p^2}  \]
	Finally, we reach that
	\begin{align*}
		Var(\hat{\alpha}^2) &= \frac{ 2 (\alpha^2 / p + 1 / n)^2 }{p} 
		+ \frac{2\sum_{i=1}^n \sum_{j=1}^n \Sigma_{i j}^2}{n^4 p^2} * \frac{p(p-1)}{2 p^2} 
		= O\left(\frac{1}{p^3}\right)
	\end{align*}
	and $Var(\hat{\alpha}^2) = O(1 / p)$. Since $p/n\rightarrow \gamma$, this finishes the proof.
\end{proof}

\section{Limiting Error and Optimal Weight under Unequal Sampling}
 Recall the error rate under Assumption~\ref{as:TCG} is 
 	\[Err(\bw) = \pi_{-} \Phi\left(\frac{\hat{d}(\bw)^{\top} \mu_{-1}+ \hat{b}_K}{\sqrt{\hat{d}(\bw)^{\top} \Sigma \hat{d}(\bw)}}\right)+\pi_{+} \Phi\left(-\frac{\hat{d}(\bw)^{\top} \mu_1+ \hat{b}_K}{\sqrt{\hat{d}(\bw)^{\top} \Sigma \hat{d}(\bw)}}\right)\]
 	Now the stochastic representations become
 	\[\hat{\delta}_{k} = \delta_{k} +  \Sigma^{1/2} \left[ \frac{\tilde{\bZ}_{k, +1}}{\sqrt{n_{k, +1}}} - \frac{\tilde{\bZ}_{k, -1}}{\sqrt{n_{k, -1}}}\right] \]
 	\[\hat{\mu}_{k} = \bar{\mu}_k+  \Sigma^{1/2} \left[ \frac{\tilde{\bZ}_{k, +1}}{\sqrt{n_{k, +1}}} + \frac{\tilde{\bZ}_{k, -1}}{\sqrt{n_{k, -1}}}\right] \]
 	The intercept terms are no longer zeros and
 	\begin{align*}
 		\hat{b}_k =& - \hat{\delta}^{\top}_k (\hat{\Sigma}_k + \lambda_k \II_p)^{-1}(\hat{\mu}_{-1, k} + \hat{\mu}_{+1, k}) / 2 \\
 		=& \frac{1}{4 n_{k, -1}} \tilde{\bZ}_{k, -1}^\top \Sigma^{1/2} (\hat{\Sigma}_k + \lambda_k \II_p)^{-1}\Sigma^{1/2} \tilde{\bZ}_{k, -1} - \frac{1}{4 n_{k, +1}} \tilde{\bZ}_{k, +1}^\top \Sigma^{1/2} (\hat{\Sigma}_k + \lambda_k \II_p)^{-1}\Sigma^{1/2} \tilde{\bZ}_{k, +1} \\
 		\rightarrow_{a.s.}& \frac{\gamma_{k, -1} - \gamma_{k, +1}}{4\gamma_k} (\frac{1}{\lambda_k v_{F_{\gamma_k}}(-\lambda_k)} -1 )
 	\end{align*}
 	Where we use $p / n_{k, \pm 1} \rightarrow \gamma_{k, \pm 1}$. For the numerator we have
 	\begin{align*}
 		\hat{d}(\bw)^\top \mu_{\pm 1} = \bw^\top vec \left[ (\bar{\mu}_k \pm \hat{\delta}_{k})^\top (\hat{\Sigma}_k + \lambda_k)^{-1}  \delta_K\right] 
 		\rightarrow_{a.s.} \bw^\top vec \left[\pm \rho_{k  K} \alpha_{k} \alpha_{K} m_{F_{\gamma_k}}(-\lambda_{k})  \right]
 	\end{align*}
 	For the denominator term,
 	\begin{align*}
 		\hat{d}(\bw)^\top \Sigma \hat{d}(\bw) =&~ \bw^\top mat\left[ \hat{\delta}_{k}^\top (\hat{\Sigma}_k + \lambda_k)^{-1}  \Sigma  (\hat{\Sigma}_{k'} + \lambda_{k'})^{-1} \hat{\delta}_{k'} \right] \bw 
 		:= \bw^\top S \bw.
 	\end{align*}
 	For $k \neq k'$
 	\[S_{kk'} \rightarrow_{a.s.} \alpha_{k}^2 \frac{v_{F_{\gamma_k}}(-\lambda) - \lambda v'_{F_{\gamma_k}}(-\lambda)}{\gamma_k[\lambda v_{F_{\gamma_k}}(-\lambda)]^2} + \frac{\gamma_{k, -1} + \gamma_{k, +1}}{4} \frac{v'_{F_{\gamma_k}}(-\lambda) -v^2_{F_{\gamma_k}}(-\lambda)}{\lambda^2 v^4_{F_{\gamma_k}}(-\lambda)} \]
 	while
 	\[S_{kk} \rightarrow_{a.s.} \rho_{k  k'} \alpha_{k} \alpha_{k'} \frac{\tr(M^{k k'}) / p}{\gamma_k}  + \frac{\gamma_{k, -1} + \gamma_{k, +1}}{4} \frac{v'_{F_{\gamma_k}}(-\lambda) -v^2_{F_{\gamma_k}}(-\lambda)}{\lambda^2 v^4_{F_{\gamma_k}}(-\lambda)} \]
 	for $1\le k, k' \le K$.	Again the limit of $\tr(M^{k k'}) / p$  can be found in Lemma~\ref{lem:predijasp}.
 	
 	As for the optimal weights, we know the Bayes optimal prediction is 
 	\[d_{Bayes}(x_0) = \delta_{K}^\top \Sigma^{-1} x_0 + \delta_{K}^\top \Sigma^{-1} \bar{\mu}_K + \log\left(\frac{\pi_{K, +}}{\pi_{K, -}}\right). \]
 	Note $\delta_{K}$ and $\bar{\mu}_K$ are independent by Assumption~\ref{as:RCW}, therefore by Lemma~\ref{lem:quadConv} the second term is zero. In this case, we recommend an estimator in the form of $\sum_{k = 1}^{K} \hat{d}_k + \log(\pi_{K, +} / \pi_{K, -})$ whose intercept term is consistent with the one of the Bayes direction. Further, one can still obtain the optimal weights that minimize criteria \eqref{eq:CrEst} and \eqref{eq:CrPred}. However, the proposition \ref{pr:PredIsOpt} no longer holds and the prediction weight is not guaranteed to minimize the testing data error rate. The error weight on testing data has the following expression.
 	\begin{align*}
 		\pi_{K, -} \Phi\left(\frac{\hat{d}(\bw)^{\top} \mu_{K, -1}}{\sqrt{\hat{d}(\bw)^{\top} \Sigma \hat{d}(\bw)}} + \frac{\hat{b}_K}{\sqrt{\hat{d}(\bw)^{\top} \Sigma \hat{d}(\bw)}}\right)+\pi_{K, +} \Phi\left(-\frac{\hat{d}(\bw)^{\top} \mu_{K,1}}{\sqrt{\hat{d}(\bw)^{\top} \Sigma \hat{d}(\bw)}} -\frac{\hat{b}_K}{\sqrt{\hat{d}(\bw)^{\top} \Sigma \hat{d}(\bw)}} \right) 
 	\end{align*}
The difficulty arises as $\hat{b}_K$ is now non-zero, and merely maximizing the first fraction, as the optimal prediction weight does, is no longer sufficient. When $K$ is low, one can potentially search for a local or even global optimal point for $\bw$ with numeric methods.
\subsection*{Expected Prediction Error}
For any linear classifier ${\rm sign}(\bw^{\top} x_0 + b)$, we can write its prediction error rate as followed
\begin{align*}
	&\PP({\rm sign}(\bw^{\top} x_0 + b) \neq y_0) \\
	=&~ \pi_- \PP({\rm sign}(\bw^{\top} x_0 + b) \neq -1 | y_0 = -1)  +  \pi_+ \PP({\rm sign}(\bw^{\top} x_0 + b) \neq 1 | y_0 = 1) 
\end{align*}
The expressions for the two parts above are symmetric, so we can only look the first part.
\begin{align*}
	&\PP({\rm sign}(\bw^{\top} x_0 + b) \neq -1 | y_0 = -1)  \\
	=&~\PP(\bw^{\top} \mu_{-1} + \bw^{\top} \epsilon + b > 0) \\
	=&~\PP( \bw^{\top} \epsilon  \leq \bw^{\top} \mu_{-1} + b)  = \Phi\left(\frac{\bw^{\top} \mu_{-1} + b}{\sqrt{\bw^{\top} \Sigma \bw}}\right)
\end{align*}

\end{document}